\newif\ifdraft \drafttrue
\declaretheoremstyle[bodyfont=\normalfont]{normalbody}
\newtheorem{theorem}{Theorem}[section]
\newtheorem{lemma}[theorem]{Lemma}
\newtheorem{corollary}[theorem]{Corollary}
\newtheorem{proposition}[theorem]{Proposition}
\declaretheorem[style=normalbody]{remark}
\newtheorem{definition}[theorem]{Definition}
\newtheorem{conjecture}[theorem]{Conjecture}
\newcommand{\tr}{\mathrm{Tr}\,}
\newcommand{\Proj}{P}
\newcommand{\Fisher}{\mathcal{I}}
\newcommand{\goto}{\rightarrow}
\DeclareMathOperator{\sign}{sgn}
\newcommand{\djs}[1]{\ifdraft \textcolor{blue}{[Jinshuo: #1]}\fi}
\title{A Central Limit Theorem for Differentially Private Query Answering}
\author{
Jinshuo Dong\thanks{Northwestern University and the Institute for Data, Econometrics, Algorithms, and Learning (IDEAL).}
\and Weijie J.~Su\thanks{Department of Statistics, University of Pennsylvania.}
\and Linjun Zhang\thanks{Department of Statistics, Rutgers University.}}
\icmltitlerunning{A Central Limit Theorem for Differentially Private Query Answering}
\begin{document}

\ifdraft
\maketitle
\newcommand{\citet}{\cite}
\newcommand{\citep}{\cite}
\else
\twocolumn[
\icmltitle{A Central Limit Theorem for Differentially Private Query Answering}



\icmlsetsymbol{equal}{*}

\begin{icmlauthorlist}
\icmlauthor{Jinshuo Dong}{equal,NU}
\icmlauthor{Linjun Zhang}{equal,RU}
\icmlauthor{Weijie Su}{wharton}
\end{icmlauthorlist}

\icmlaffiliation{NU}{Northwestern University}
\icmlaffiliation{RU}{Rutgers University}
\icmlaffiliation{wharton}{Statistics Department of Wharton School, University of Pennsylvania}


\icmlkeywords{Differential privacy, Cramer--Rao bound, central limit theorem, uncertainty principle, log-concave distribution}

\vskip 0.3in
]



\printAffiliationsAndNotice{\icmlEqualContribution} 
\fi

\begin{abstract}
Perhaps the single most important use case for differential privacy is to privately answer numerical queries, which is usually achieved by adding noise to the answer vector. The central question, therefore, is to understand which noise distribution optimizes the privacy-accuracy trade-off, especially when the dimension of the answer vector is high. Accordingly, extensive literature has been dedicated to the question and the upper and lower bounds have been matched up to constant factors \citep{bun2018fingerprinting,SteinkeUl17}. In this paper, we take a novel approach to address this important optimality question. We first demonstrate an intriguing central limit theorem phenomenon in the high-dimensional regime. More precisely, we prove that a mechanism is approximately \emph{Gaussian Differentially Private} \citep{dong2019gaussian} if the added noise satisfies certain conditions. In particular, densities proportional to $\e^{-\|x\|_p^\alpha}$, where $\|x\|_p$ is the standard
$\ell_p$-norm, satisfies the conditions. Taking this perspective, we make use of the Cramer--Rao inequality and show an ``uncertainty principle''-style result: the product of the privacy parameter and the $\ell_2$-loss of the mechanism is lower bounded by the dimension. Furthermore, the Gaussian mechanism achieves the constant-sharp optimal privacy-accuracy trade-off among all such noises. Our findings are corroborated by numerical experiments.
\end{abstract}

%

\newcommand{\err}{\mathrm{err}}
\section{Introduction} 
\label{sec:introduction}


Introduced in \citet{DMNS06}, \textit{differential privacy} (DP) is perhaps the most popular privacy definition. In addition to a rich academic
literature, differential privacy is now being deployed on a large scale by 
Apple \cite{apple},
Google \cite{rappor,bittau2017prochlo},
Uber \cite{JohnsonNeSo18},
LinkedIn \cite{rogers2020linkedin,expcomposition}
and the US Census Bureau \cite{DajaniLaSiKiReMaGaDaGrKaKiLeScSeViAb17}.
One of the most important and successful applications of DP is to answer numeric queries. Given a function $f$ of interest, which is also termed a query, our goal is to evaluate this (potentially vector-valued) query $f$ on the sensitive data. To preserve privacy, a DP mechanism $M$ working on a dataset $D$, in its simplest form, is defined as
\begin{equation}\label{eq:noise}
	M(D)=f(D)+tX.
\end{equation}
Above, $X$ denotes the noise term and $t$ is a scalar, which together are selected depending on the properties of the query $f$ and the desired privacy level. Among these, perhaps the most popular examples are the Laplace mechanism and the Gaussian mechanism where the noise $X$ follows the Laplace distribution and the Gaussian distribution, respectively.

Aside from privacy considerations, the most important criterion of an algorithm is arguably the estimation accuracy in the face of choosing, for example, between the Laplace mechanism or its Gaussian counterpart for a given problem. To be concrete, consider a real-valued query $f$ with sensitivity 1---that is, $\Delta f=\sup_{D,D'}|f(D)-f(D')|=1$, where the supremum is over all neighboring datasets $D$ and $D'$. Assuming $(\ep, 0)$-DP for the mechanism $M$, we are interested in minimizing its $\ell_2$ loss defined as
$$\err(M):=\E(M(D)-f(D))^2 = \E(tX)^2 = t^2\E X^2.$$
This question is commonly\footnote{If $f$ is integer-valued, then the doubly geometric distribution is a better choice and yields an $\ell_2$-loss of $\frac{1}{2\sinh^2\frac{\ep}{2}}<\frac{2}{\ep^2}$. In the so-called high privacy regime, i.e., $\ep\to0$, the two $\ell_2$-losses have the same order in the sense that their ratio goes to 1.} addressed by setting $X$ as a standard Laplace random variable and $t =\ep^{-1}$~\citep{DMNS06}. This gives $\err(M) = \frac{2}{\ep^2}$. Moving forward, we \textit{relax} the privacy constraint from $(\ep, 0)$-DP to $(\ep,\delta)$-DP for some small $\delta$. The canonical way, which was born together with the notion of $(\ep,\delta)$-DP, is to add Gaussian noise \citep{approxdp}. A well-known result demonstrates that Gaussian mechanism with $X$ being the standard normal and $t=\frac{1}{\ep}\sqrt{2\log(1.25\delta^{-1})}$ is $(\ep,\delta)$-DP~(see, e.g., \citet{DworkRo14}). The $\ell_2$-loss is $\err(M) = t^2=\frac{2\log(1.25\delta^{-1})}{\ep^2}$. 

A quick comparison between the two errors reveals a surprising message. The latter error $\frac{2\log(1.25\delta^{-1})}{\ep^2}$ is larger than the former $\frac{2}{\ep^2}$. In fact, the extra factor $\log(1.25\delta^{-1})$ is already greater than 10 when $\delta=10^{-5}$. At least on the surface, this observation is not consistent with the fact that $(\ep,\delta)$-DP is a relaxation of $(\ep,0)$-DP. Put differently, the Gaussian mechanism gives us a looser privacy guarantee while degrading\footnotemark~ the $\ell_2$ estimation accuracy. Instead of refuting the notion of $(\ep,\delta)$-DP, nevertheless, this consistency suggests that we need a better mechanism than the Gaussian mechanism, at least for \textit{one-dimensional} query-answering. Indeed, the truncated Laplace mechanism has been proposed as a better alternative to achieve $(\ep,\delta)$-DP \citep{geng2020tight}, which outperforms the Laplace mechanism in terms of estimation accuracy.  

\footnotetext{One may blame the sub-optimality of the choice of $t$, but the problem remains even if the smallest possible $t$ from \citet{balle2018improving} is applied. See \Cref{sec:preliminary} and the appendix for further details.}

\begin{figure*}[t]
    \centering
    \includegraphics[width=\textwidth]{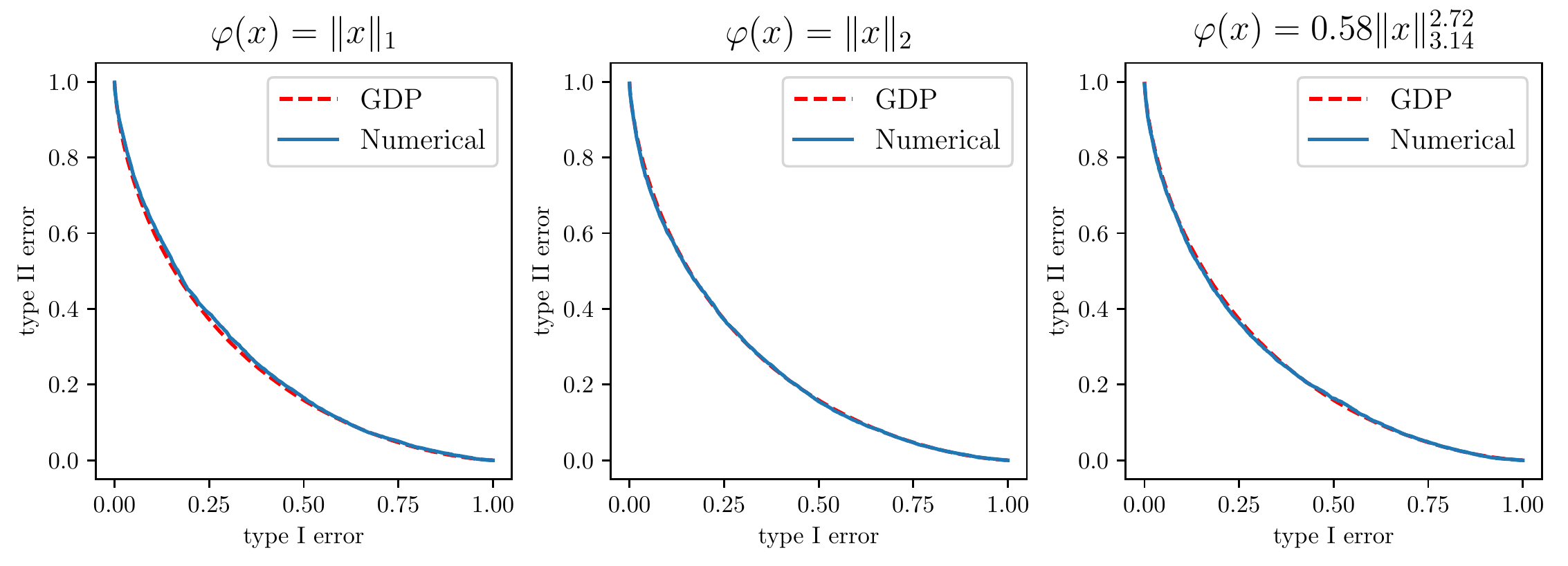}
    \vskip-5pt
    \caption{Fast convergence to Gaussian Differential Privacy (GDP) as claimed in \Cref{thm:informal}. Blue solid curves indicate the true privacy (i.e., ROC functions, see \Cref{sec:preliminary} for details) of the noise-addition mechanism considered in \Cref{thm:informal}. Red dashed curves are GDP limit predicted by our CLT. In all three panels the dimension $n=30$. Numerical details can be found in the appendix.}
    \label{fig:N}
\end{figure*}


Motivated by these facts concerning the Laplace, Gaussian, and truncated Laplace mechanisms, one cannot help asking:
\begin{enumerate}[label=(Q\arabic*)]
    \item\label{q1} Are there any insights behind the design of mechanisms in the one-dimensional setting that better utilizes the privacy budget? In particular, why was the truncated Laplace mechanism not considered in the first place?
	\item\label{q3}   More importantly, do the insights gained from \ref{q1} extend to private query-answering in high dimensions?
\end{enumerate}
In this paper, we tackle these fundamental questions, beginning with explaining \ref{q1} in Section~\ref{sec:preliminary} from the decision-theoretic perspective of DP~\citep{wasserman_zhou,KOV,dong2019gaussian}. However, our main focus is \ref{q3}. In addressing this question, we uncover a seemingly surprising phenomenon --- it is impossible to utilize the $(\ep,\delta)$ privacy budget in high-dimensional problems the same way as the truncated Laplace mechanism utilizes it in the one-dimensional problem. More specifically, we show a central limit behavior of the noise-addition mechanism in high dimensions, which, roughly speaking, says that for general noise distributions, the corresponding mechanisms \textit{all} behave like a Gaussian mechanism. The formal language of ``a mechanism behaves like the Gaussian mechanism'' was set up in \citet{dong2019gaussian}, where a notion called \textit{Gaussian Differential Privacy} (GDP) was proposed. Roughly speaking, a mechanism is $\mu$-GDP if it offers as much privacy as adding $N(0,\mu^{-2})$ noise to a sensitivity-1 query. As in the $(\ep,\delta)$-DP case, the smaller $\mu$ is, the stronger privacy is offered.


To state our first main contribution, let $f$ be an $n$-dimensional query and assume that its $\ell_2$-sensitivity is 1. Consider the noise-addition mechanism $M(D)=f(D)+tX$ where $X$ has a log-concave density $\propto \e^{-\varphi(x)}$ on $\R^n$. Let $\Fisher_X:=\E[\nabla\varphi(X)\nabla\varphi(X)^T]$ be the $n\times n$ Fisher information matrix and $\|\Fisher_X\|_2$ be its operator norm.
\begin{theorem}[Central Limit Theorem (Informal version of Theorem~\ref{thm:main})] \label{thm:informal}
Under certain conditions on $\varphi$, for $t={\mu}^{-1}\cdot \sqrt{\|\Fisher_X\|_2}$, the corresponding noise-addition mechanism $M$ defined in Eq.\eqref{eq:noise} is asymptotically $\mu$-GDP as the dimension $n\to\infty$ except for an $o(1)$ fraction of directions of $f(D)-f(D')$.
\end{theorem}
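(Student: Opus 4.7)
The plan is to analyze the privacy loss (log-likelihood ratio) $L(X) := \varphi(X - v/t) - \varphi(X)$, where $v := f(D) - f(D')$ with $\|v\|_2 \le 1$, and show that as $n \to \infty$ its distribution converges to $N(\mu^2/2, \mu^2)$, which is the privacy loss distribution of the Gaussian mechanism at the $\mu$-GDP level. By the standard correspondence between trade-off functions and the distribution of the privacy loss (as developed in the GDP paper), this convergence implies that $M$ is asymptotically $\mu$-GDP.

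\textbf{Step 1 (Taylor expansion).} Expand
$$L(X) = -\frac{1}{t}\langle \nabla\varphi(X),\, v\rangle + \frac{1}{2t^2}\, v^\top \nabla^2\varphi(X)\, v + R_3,$$
where $R_3$ collects a third-order remainder involving $\nabla^3\varphi$. I would show $R_3 = o_P(1)$ for most directions $v$ by exploiting that the per-coordinate perturbation $v/t$ has magnitude $\|v\|_\infty/t = O(n^{-1/2})$ for $v$ that are spread over coordinates.

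\textbf{Step 2 (moment calculation).} Use the integration-by-parts identities for log-concave densities, $\mathbb{E}[\nabla\varphi(X)] = 0$ and $\mathbb{E}[\nabla^2\varphi(X)] = \mathcal{I}_X$, together with concentration of the quadratic form $v^\top \nabla^2\varphi(X) v$ around its mean. With $t = \mu^{-1}\sqrt{\|\mathcal{I}_X\|_2}$, the quadratic term converges to $\frac{\mu^2}{2\|\mathcal{I}_X\|_2}\, v^\top \mathcal{I}_X v$. For directions $v$ with $v^\top \mathcal{I}_X v \approx \|\mathcal{I}_X\|_2\, \|v\|_2^2$ (a full-measure event when $\mathcal{I}_X$ is isotropic, as is the case for $\ell_p$-type densities by symmetry), this is $\mu^2/2$; the linear term has variance $\frac{1}{t^2} v^\top \mathcal{I}_X v \to \mu^2$.

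\textbf{Step 3 (CLT for the linear term).} The core technical step is proving that the linear term $T_n := -\frac{1}{t}\langle \nabla\varphi(X), v\rangle$ is asymptotically $N(0,\mu^2)$. When $\varphi$ is separable --- $\varphi(x) = \sum_i \varphi_i(x_i)$, which covers the $\mathrm{e}^{-\|x\|_p^\alpha}$ examples after suitable reduction --- the coordinates $\varphi_i'(X_i)$ are independent with mean zero and finite variance, and $T_n$ is a weighted sum amenable to a Lyapunov or Lindeberg CLT whenever $\|v\|_\infty/\|v\|_2 = o(1)$. For non-separable log-concave $X$ one would instead appeal to a Klartag-type CLT for linear functionals of log-concave measures.

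\textbf{Main obstacle.} The difficulty is making Step 3 uniform over directions $v$ and simultaneously controlling $R_3$ under the \emph{same} conditions on $v$. A direction is "good" if (i) $v^\top \mathcal{I}_X v$ is close to $\|\mathcal{I}_X\|_2 \|v\|_2^2$, so the limiting moments match those of a Gaussian mechanism, and (ii) $v$ is spread enough (small $\|v\|_\infty$) to justify the CLT and kill the third-order remainder. The "$o(1)$ fraction of directions" caveat in the statement accounts for atypical $v$ concentrated on a few coordinates or aligned with minor eigendirections of $\mathcal{I}_X$, for which either the moment match or the remainder control fails; verifying that the set of such bad directions shrinks to measure zero with $n$ is the most delicate bookkeeping in the proof.
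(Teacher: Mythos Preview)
Your overall strategy---Taylor-expand the log-likelihood ratio, show the second-order contribution concentrates at $\mu^2/2$, and prove a CLT for the linear term $\langle\nabla\varphi(X),v\rangle$ for most directions $v$---is exactly the paper's. Conditions (D1) and (D2) in the formal theorem are precisely the abstractions of your Steps~1--2 and the hypothesis enabling Step~3, and the trade-off function is recovered from the CDF of the likelihood projection just as you describe.

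There are, however, two concrete gaps in the tools you propose.

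\textbf{(a) The CLT in Step 3.} You invoke a Klartag-type CLT for linear functionals of log-concave vectors in the non-separable case. But the vector you need a CLT for is $Y=\nabla\varphi(X)$, not $X$; there is no reason $Y$ inherits log-concavity from $X$, so Klartag's theorem does not apply. The paper instead uses \emph{Sudakov's} theorem, which requires only that $Y$ be isotropic and satisfy a thin-shell estimate $\|Y\|_2/\sqrt{n}\to 1$ in probability. This is exactly condition (D2), imposed as a hypothesis rather than derived from log-concavity.

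\textbf{(b) Separability of norm powers.} The density $\propto\mathrm{e}^{-\|x\|_p^\alpha}$ is separable only when $\alpha=p$; for $\alpha\ne p$ the coordinates of $X$ are genuinely dependent and Lindeberg/Lyapunov does not apply directly. The paper's verification for this family reduces to the separable $\alpha=p$ case through the radial decomposition $X\stackrel{d}{=}t^{1/\alpha}\,r\,U$ with $t$ Gamma-distributed and $U$ uniform on the $\ell_p$ sphere---non-trivial bookkeeping that your ``suitable reduction'' hides.

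A smaller omission: to pass from the sensitivity constraint $\|v\|_2\le 1$ to the unit sphere $\|v\|_2=1$, the paper uses a monotonicity lemma showing that for log-concave $X$ the trade-off $T[X,X+rv]$ is pointwise decreasing in $r\ge 0$. Without it, the infimum over neighboring datasets could in principle be attained at $r<1$, and you would need your CLT uniformly in $r\in[0,1]$ rather than just at the boundary.
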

In particular, the norm power functions $\varphi(x)=\|x\|_p^\alpha$ $(p, \alpha\ge 1)$ satisfy these technical conditions. In fact, the convergence is very fast in this case. See \Cref{fig:N} for the numerical results.
We then elaborate on the condition ``$o(1)$ fraction of $f(D)-f(D')$''. Following the original definition, DP or GDP is a condition that needs to hold for arbitrary neighboring datasets $D$ and $D'$. This worst case perspective is exactly what prevents us to observe the central limit behavior. For example, consider a certain pair of datasets with $f(D)=(0,0,\ldots,0)$ and $f(D')=(1,0,\ldots,0)$, then privacy is completely determined by the first marginal distribution of $X$, and the dimension $n$ plays no role here. The ``$o(1)$ fraction of $f(D)-f(D')$'' rules out the essentially low-dimension cases and reveals the truly high-dimensional behavior.


In summary, \Cref{thm:informal} suggests that when the dimension is high, a large class of noise-addition mechanisms behave like the Gaussian mechanism, and hence are doomed to a poor use of the given $(\ep,\delta)$ privacy budget, in the same fashion as we have seen in the one-dimensional example.

However, admitting the central limit phenomenon, our second theorem turns the table and characterizes the optimal privacy-accuracy trade-off and justifies the Gaussian mechanism. To see this, recall that the noise-addition mechanism defined in \Cref{eq:noise} is determined by the pair $(t,X)$. Both privacy and accuracy are jointly determined by $t$ and $X$. Adopting the central limit theorem \ref{thm:informal}, it is convenient to take an equivalent parametrization, which is $(\mu,X)$, where $\mu$ is the desired (asymptotic) GDP parameter. Given $X$, the two parametrizations are related by $t={\mu}^{-1}\cdot \sqrt{\|\Fisher_X\|_2}$. Using parameters $(\mu,X)$, the corresponding mechanism $M_{\mu,X}$ is given by
\[
	M_{\mu,X}(D)=f(D)+{\mu}^{-1}\cdot \sqrt{\|\Fisher_X\|_2}\cdot X.
\]
Note that this definition implicitly assumes that the Fisher information of $X$ is finite.

By \Cref{thm:informal}, it is asymptotically $\mu$-GDP. The following theorem states in an ``uncertainty principle'' fashion that the privacy parameter and the error cannot be small at the same time.
\begin{theorem}\label{thm:l2}
	For all noise-addition mechanisms $M_{\mu,X}$ defined as above, we have
	$$\mu^2\cdot \err(M_{\mu,X})\geqslant n.$$
	The equality holds if $X$ is $n$-dimensional standard Gaussian.
\end{theorem}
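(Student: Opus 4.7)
The plan is to rewrite the claim as a dimension-free Fisher-information inequality and then prove it via an integration-by-parts identity combined with two applications of Cauchy--Schwarz. Since $M_{\mu,X}(D)-f(D) = \mu^{-1}\sqrt{\|\Fisher_X\|_2}\cdot X$, the $\ell_2$-loss equals $\err(M_{\mu,X}) = \mu^{-2}\,\|\Fisher_X\|_2\,\E\|X\|^2$, so the target $\mu^2\,\err(M_{\mu,X})\ge n$ becomes the purely analytic statement
\[
\|\Fisher_X\|_2 \cdot \E\|X\|^2 \;\ge\; n.
\]
This already removes $\mu$ from the picture and reduces the theorem to a property of $X$ alone.

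The first substantive step is the identity $\E[\langle X,\nabla\varphi(X)\rangle] = n$, where $\varphi = -\log p$ for $p$ the density of $X$. Since $\partial_i p = -(\partial_i\varphi)\,p$, integration by parts in the $i$th coordinate gives $\E[X_i\,\partial_i\varphi(X)] = 1$ for each $i$, and summing over $i$ yields the identity. I then apply Cauchy--Schwarz twice:
\[
n \;=\; \E\!\left[\langle X,\nabla\varphi(X)\rangle\right] \;\le\; \E\!\left[\|X\|\cdot\|\nabla\varphi(X)\|\right] \;\le\; \sqrt{\E\|X\|^2\cdot\E\|\nabla\varphi(X)\|^2},
\]
and recognize $\E\|\nabla\varphi(X)\|^2 = \tr(\Fisher_X)$. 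Squaring gives $n^2 \le \tr(\Fisher_X)\cdot\E\|X\|^2$, after which the elementary bound $\tr(\Fisher_X) \le n\,\|\Fisher_X\|_2$ (sum of eigenvalues of a PSD matrix is at most $n$ times its maximum) delivers the desired inequality. The equality case is immediate: for $X\sim N(0,I_n)$ one has $\varphi(x) = \tfrac12\|x\|^2 + \text{const}$, so $\nabla\varphi(x) = x$, $\Fisher_X = I_n$, $\|\Fisher_X\|_2 = 1$ and $\E\|X\|^2 = n$; both Cauchy--Schwarz steps are tight (because $X$ and $\nabla\varphi(X)$ are collinear a.s. and all eigenvalues of $\Fisher_X$ coincide), yielding product exactly $n$.

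The main obstacle is analytic rather than conceptual: justifying the coordinate-wise integration by parts in the generality of log-concave $p$ with finite Fisher information. The boundary term $[x_i p(x)]_{x_i = -\infty}^{+\infty}$ vanishes automatically because log-concave densities have exponential tails, but when $\varphi$ fails to be $C^1$ (for instance $\varphi(x) = \|x\|_1$) one must interpret $\nabla\varphi$ in a weak sense. A standard workaround is to mollify $p$ by a small Gaussian kernel $N(0,\sigma^2 I)$, establish the inequality for the smooth surrogate, and pass to the limit $\sigma\to 0$ using lower semicontinuity of Fisher information under convolution and monotone convergence of the second moment; since we only need an inequality (not an identity), this limiting step is harmless. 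Apart from this regularity issue, the argument is essentially a one-dimensional Cramer--Rao inequality applied to the pairing $\langle X,\nabla\varphi(X)\rangle$, combined with the trivial inequality $\tr \le n\cdot\|\cdot\|_2$ relating the trace and operator norms of a PSD matrix.
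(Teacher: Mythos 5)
Your proposal is correct, and it takes a genuinely different route to the same intermediate inequality $\tr(\Fisher_X)\cdot\E\|X\|_2^2\ge n^2$. The paper invokes the multivariate Cram\'er--Rao bound as a black box (namely $\mathrm{Cov}(X)\succeq\Fisher_X^{-1}$), takes traces to get $\E\|X\|_2^2\ge\sum_i\lambda_i^{-1}$, and then applies Cauchy--Schwarz to the eigenvalue sequences $(\lambda_i)$ and $(\lambda_i^{-1})$ to reach $n^2$. You instead start from the integration-by-parts (Stein) identity $\E\langle X,\nabla\varphi(X)\rangle=n$ and apply Cauchy--Schwarz twice directly to the random vectors $X$ and $\nabla\varphi(X)$, recognizing $\E\|\nabla\varphi(X)\|_2^2=\tr\Fisher_X$. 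After that, both arguments finish identically with $\tr\Fisher_X\le n\|\Fisher_X\|_2$. Your route is more self-contained --- the Stein identity is precisely the engine inside the usual proof of Cram\'er--Rao, so you are inlining the lemma rather than citing it --- and it makes the equality analysis for the Gaussian case more transparent, since both Cauchy--Schwarz steps are visibly tight when $\nabla\varphi(X)=X$. What the paper's phrasing buys is the explicit connection to Cram\'er--Rao, which the authors emphasize conceptually and reuse as the ``uncertainty-principle'' framing; it also avoids the regularity caveats about integration by parts for non-smooth log-concave densities, which you correctly flag and handle via mollification. Both proofs are complete and correct; yours is arguably the more elementary and informative one.
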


Combining \Cref{thm:informal,thm:l2}, among all the noise that satisfies the conditions of \Cref{thm:informal}, Gaussian yields the constant-sharp optimal privacy-accuracy trade-off. As far as we know, this is the first result characterizing optimality with the sharp constant when the dimension is high.

The privacy conclusion of \Cref{thm:informal} does not work for every pair of neighboring datasets, so it is worth noting that we do NOT intend to suggest this as a valid privacy guarantee. Instead, we present it as an interesting phenomenon that has been largely overlooked in the literature. Furthermore, this central limit theorem admits an elegant characterization of privacy-accuracy trade-off that is sharp in constant. From a theoretical point of view, the proof of \Cref{thm:informal}, as we shall see in later sections, involves \textit{non-linear} functionals of high dimensional distributions. This type of results are, to the best of our knowledge, quite underexplored compared to linear functionals, so our results may serve as an additional motivation to study this type of questions.
\paragraph{Related work} 
\label{par:paragraph_name}

In the search of the optimal query-answering algorithm, the first step is to delimit the possible queries and the permissible algorithms. Specifically, let $\mathcal{F}$ be the set of possible queries and $\mathcal{M}$ be the set of permissible (differentially private) algorithms. A general mechanism $M\in\mathcal{M}$ maps the given query $f$ and dataset $D$ to an answer vector. Its incurred mean-squared error is
$$\err_{\mathcal{F}}(M):=\sup_{f\in \mathcal{F}}\sup_D\E \|M(f,D)-f(D)\|_2^2.$$
Note that this notion is consistent with the error previously defined for noise-addition mechanisms.

We look for an (approximate) error-minimizing mechanism in $\mathcal{M}$, that is, a mechanism $M_*\in\mathcal{M}$ such that
$$\err_{\mathcal{F}}(M_*)\approx \inf_{M\in \mathcal{M}} \err_{\mathcal{F}}(M) .$$
We expect different answers for different classes of queries  $\mathcal{F}$ and mechanisms $\mathcal{M}$. In fact, we have the following table. The references column is not intended as a complete list of relevant works.

\newcommand{\litrow}[3]{ #1 & #2 & #3 \\ \hline}
\newcommand{\specialcell}[2][c]{%
  \begin{tabular}[#1]{@{}c@{}}#2\end{tabular}}
\begin{table*}[!t]
	\label{tab:literature}
	\centering
	\begin{tabular}{|c|c|c|}
	\hline
	\litrow{References}{$\mathcal{F}$}{$\mathcal{M}$}
	\litrow{\cite{bun2018fingerprinting,SteinkeUl17}}{\{linear queries\}}{\{any DP algorithms\}}
	\litrow{\cite{nikolov2016geometry,edmonds2020power}}{\specialcell[t]{\{linear queries with\\bounded factorization norm\}}}{\{any DP algorithms\}}
	\litrow{\cite{chaudhuri2011differentially,bassily2014private}}{\{optimization  queries\}}{\{any DP algorithms\}}
	\litrow{\cite{cai2019cost,cai2020cost}}{\{regression queries\}}{\{any DP algorithms\}}
	\litrow{\cite{geng2014optimal,geng2020tight}}{\{$\R$ or $\R^2$ valued $f$\}}{\{any DP algorithms\}}
	\litrow{This work}{\{any $f$\}}{\{DP noise-addition algorithms\}}
	\end{tabular}
	\caption{All $\mathcal{F}$ are implicitly assumed to have bounded sensitivity.}
\end{table*}


The main difference between the current work and the majority of existing literature is that, informally, we pick a large $\mathcal{F}$ and a small $\mathcal{M}$, while others study small $\mathcal{F}$ and large $\mathcal{M}$. The major advantage of this choice is that it admits a constant-sharp lower bound for high-dimensional problems. On the other hand, most existing results only characterize optimality up to a constant factor, while the constant factor is crucial to bring differential privacy into practice. The only exceptions in the table are \cite{geng2014optimal,geng2020tight}, with the limit of the query being one or two dimensional. In addition, albeit the great success of these lower bound with $\mathcal{M}=$\{any DP algorithms\}, the technique is often highly involved and raises the bar of further research. By picking a large $\mathcal{F}$ and a small $\mathcal{M}$, we explore in a new direction that potentially bypass these difficulties.

In the end, we remark that for linear queries, shrinking $\mathcal{M}$ from arbitrary DP algorithms to noise-addition algorithms at worst blows up the privacy parameters by a factor independent of the dimension \cite{bhaskara2012unconditional}.


\section{GDP and the ROC Functions}
\label{sec:preliminary}
The decision theoretic interpretation of DP was first proposed in \citet{wasserman_zhou} and then extended by \citet{KOV}. More recently, \citet{dong2019gaussian} systematically studied this perspective and developed various tools. In this section we take this perspective and introduce the basics of \citet{dong2019gaussian}. This will allow us to give an intuitive answer to \ref{q1}.

Suppose each individual's sensitive	information is an element in the abstract set $\mathcal{X}$. A dataset $D$ of $k$ people is then an element in $\mathcal{X}^k$. 
Let a randomized algorithm $M$ take a dataset as input and let $D$ and $D'$ be two neighboring datasets, i.e., they differ by one individual. Differential privacy seeks to limit the power of an adversary identifying the presence of an arbitrary individual in the dataset. That is, with the output as the observation, telling apart $D$ and $D'$ must be hard for the adversary. Decision theoretically, the quality of such an identification attack is measured by the errors it makes. The more error it is forced to make, the more privcacy $M$ provides.

To breach the privacy, the adversary performs the following hypothesis testing attack:
\[
    H_0: \text{output} \, \sim \underbrace{M(D)}_P
    \;\; \text{vs.} \;\;
    H_1: \text{output} \, \sim \underbrace{M(D')}_Q.
\vspace{-1em}
\]
By the random nature of $M$, $M(D)$ and $M(D')$ are two distributions. We emphasize this point by denoting them by $P$ and $Q$. The errors mentioned above are simply the probabilities confusing $D$ and $D'$, and are called type I and type II errors, respectively\footnote{For this specific testing problem, mistaking $D$ as $D'$ corresponds to type I and mistaking $D'$ as $D$ corresponds to type II. In general there is no need to worry about which is which, as neighboring relation is often symmetric.}.

\begin{figure*}[!t]
    \includegraphics[width=\textwidth]{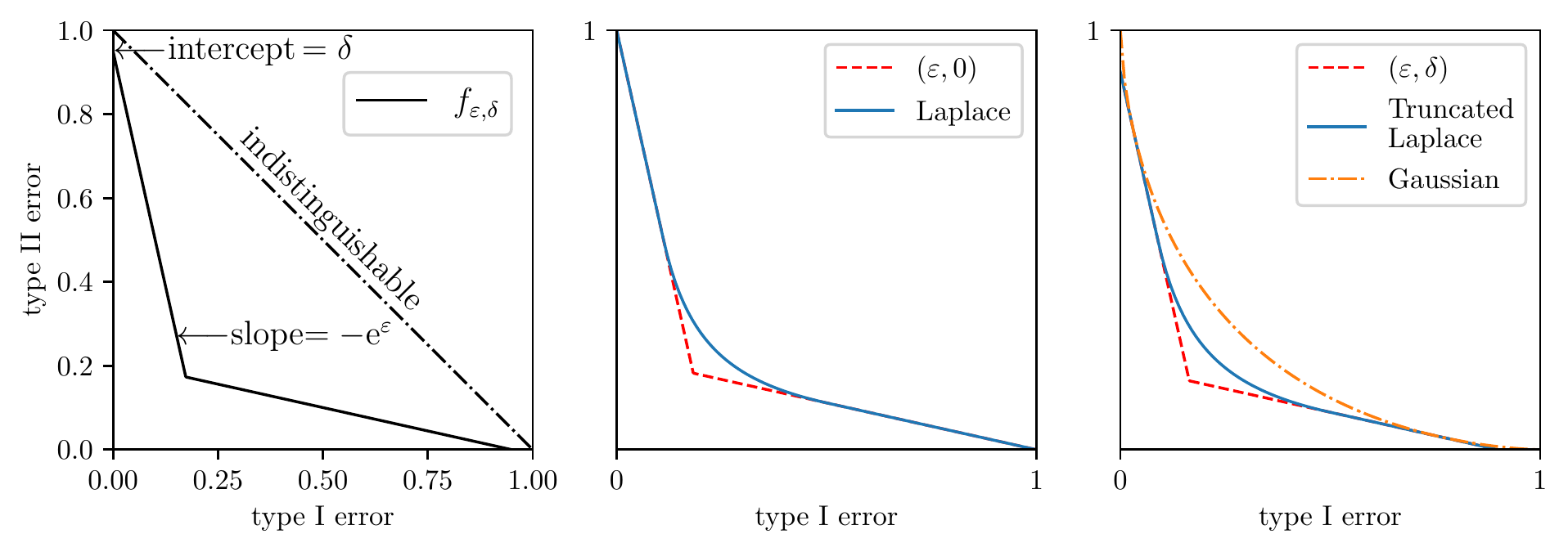}
    \vskip-5pt\caption{Left: $f_{\varepsilon,\delta}$ which recovers the classical $(\varepsilon,\delta)$-DP definition. Middle: Laplace mechanism is $(\varepsilon,0)$-DP Right: Gaussian mechanism and truncated Laplace mechanism are both $(\varepsilon,\delta)$-DP}
    \label{fig:prelim}
\end{figure*}

\paragraph{ROC function.}
For simplicity assume $M$ outputs a vector in $\R^n$. A general decision rule for testing $H_0$ against $H_1$ has the form $\phi:\R^n\to\{0,1\}$. Observing $v\in\R^n$, hypothesis $H_i$ is accepted if $\phi(v)=i$, for $i\in\{0,1\}$. The type I error of $\phi$, which is the probability of mistakenly accepting $H_1: v\sim {M(D')}=Q$ while actually $v\sim {M(D)}=P$, is $\alpha_\phi:=\P_{v\sim P}(\phi(v)=1)=\E_P(\phi)$. Similarly, the type II error of $\phi$ is $\beta_\phi:=1 - \E_Q(\phi)$. Note that both errors are in $[0,1]$.

Each test $\phi$ corresponds to a point $(\alpha_\phi,\beta_\phi)$ on the type I vs. type II plane (false positive vs. false negative plane). A family of such a test yields a (flipped) ROC curve. Among all families of tests, the family of optimal tests, which is determined by the Neyman--Pearson lemma, is of vital importance. Consider a function $f_{P,Q}:[0,1]\to[0,1]$ defined as follows:
\begin{equation*}
	f_{P, Q}(\alpha) := \inf \{1 - \E_Q(\phi): \phi \text{ satisfies } \E_P(\phi) \leqslant \alpha\}.
\end{equation*}
That is, $f_{P, Q}(\alpha)$ equals the minimum type II error that one can achieve at significance level $\alpha$. The graph of $f_{P, Q}$ is exactly the flipped ROC curve of the family of optimal tests. We call it the \textit{ROC function of the test $P$ vs. $Q$}. The same notion is called \textit{trade-off function of $P$ and $Q$} in \cite{dong2019gaussian} and is denoted by $T[P,Q]$. We avoid this name because in our paper ``trade-off'' mainly refers to the privacy-accuracy trade-off, but we will keep their notation.

\vspace{1em}
Plugging in the privacy context where $P=M(D),Q=M(D')$, from the discussion above, we see that $T[M(D),M(D')]$ measures the optimal error distinguishing $M(D)$ and $M(D')$. Therefore, a lower bound on $T[M(D),M(D')]$ implies privacy of $M$. Indeed, \citet{wasserman_zhou,KOV} showed that $M$ is $(\ep,\delta)$-DP if and only if $T[M(D),M(D')]\geqslant f_{\ep,\delta}$ pointwise in $[0,1]$ for any neighboring dataset $D,D'$. The graph of $f_{\ep,\delta}$ is plotted in the left panel of \Cref{fig:prelim}. Compared to a single $(\ep,\delta)$ bound, the ROC function $T[M(D),M(D')]$ provides a more refined picture of the privacy of $M$. In fact, \citet{dong2019gaussian} shows that the ROC function is equivalent to an infinite family of $(\ep,\delta)$ bounds, which is called the privacy profile in \citet{balle2020profile}.

Now we use the ROC function to answer \ref{q1} in the introduction. Namely, we want to explain the embarrasing failure of the Gaussian mechanism, and the success of truncated Laplace mechanism.

When $M$ is the Laplace mechanism, which is designed to be $(\ep,0)$-DP, it is not hard to determine $T[M(D),M(D')]$ via Neyman--Pearson lemma and verify that it is indeed lower bounded by $f_{\ep,0}$ (see the middle panel of \Cref{fig:prelim}). In fact, $T[M(D),M(D')]$ mostly agrees with $f_{\ep,0}$. In other words, the $(\ep,0)$ privacy budget is almost\footnote{If the query is interer-valued, then $(\ep,0)$ privacy budget can be saturated by adding doubly geometric noise.} fully utilized.

When $M$ is the Gaussian mechanism with $(\ep,\delta)$-DP gaurantee, $T[M(D),M(D')]$ is naturally lower bounded by $f_{\ep,\delta}$, however, there is a large gap between the two curves (see the right panel of \Cref{fig:prelim}). The $(\ep,\delta)$ privacy budget is poorly utilized by the Gaussian mechanism. This explains why the $l^2$-loss of the Gaussian mechanism is not satisfactory.

For a noise-addition mechanism, if the noise is sampled from the uniform distribution on $[-1,1]$ , then 
\[
T[M(D),M(D')]=f_{0,\delta}
\]
for some $\delta\in(0,1)$. This suggests that we should consider bounded noise if we want to add a $\delta$ slack in privacy to the Laplace mechanism. The obvious attempt is then to truncate the Laplace noise. Indeed, the corresponding ROC function is as close to $f_{\ep,\delta}$ as that of the Laplace mechanism to $f_{\ep,0}$ (also see the right panel of \Cref{fig:prelim}). This not only explains the success of the truncated Laplace mechanism, but also points us in the right direction to search for such a mechanism.

In hindsight, this achievement for one-dimensional mechanisms is due to the following fact: as we change the noise distribution, the corresponding ROC functions are significantly different. Hence we can pick the one that best utilizes our privacy budget. However, in the next section we will argue that this no longer works when the dimension is high --- many choices of noise distribution yield the same ROC function, which is the ROC of the Gaussian mechanism.

\paragraph{ROC function of the Gaussian mechanism} 
\label{par:roc_function_of_gaussian_mechanism}
For $\mu \ge 0$, let $G_\mu := T[\N(0,1), \N(\mu, 1)]$ where $\Phi$ denotes the cumulative distribution function (CDF) of the
standard normal distribution. Consider a query $f$ with sensitivity 1 and let $\mathrm{Lap}(0,1)$ be the standard Laplace noise. Just like $\ep$-DP captures the privacy of the mechanism $M(D)=f(D)+\ep^{-1}\cdot \mathrm{Lap}(0,1)$, the function $G_\mu$ captures the privacy of $M(D)=f(D)+\mu^{-1}\cdot N(0,1)$. In fact, if $f(D')-f(D)=1$, then $M(D)=N(f(D),\mu^{-2})$ and $M(D')=N(f(D),\mu^{-2})$. By its hypothesis testing construction, $T[P,Q]$ remains invariant when an invertible transformation is simultaneously applied to $P$ and $Q$, resulting in
\begin{align*}
	T[M(D),M(D')]
	&=T[N(f(D),\mu^{-2}),N(f(D'),\mu^{-2})]\\
	&=T[N(0,1),N(\mu,1)]=G_\mu.
\end{align*}
Therefore, the privacy of a Gaussian mechanism is precisely captured by the ROC function $G_\mu$. A general mechanism $M$ is said to be \emph{Gaussian differentially private} (GDP) if it offers more privacy than a Gaussian mechanism. More specifically,
\begin{definition}[GDP]\label{def:GDP}
 An algorithm $M$ is $\mu$-GDP if $T[M(D), M(D')] \ge G_\mu$ for any pair of neighboring datasets $D$ and $D'$.
\end{definition}
Alternatively, $M$ is $\mu$-GDP if and only if
\[
\inf_{D,D'}T[M(D), M(D')]\geqslant G_\mu,
\]
where the infimum of ROC functions is interpreted pointwise, and the infimum is over all neighboring datasets $D$ and $D'$. This inequality says $M$ offers more privacy than the corresponding Gaussian mechanism.
If it holds with equality, i.e.,
\begin{equation}\label{eq:GDP}
    \inf_{D,D'}T[M(D), M(D')]=G_\mu,
\end{equation}
then it means the mechanism $M$ offers exactly the same amount of privacy as the corresponding Gaussian mechanism. In fact, the CLT to be presented in the next section has this flavor of conclusion.

\newcommand{\KS}{\mathrm{KS}}
\section{Central Limit Theorem} 
\label{sec:CLT}

In the following two sections we turn to addressing \ref{q3}. This section is dedicated to the rigorous form of the CLT and the discussion. The experience with the CLT for i.i.d.~random variables suggests the statement for the normalized special case is usually the most comprehensible. Therefore, we will state the normalized version as \Cref{thm:main} and derive the general case as \Cref{cor:main}, which is also the rigorous version of our informal \cref{thm:informal} mentioned in the introduction.


Consider an $n$-dimensional query $f:\mathcal{X}^k\to \R^n$. We assume it has $\ell_2$-sensitivity 1, i.e., $\sup_{D,D'}\|f(D)-f(D')\|_2=1$. Suppose $\varphi:\R^n\to\R$ is convex and $\e^{-\varphi}$ is integrable on $\R^n$. We can generate a log-concave random vector with density $\propto \e^{-\varphi(x)}$ and it will be denoted by $X_\varphi$. Define the function class
\[
\mathfrak{F}_n:=\{\varphi:\R^n\to\R\text{ convex}\mid \varphi(x)=\varphi(-x),\e^{-\varphi}\in L^1(\R^n), \E\|X_\varphi\|_2^2<+\infty,\E\|\nabla\varphi(X_\varphi)\|_2^2<+\infty\}.
\]
The regularity conditions guarantee that $X_\varphi$ has finite second moments and the Fisher information matrix is
defined as $\Fisher_{\varphi}=\E[\nabla\varphi(X_\varphi)\nabla\varphi(X_\varphi)^T]$. Furthermore, we also have $\E X_\varphi=0$ by symmetry and $\E\nabla\varphi(X_\varphi)=0$ by the standard theory of the Fisher information\footnote
{	
	Whenever Fisher information is involved, the standard assumption is the \textit{quadratic mean differentiability} (QMD) condition, which implies various nice properties including $\E\nabla\varphi(X_\varphi)=0$. QMD holds for all log-concave location families as long as the integral $\E\|\nabla\varphi(X_\varphi)\|_2^2$ is finite. The proof of this claim is a straightforward consequence of Lemma 7.6 of \cite{van2000asymptotic}.
}. We will focus on this class of functions for the rest of the paper.

The $n$-dimensional noise-addition mechanism of interest takes the form $M(D)=f(D)+t X_\varphi$. The parameter $t$ is only for the convenience of tuning and can be absorbed into $\varphi$. In fact, $tX_\varphi$ has log-concave density $\propto \e^{-\varphi(x/t)}$, so it is distributed as $X_{\tilde{\varphi}}$ where $\tilde{\varphi}(x)=\varphi(x/t)$. 
For the normalized CLT, we set $t=1$ and assume $\Fisher_\varphi$ is the $n\times n$ identity matrix $I_{n\times n}$.

Since we are going to present an asymptotic result where the dimension $n\to\infty$, the above objects necessarily appear with an index $n$, i.e., we have $f_n, \varphi_n$, $X_{\varphi_n}$ and $\Fisher_{\varphi_n}$. The latter two are often denoted by $X_n$ and $\Fisher_n$ for brevity. With normalization, the $n$-dimensional mechanism of interest is
$M_n(D)=f_n(D)+ X_{n}$. For clarity, we choose to state the theorem first, and then present the details of the technical conditions.

\begin{restatable}[]{theorem}{main}
\label{thm:main}
	If the function sequence $\varphi_n$ satisfies conditions \ref{d1} and \ref{d2}, then there is a sequence of positive numbers $c_n$ with $c_n\to0$ as $n\to\infty$ and a subset $E_n\subseteq S^{n-1}$ with $\P_{v\sim S^{n-1}}(v\in E_n)>1-c_n$ such that 
	\[\|\inf_{D,D'} T[M_n(D),M_n(D')]-G_1\|_\infty\leqslant c_n,\]
	where the infimum is taken over $D,D'$ such that $\tfrac{f_n(D')-f_n(D)}{\|f_n(D')-f_n(D)\|_2}\in E_n$.
\end{restatable}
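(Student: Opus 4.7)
The plan is to reduce the high-dimensional trade-off function to a one-dimensional distributional problem via the log-likelihood ratio. Fix neighboring datasets $D, D'$ with $u = f_n(D') - f_n(D)$, and set $v = u / \|u\|_2$ (the worst case will be $\|u\|_2 = 1$). Under $H_0$ the observation is $X_n \sim \e^{-\varphi_n}$ and under $H_1$ it is $X_n + u$, so the log-likelihood ratio is $L_u(x) := \varphi_n(x) - \varphi_n(x - u)$. By the Neyman--Pearson lemma, $T[M_n(D), M_n(D')]$ is determined entirely by the laws of $L_u$ under $H_0$ and $H_1$, and the target $G_1$ corresponds to $L_u \sim \N(1/2, 1)$ under $H_0$ and $\N(-1/2, 1)$ under $H_1$. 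A short continuity lemma shows that Kolmogorov-distance closeness of these one-dimensional laws translates to sup-norm closeness of the ROC curves, so the whole theorem reduces to proving such a one-dimensional CLT.

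Next I would perform a Taylor expansion
\[
    L_u(X_n) = u \cdot \nabla \varphi_n(X_n) - \tfrac{1}{2} u^T \nabla^2 \varphi_n(\xi) u ,
\]
with remainder point $\xi$ on the segment from $X_n - u$ to $X_n$. The standard Fisher-information identity (cf.\ the QMD footnote) gives $\E[\nabla \varphi_n(X_n)] = 0$ and $\E[\nabla^2 \varphi_n(X_n)] = \Fisher_n = I_{n \times n}$, so for $\|u\|_2 = 1$ the expected remainder is $\tfrac{1}{2} u^T I u = \tfrac{1}{2}$. Conditions \ref{d1} and \ref{d2} should give exactly what is needed to control the fluctuations of the remainder around this mean uniformly for most directions, i.e.\ to ensure the second-order term concentrates around $1/2$ while the linear term captures the stochastic fluctuation with variance $u^T \Fisher_n u = 1$.

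It remains to show the linear term $u \cdot \nabla\varphi_n(X_n)$ is approximately $\N(0,1)$ for most directions $v = u/\|u\|_2$. Since $\nabla\varphi_n(X_n)$ has mean zero and identity covariance, this is a statement about approximate Gaussianity of one-dimensional marginals of an isotropic random vector in $\R^n$. In the spirit of Klartag's central limit theorem for log-concave measures, I would produce $E_n \subseteq S^{n-1}$ with $\sigma(E_n) \ge 1 - c_n$ on which $v \cdot \nabla \varphi_n(X_n)$ is within $c_n$ of $\N(0,1)$ in Kolmogorov distance. The even-symmetry hypothesis $\varphi_n(x) = \varphi_n(-x)$ then lets us repeat the expansion under $H_1$ (where $X_n + u$ has the same law as $-X_n + u$) and obtain the reflected limit $\N(-1/2, 1)$. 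Combining with the continuity lemma and taking the infimum over $D, D'$ whose difference direction lies in $E_n$ yields the claimed uniform bound.

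The main obstacle is the projection CLT in Step 3: although $\nabla \varphi_n(X_n)$ is isotropic, it is generally not itself log-concave (it is the image of a log-concave vector under the potentially nonlinear map $\nabla \varphi_n$), so one cannot invoke Klartag's theorem as a black box. The heavy lifting in conditions \ref{d1} and \ref{d2} must therefore be a thin-shell/concentration estimate on $\|\nabla \varphi_n(X_n)\|_2$ around its expectation $\sqrt{n}$, together with enough integrability of the Hessian along random directions to close Step 2. A secondary technical point is to verify the continuity lemma between Kolmogorov closeness of log-likelihood laws and uniform closeness of trade-off functions, which is routine but needs care at the endpoints $\alpha \in \{0,1\}$.
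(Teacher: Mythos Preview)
Your plan matches the paper's proof almost exactly: reduce to the one-dimensional law of the log-likelihood ratio (the paper packages this as a lemma expressing $T[X,X+v](\alpha)=F_v\big(-F_{-v}^{-1}(\alpha)-1\big)$ where $F_v$ is the CDF of the likelihood projection $P_v^{\varphi_n}(X_n)$), approximate that law by the linear term $v^T\nabla\varphi_n(X_n)$ via \ref{d1}, and then show the linear projection is approximately standard normal for most directions $v$. Two places where the paper fills in what you leave open:

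Your ``main obstacle'' dissolves once you name the right tool. The projection CLT you need is not Klartag's (which indeed wants log-concavity of the vector being projected) but \emph{Sudakov's}: for any isotropic $Y_n$ satisfying a thin-shell estimate $\|Y_n\|_2/\sqrt{n}\to 1$ in probability, most one-dimensional marginals $v^TY_n$ are close to $\N(0,1)$ in Kolmogorov distance. Condition \ref{d2} is precisely that thin-shell hypothesis for $Y_n=\nabla\varphi_n(X_n)$, so no log-concavity of $\nabla\varphi_n(X_n)$ is required, and no separate Hessian integrability either---\ref{d1} is already the KS-distance statement $\KS(P_v^{\varphi_n}(X_n),\,v^T\nabla\varphi_n(X_n))=o(1)$, not something to be derived from a Taylor remainder bound.

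Your assertion that ``the worst case will be $\|u\|_2=1$'' is where log-concavity of $X_n$ itself enters and is not automatic. The paper proves a monotonicity lemma: for log-concave $X$, the ROC function $T[X,X+rv]$ is pointwise decreasing in $r\ge 0$, so the infimum over $r\in[0,1]$ is attained at $r=1$. This is what reduces $\inf_{D,D'}T[M_n(D),M_n(D')]$ to $\inf_{v\in E_n}T[X_n,X_n+v]$.

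Finally, the continuity step you flag at the endpoints is handled by showing $G_\mu$ is $\alpha$-H\"older for every $\alpha<1$ (it is not Lipschitz when $\mu>0$), which converts $\|F_{\pm v}^n-\Phi\|_\infty\le b_n$ into $\|T[X_n,X_n+v]-G_1\|_\infty\le C\sqrt{b_n}+b_n$.
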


Here $v\sim S^{n-1}$ means $v$ comes from a uniform distribution of the unit sphere $S^{n-1}\subseteq \R^n$. The conclusion is basically that $\inf_{D,D'} T[M_n(D),M_n(D')]\to G_1$, i.e., $M_n$ is asymptotically GDP. Similar to the interpretation of \eqref{eq:GDP}, it means the mechanism $M_n$ provides the same amount of privacy as a Gaussian mechanism in the limit of $n\to\infty$. However, a fraction of neighboring datasets has to be excluded. More specifically, the limit holds if the direction of the difference $f_n(D')-f_n(D)$ falls in $E_n$, an ``almost sure'' event as the dimension $n\to\infty$. As we remarked in the introduction, directions in $E_n$ can exhibit low dimensional behavior and hence must be ruled out for any high-dimensional observation.

For a vector $v\in\R^n$ and $\varphi\in\mathfrak{F}_n$, let $\Proj_v^\varphi:\R^n\to\R$ be defined as $\Proj_v^\varphi(x)=\varphi(x+v)-\varphi(x)-\frac{1}{2}v^T\Fisher_\varphi v$. For two random variables $X$ and $Y$, their Kolmogorov--Smirnov distance $\KS(X,Y)$ is defined as the $\ell_\infty$ distance of their CDFs. A sequence of random variables is denoted by $o_{P}(1)$ if they converge in probability to 0. The technical conditions for the CLT are as follows. Note that each of them are conditions on the function sequence $\varphi_n$.
\begin{enumerate}[label=(D\arabic*)]
    \item\label{d1} $\KS(P_v^{\varphi_n}(X_n),v^T\nabla\varphi_n(X_n)=o(1)$ with probability at least $1-o(1)$ over $v\sim S^{n-1}$;
	\item\label{d2} $\|\nabla\varphi_n(X_n)\|_2=\sqrt{n}\cdot(1+o_{P}(1))$.
\end{enumerate}
\begin{remark}
	Dropping the cumbersome subscripts $n$, \ref{d1} is roughly concerned with
	$$P_v^\varphi(X)=\varphi(X+v)-\varphi(X)-\tfrac{1}{2}v^T\Fisher_\varphi v\approx v^T\nabla\varphi(X).$$
	Since $\Fisher_\varphi$ is the expectation of the Hessian of $\varphi$, we see that \ref{d1} is basically a regularity condition stating that the Taylor expansion of $\varphi$ holds on average up to the second order.
\end{remark}
\begin{remark}
Condition \ref{d2} basically says that $\nabla\varphi(X)$ mostly falls on a spherical shell of radius $\sqrt{n}$ (as it should since $\Fisher_\varphi=\E[\nabla\varphi(X)\nabla\varphi(X)^T]$ is assumed to be identity). A deeper understanding is provided by
an alternative interpretation of condition \ref{d1}, using a new notion we propose called ``likelihood projection.''
\end{remark}
\paragraph{Likelihood Projection.} The function $\Proj_v^\varphi$ defined above is called the ``likelihood projection'' along direction $v$. It is (up to an additive constant) the log likelihood ratio of $X_\varphi$ and its translation $X_\varphi-v$. In fact, $X_\varphi$ has density $\frac{1}{Z_\varphi}\e^{-\varphi(x)}$ and $X_\varphi-v$ has density $\frac{1}{Z_\varphi}\e^{-\varphi(x+v)}$ where $Z_\varphi$ is the common normalizing constant. The log likelihood ratio is $\varphi(x+v)-\varphi(x)$. This explains the word ``likelihood.'' To observe its nature as a ``projection,'' consider the special case $\varphi(x)=\frac{1}{2}\|x\|_2^2$. Straightforward calculation suggests that $\Fisher_\varphi$ is identity and $\Proj_v^\varphi(x)=v^Tx$. So it is indeed a generalization of the linear projection along direction $v$.

The alternative interpretation of condition \ref{d1} is that when the dimension is high, the ``likelihood projection'' $P_v^\varphi(X)$ is roughly a linear projection to the direction $v$. Condition \ref{d2} is then the ``thin-shell'' condition proposed in Sudakov's theorem \citep{sudakov1978typical}, which we state in the appendix as a necessary tool for the proof of our CLT.

For the general case, consider $M_n(D)=f_n(D)+t_n X_{\varphi_n}$ where
\begin{equation*}
	t_n={\mu^{-1}}\cdot \sqrt{\|\Fisher_{n}\|_2}.
\end{equation*}
The factor $\sqrt{\|\Fisher_{n}\|_2}$ normalizes the Fisher information to the identity, and the factor $\mu^{-1}$ controls the final privacy level.
For this mechanism, we have
\begin{corollary}\label{cor:main}
	If the function sequence $\tilde{\varphi}_n(x)=\varphi_n(\|\Fisher_{n}\|_2^{-\frac{1}{2}}x)$ satisfies conditions \ref{d1} and \ref{d2} and that $\Fisher_{n} = \|\Fisher_{n}\|_2\cdot(1+o(1))\cdot I_{n\times n}$, then there is a sequence of positive numbers $c_n\to0$ and a subset $E_n\subseteq S^{n-1}$ for each $n$ with $\P_{v\sim S^{n-1}}(v\in E_n)>1-c_n$ such that 
\[
\|\inf_{D,D'} T[M_n(D),M_n(D')]-G_\mu\|_\infty\leqslant c_n,
\]
where the infimum is taken over $D,D'$ such that $\tfrac{f_n(D')-f_n(D)}{\|f_n(D')-f_n(D)\|_2}\in E_n$.
\end{corollary}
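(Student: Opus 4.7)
I would prove \Cref{cor:main} by reducing it to the normalized CLT of \Cref{thm:main} through two successive invertible rescalings, exploiting that the ROC function $T[P,Q]$ is preserved when the same invertible linear map is applied to both $P$ and $Q$. Writing $w = f_n(D') - f_n(D)$ and using translation invariance, the ROC in question equals $T[t_n X_{\varphi_n},\, t_n X_{\varphi_n} + w]$. The plan is to use the two rescalings to transform this into an ROC matching the hypotheses of \Cref{thm:main} exactly (noise with Fisher information $I_{n\times n}$), at the cost of a mild perturbation of the shift vector.

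The first rescaling is scalar: multiply both arguments of $T$ by $\mu$. A change of variables in the density identifies the scaled noise $\mu t_n X_{\varphi_n} = \|\Fisher_n\|_2^{1/2} X_{\varphi_n}$ as $X_{\tilde\varphi_n}$, yielding
\[
T[M_n(D), M_n(D')] = T[X_{\tilde\varphi_n},\, X_{\tilde\varphi_n} + \mu w].
\]
By hypothesis $\tilde\varphi_n$ satisfies \ref{d1} and \ref{d2}, and its Fisher information is $\Fisher_{\tilde\varphi_n} = \|\Fisher_n\|_2^{-1}\Fisher_n = (1+o(1))\, I_{n\times n}$. The second rescaling pre-multiplies by the near-identity matrix $A_n := \Fisher_{\tilde\varphi_n}^{-1/2}$; the resulting noise $A_n X_{\tilde\varphi_n}$ is log-concave with potential $z\mapsto \tilde\varphi_n(\Fisher_{\tilde\varphi_n}^{1/2} z)$ and Fisher information \emph{exactly} $I_{n\times n}$, while the shift becomes $A_n \mu w$, of norm $\mu\|w\|_2(1+o(1))$ and direction within $o(1)$ of $w/\|w\|_2$ (since $\|A_n - I\|_{\mathrm{op}} = o(1)$).

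After these reductions I would invoke (a small extension of) \Cref{thm:main}. Two minor points require attention. First, the shift now has norm $\approx \mu$ rather than $1$; but the same argument driving \Cref{thm:main} actually produces $G_s$ for a shift of any fixed norm $s$, and since $G_s$ decreases pointwise in $s$, taking the infimum over $s \in (0,\mu]$ recovers $G_\mu$. Second, I need to verify that \ref{d1} and \ref{d2} are preserved under replacing $\tilde\varphi_n$ by $z\mapsto\tilde\varphi_n(\Fisher_{\tilde\varphi_n}^{1/2} z)$, and that the direction map $w/\|w\|_2\mapsto A_n w/\|A_n w\|_2$ sends almost all of $E_n$ into a nearly-full subset of $S^{n-1}$. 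This is the main technical obstacle: both are quantitative continuity statements that I expect to follow from the Sudakov-style thin-shell lemmas used in the proof of \Cref{thm:main}, together with the bound $\|A_n - I\|_{\mathrm{op}} = o(1)$; the extra $o(1)$ slack can then be absorbed into the constants $c_n$ in the corollary's conclusion.
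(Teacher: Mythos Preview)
Your approach via invertible rescalings is essentially what the paper does, though the paper's own treatment is extremely terse: it amounts to a single paragraph observing that $t_n X_{\varphi_n}$ has density $\propto\e^{-\varphi_n(\cdot/t_n)}$ and Fisher information $t_n^{-2}\Fisher_n\approx\mu^2 I_{n\times n}$, with the implicit suggestion that one then reruns the proof of \Cref{thm:main}. You are considerably more careful in isolating the two places where the normalized theorem does not apply verbatim.

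Regarding your point (b): the second rescaling by $A_n=\Fisher_{\tilde\varphi_n}^{-1/2}$ is arguably unnecessary. Inspecting the proof of \Cref{thm:main}, the exact identity $\Fisher=I$ is used only through (i) the quantity $v^T\Fisher v$ in \Cref{lem:ROC_compute}, which equals $1+o(1)$ under the near-identity hypothesis and can be absorbed via the H\"older continuity of $G_\mu$, and (ii) the isotropy hypothesis of Sudakov's theorem, which can be handled by applying Sudakov to the exactly isotropic $\Fisher_{\tilde\varphi_n}^{-1/2}\nabla\tilde\varphi_n(X_{\tilde\varphi_n})$ and noting that a $(1+o(1))$ scalar perturbation preserves convergence to $\Phi$. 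This sidesteps tracking how $E_n$ transforms under $A_n$.

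Point (a) is more delicate, and your assertion that ``the same argument driving \Cref{thm:main} actually produces $G_s$ for a shift of any fixed norm $s$'' glosses over a genuine gap. For a shift $\mu v$ with unit $v$, \Cref{lem:ROC_compute} involves the CDF of the likelihood projection $P_{\mu v}^{\tilde\varphi_n}(X_{\tilde\varphi_n})=\tilde\varphi_n(X_{\tilde\varphi_n}+\mu v)-\tilde\varphi_n(X_{\tilde\varphi_n})-\tfrac{\mu^2}{2}v^T\Fisher_{\tilde\varphi_n}v$, whereas condition \ref{d1}, as stated in the corollary's hypothesis, only controls this object for shifts of norm $1$. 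Neither the Taylor-expansion heuristic behind \ref{d1} nor its formal statement delivers the scale-$\mu$ version from the scale-$1$ version without further argument. The paper does not address this either (its verification for norm powers in \Cref{sec:proof-case-x_palpha} simply opens with ``without loss of generality, we assume $\mu=1$''), so you are not missing an argument present in the text. A complete proof would need either to strengthen the hypothesis to require \ref{d1} for shifts of norm $\mu$, or to show separately that \ref{d1} at scale $1$ implies \ref{d1} at any fixed scale---plausible for the intended examples, but not an immediate formal consequence.
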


In particular, we show that for $p\in[1,+\infty),\alpha\in[1,+\infty)$, norm powers $\|x\|_p^\alpha$ satisfy the above conditions.
\begin{lemma}\label{lem:normpower}
	For $p\in[1,+\infty),\alpha\in[1,+\infty)$, let $c_{p,\alpha}=\alpha^{-1}\cdot p^{-\alpha+\frac{\alpha}{p}}\cdot\left(\frac{\Gamma(2-\frac{1}{p})}{\Gamma(\frac{1}{p})}\right)^{-\frac{\alpha}{2}}$, the sequence of functions $\varphi_n(x)=n^{1-\frac{\alpha}{p}}\cdot c_{p,\alpha}\|x\|_p^\alpha$ satisfies conditions \ref{d1} and \ref{d2} and that $\Fisher_{n} = \|\Fisher_{n}\|_2\cdot(1+o(1))\cdot I_{n\times n}$.
\end{lemma}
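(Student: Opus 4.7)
I would verify the three requirements---isotropy of $\Fisher_n$, the thin-shell condition \ref{d2}, and the likelihood-projection condition \ref{d1}---in turn, using the $\ell_p$-polar decomposition of $X_{\varphi_n}$ and the Schechtman--Zinn representation of the uniform distribution on the $\ell_p$-sphere as the principal tools. The normalization $n^{1-\alpha/p}c_{p,\alpha}$ is not mysterious: it is forced by calibrating $\E\|\nabla\varphi_n(X_n)\|_2^2=n$, so I would leave the explicit constant for the end and first identify the correct concentration value. The isotropy requirement is immediate: since $\varphi_n$ is invariant under every signed coordinate permutation, $\Fisher_n=\E[\nabla\varphi_n(X_n)\nabla\varphi_n(X_n)^T]$ commutes with this symmetry group and must be an exact scalar multiple of $I_{n\times n}$, so the $(1+o(1))$ isotropy statement holds with no error term.

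\textbf{Condition \ref{d2}.} Write $X_n=R\,U$ with $R=\|X_n\|_p$ and $U=X_n/R$ independent and $U$ uniform on the $\ell_p$-unit sphere. A change of variables shows that $R^\alpha$ is Gamma-distributed with shape $n/\alpha$, so $R=r_*(1+O_{P}(n^{-1/2}))$ for an explicit $r_*\asymp n^{1/p}$. By the Schechtman--Zinn representation, $U$ has the same distribution as $Y/\bigl(\sum_j|Y_j|^p\bigr)^{1/p}$ with $Y_j$ i.i.d.\ of density $\propto \e^{-|y|^p}$, hence
\[
\sum_{i=1}^n |U_i|^{2p-2}=\frac{\sum_i|Y_i|^{2p-2}}{\bigl(\sum_j|Y_j|^p\bigr)^{(2p-2)/p}}
\]
concentrates by the law of large numbers. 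Substituting into
\[
\|\nabla\varphi_n(X_n)\|_2^2=\bigl(n^{1-\alpha/p}c_{p,\alpha}\alpha\bigr)^2 R^{2\alpha-2}\sum_{i=1}^n|U_i|^{2p-2}
\]
and matching limit constants determines $c_{p,\alpha}$ exactly so that $\|\nabla\varphi_n(X_n)\|_2=\sqrt{n}(1+o_{P}(1))$.

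\textbf{Condition \ref{d1}.} Expand $\varphi_n(X_n+v)-\varphi_n(X_n)=v^T\nabla\varphi_n(X_n)+\tfrac12 v^T\Hess\varphi_n(\xi)v$ for some $\xi\in[X_n,X_n+v]$. Integration by parts on $\e^{-\varphi_n}$ gives $\E[\Hess\varphi_n(X_n)]=\Fisher_n=I_{n\times n}$, so it suffices to show that for most unit $v$, $v^T\Hess\varphi_n(\xi)v\to 1$ in probability. The Hessian of $\|x\|_p^\alpha$ splits into a rank-one piece plus a diagonal piece proportional to $\mathrm{diag}(|x_i|^{p-2})$; the rank-one contribution to $v^T(\cdot)v$ is $O(1/n)$ for uniform $v\in S^{n-1}$, while the diagonal piece concentrates by a Hanson--Wright bound conditional on $X_n$ combined with the concentration of $\sum_i|X_{n,i}|^p$ already established in the previous step. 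Since $\|X_n\|_p\asymp n^{1/p}\gg \|v\|_\infty$ on the thin shell, the Hessian entries vary only by a factor of $1+o(1)$ along the segment, and probabilistic convergence upgrades to Kolmogorov--Smirnov convergence because the limit is deterministic.

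\textbf{Main obstacle.} I expect the hardest part to be \ref{d1} for $p\in[1,2)$, where $2p-2\leq 0$ makes the diagonal Hessian entries singular near the coordinate axes. For $p\in(1,2)$, one can rescue the argument by using Schechtman--Zinn to show that all but $o(n)$ coordinates of $X_n$ stay bounded away from zero with high probability, keeping the diagonal sums integrable. The boundary case $p=1$ is genuinely degenerate---the Hessian does not exist classically---and will likely require either a subgradient-based argument or a smooth mollification of $\|\cdot\|_1$ treated separately, after which one recovers condition \ref{d1} in the limit.
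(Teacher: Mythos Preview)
Your treatment of the Fisher isotropy and of condition \ref{d2} is essentially the paper's: the paper also uses the polar decomposition $X\stackrel d= t^{1/\alpha}rU$ (its Corollary~\ref{cor:unif}), reduces $\sum_i|U_i|^{2p-2}$ to an i.i.d.\ sum by passing through the $\alpha=p$ case---which is exactly Schechtman--Zinn---and reads off the thin-shell statement from the law of large numbers. The calibration of $c_{p,\alpha}$ is done the same way.

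For condition \ref{d1} the two routes diverge, and yours has a real hole. You invoke the mean-value form $\varphi_n(X_n+v)-\varphi_n(X_n)=v^T\nabla\varphi_n(X_n)+\tfrac12 v^T\Hess\varphi_n(\xi)v$ and then assert that the Hessian entries are stable along the segment because $\|X_n\|_p\gg\|v\|_\infty$. That norm inequality does not control individual coordinates: for $p<2$ the diagonal entry scales like $|\xi_i|^{p-2}$, and whenever $|X_i|<|v_i|$---which happens for roughly $\sqrt n$ indices when $v$ is uniform on $S^{n-1}$---the segment $[X_i,X_i+v_i]$ passes through zero and the mean-value Hessian is not even defined. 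Your obstacle paragraph sees the problem but the proposed fix (``all but $o(n)$ coordinates bounded away from zero'') is too coarse; one needs the coordinate-dependent threshold $|X_i|>2|v_i|$ and a direct bound on the complement. The paper sidesteps the intermediate-point issue entirely: it first linearizes the \emph{outer} power, $\|X+v\|_p^\alpha-\|X\|_p^\alpha\approx\tfrac{\alpha}{p}\|X\|_p^{\alpha-p}\bigl(\|X+v\|_p^p-\|X\|_p^p\bigr)$, then expands each $|X_i+v_i|^p-|X_i|^p$ around $X_i$ (not around an unknown $\xi_i$), splitting on $\{|X_i|>2|v_i|\}$ and bounding the bad set by a density estimate (Lemma~\ref{lem:badregion}). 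For $p=1$ the paper abandons Taylor expansion altogether, computes the first three moments of $|X_i+v_i|-|X_i|$ explicitly for i.i.d.\ Laplace $X_i$, and applies Berry--Esseen; your ``subgradient or mollification'' suggestion would require substantially more to be made rigorous.
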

The parameter $c_{p,\alpha}$ and the power of $n$ are determined by the Fisher information, which can be found in \Cref{thm:variance}.

More generally, we conjecture that
\begin{conjecture}\label{conj}
	All functions in $\mathfrak{F}_n$ satisfy \ref{d1} and \ref{d2}.
\end{conjecture}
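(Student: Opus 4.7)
The plan is to establish \ref{d1} and \ref{d2} simultaneously for an arbitrary $\varphi_n \in \mathfrak{F}_n$ (after the normalization $\Fisher_{\varphi_n} = I_{n\times n}$) by combining two well-developed machineries: functional inequalities for log-concave measures, which control the $X_n$ randomness, and concentration of measure on the sphere $S^{n-1}$, which controls the $v$ randomness. Under the normalization, $\nabla \varphi_n(X_n)$ has mean zero and identity covariance, so the moral target is to upgrade this second-moment statement to a genuine ``approximately spherical'' statement for the score vector.

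For \ref{d2}, I would apply the Brascamp--Lieb variance inequality, which for a strictly log-concave density $\mu \propto \e^{-\varphi}$ states $\mathrm{Var}_\mu(f) \leqslant \E_\mu\langle \nabla f, (\Hess \varphi)^{-1} \nabla f\rangle$. Applied to $f(x) = \|\nabla \varphi(x)\|_2^2$, this yields
\begin{equation*}
\mathrm{Var}\bigl(\|\nabla \varphi(X)\|_2^2\bigr) \leqslant 4\, \E\bigl[\nabla \varphi(X)^T \Hess \varphi(X) \nabla\varphi(X)\bigr],
\end{equation*}
and the right-hand side must be shown to be $o(n^2)$; this would follow from a second application of Brascamp--Lieb or a Poincar\'e-type bootstrap, together with $\E[\Hess \varphi(X)] = \Fisher_\varphi = I_{n\times n}$. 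For merely convex (not strictly convex) $\varphi$, I would regularize by convolution with a small Gaussian and pass to the limit after proving the inequality in the smooth setting.

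For \ref{d1}, the starting point is a second-order Taylor expansion
\begin{equation*}
\varphi(X+v) - \varphi(X) - \tfrac{1}{2}v^T \Fisher_\varphi v = v^T \nabla \varphi(X) + \tfrac{1}{2}v^T\bigl(\Hess \varphi(X) - \Fisher_\varphi\bigr)v + R_3(X,v),
\end{equation*}
with $R_3$ the cubic remainder. Three things need control: (i) the Hessian fluctuation; (ii) the remainder $R_3$; and (iii) upgrading convergence in probability to Kolmogorov--Smirnov. For (i), spherical concentration gives $v^T A v \approx \tfrac{1}{n}\tr A$ on a $(1-o(1))$-mass subset of $v\in S^{n-1}$ whenever $\|A\|_2^2/n \to 0$; applied to $A = \Hess \varphi(X) - \Fisher_\varphi$ and combined with log-concave concentration of $\tfrac{1}{n}\tr \Hess \varphi(X) \to 1$, this neutralizes the quadratic term. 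For (ii), convexity of $\varphi$ together with $\|v\|_2 = 1$ gives one-sided bounds on $R_3$, and truncating $X$ to a typical set should render it negligible. The upgrade in (iii) follows because $v^T \nabla \varphi(X)$ is, conditionally on $v$, a one-dimensional log-concave random variable with unit variance and hence has a bounded density, so in-probability closeness automatically promotes to KS closeness.

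The main obstacle, and the reason this statement remains only a conjecture, is controlling $\tfrac{1}{n}\tr \Hess \varphi(X)$ and the cubic remainder $R_3$ uniformly over the abstract class $\mathfrak{F}_n$, where no smoothness beyond mere convexity is assumed. Alexandrov's theorem supplies only a.e.\ second-order differentiability, and pathological sequences in which the Hessian spectrum concentrates on a few directions could in principle defeat the spherical-projection step. I expect the decisive ingredient to be either a Klartag-type thin-shell theorem for the pushforward $(\nabla \varphi)_*\mu$, or a functional inequality tying the Hessian spectrum of $\varphi$ to the Fisher information $\Fisher_\varphi$; without such a tool the conjecture appears out of reach of elementary Taylor-expansion methods, which is why a direct verification (as in \Cref{lem:normpower}) currently seems possible only family by family.
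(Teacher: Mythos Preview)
The statement you are addressing is explicitly labeled a \emph{conjecture} in the paper, and the paper offers no proof of it; the authors only verify the conditions \ref{d1} and \ref{d2} for the specific family $\varphi(x)=\|x\|_p^\alpha$ in \Cref{lem:normpower}. So there is no ``paper's own proof'' to compare against.

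Your proposal correctly recognizes this: you sketch a plausible strategy (Brascamp--Lieb for the thin-shell condition on $\nabla\varphi(X)$, spherical concentration plus log-concave density bounds for the KS upgrade) and then explicitly identify the obstruction---controlling the Hessian trace and the cubic remainder uniformly over $\mathfrak{F}_n$ with only a.e.\ second-order differentiability---as the reason the statement remains conjectural. That assessment is consistent with the paper's own stance, which simply states the conjecture and moves on. Your remark that a Klartag-type thin-shell result for the pushforward $(\nabla\varphi)_*\mu$ would be the decisive missing ingredient is a reasonable diagnosis and goes somewhat beyond what the paper says; the paper only gestures at ``significant machinery from convex geometry'' in the conclusion without naming a specific tool.
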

Recall that $\varphi\in\mathfrak{F}_n$ lead to log-concave distributions. We limit the scope of our conjecture to log-concave distributions because of an interesting lemma involved in the proof of the central limit theorem \ref{thm:main}. Consider the mechanism $M^t(D)=f(D)+tX$, with the emphasis on the scaling parameter $t$. As $t$ increases, $M^t$ obviously loses accuracy no matter what distribution $X$ has. On the other hand, we have

\begin{restatable}[]{lemma}{monotone}
\label{lem:monotone}
When $X$ has log-concave distribution and $t\geqslant0$, the ROC function $T[M^t(D),M^t(D')]$ is (pointwise) monotone increasing in $t$ for any $D,D'$.
\end{restatable}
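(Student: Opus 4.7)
My plan is to reduce the monotonicity to a one-dimensional testing problem via invariance and conditioning, and then invoke the monotone likelihood ratio (MLR) property of 1D log-concave location families.

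Setting $v := f(D') - f(D)$, the translation-and-rescaling invariance of $T[\cdot,\cdot]$ (apply the bijection $y \mapsto (y - f(D))/t$ to both output distributions) yields
\[
T[M^t(D), M^t(D')] = T[tX, tX + v] = T[X, X + v/t].
\]
Writing $u := v/\|v\|_2$ and $s := \|v\|_2/t$, the desired pointwise monotone increase in $t$ is equivalent to the pointwise monotone \emph{decrease} of $s \mapsto T[X, X + s u](\alpha)$ on $[0, \infty)$.

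Rotate coordinates so that $u = e_1$ and write $X = (X_1, X_\perp)$. A change of variables shows $X_\perp$ has the same marginal under both the null $X$ and the alternative $X + su = (X_1 + s, X_\perp)$, so we may condition on $X_\perp = w$. The conditional density $p_{X_1 \mid X_\perp}(\,\cdot\, | w)$ is log-concave in $x_1$ (as a slice of a log-concave function); writing it as $\propto e^{-\varphi_w}$ with $\varphi_w$ convex, the log-likelihood ratio $\varphi_w(x) - \varphi_w(x - s)$ is nondecreasing in $x$ (its derivative $\varphi_w'(x) - \varphi_w'(x-s)$ is nonnegative by convexity of $\varphi_w$). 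By the Neyman--Pearson lemma, the conditional ROC has the explicit form $T_w^s(\alpha) = F_w(F_w^{-1}(1-\alpha) - s)$, where $F_w$ is the conditional CDF, and this is pointwise decreasing in $s$.

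To lift this back to $\R^n$, take $s_1 \leq s_2$ and a test $\phi^*$ that is $\epsilon$-optimal for $T[X, X + s_1 u](\alpha)$, and let $\alpha_{\phi^*}(w), \beta_{\phi^*}(w)$ denote its conditional type-I and type-II errors on $\{X_\perp = w\}$. Define a test $\tilde\phi$ for the $s_2$-problem by taking, on each slice, the conditional NP test achieving conditional type-II error $T_w^{s_2}(\alpha_{\phi^*}(w))$. Its overall type-I error is $\E[\alpha_{\phi^*}(X_\perp)] = \alpha$, and its overall type-II error satisfies
\[
\E\!\left[T_w^{s_2}(\alpha_{\phi^*}(w))\right] \leq \E\!\left[T_w^{s_1}(\alpha_{\phi^*}(w))\right] \leq \E[\beta_{\phi^*}(X_\perp)] \leq T[X, X + s_1 u](\alpha) + \epsilon,
\]
by the pointwise monotonicity of $T_w^s$ followed by the conditional NP lower bound. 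Sending $\epsilon \to 0$ yields $T[X, X + s_2 u](\alpha) \leq T[X, X + s_1 u](\alpha)$, which is exactly the claimed monotonicity.

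The main obstacle will be the measurable selection in the lift-back step: constructing $\tilde\phi$ requires a measurable choice of conditional NP test in $w$, which is standard but worth checking carefully in the absence of extra regularity. Beyond that, the argument is structural: the reduction hinges on the shift $su$ leaving the marginal of $X_\perp$ invariant (so that conditioning is legitimate under both hypotheses), and on the crucial fact that pointwise comparison of $T_w^{s_1}$ and $T_w^{s_2}$ suffices, so no joint coupling between different slices is needed---the content is genuinely one-dimensional MLR of log-concave location shifts.
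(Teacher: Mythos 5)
Your proof is correct, but it takes a genuinely different route from the paper's. The paper's argument is a global geometric one: fix $s_1 \le s_2$, let $A_{s_1}$ be the Neyman--Pearson rejection region for testing $X$ vs.\ $X + s_1 v$ at level $\alpha$, and consider the translated set $A_{s_1} + (s_2 - s_1)v$ as a (possibly suboptimal) test for the $s_2$-problem. Translation invariance makes its type-II error equal to the optimal type-II error in the $s_1$-problem, and the key step is the inclusion $A_{s_1} + (s_2 - s_1)v \subseteq A_{s_1}$, which follows because $A_{s} = \{x : \varphi(x) - \varphi(x - sv) > h\}$ and convexity of $\varphi$ in $\R^n$ makes $\varphi(x + \sigma v) - \varphi(x + \sigma v - sv)$ nondecreasing in $\sigma$. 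Your approach instead disintegrates along the orthogonal complement of the shift direction, exploiting the fact that $X_\perp$ is a sufficient ancillary for the shift, reduces to the one-dimensional monotone-likelihood-ratio property of log-concave location families (via log-concavity of slices), and lifts back by mixing conditional Neyman--Pearson tests. Both arguments ultimately rest on the convexity of $\varphi$, but the paper uses it globally to obtain a set inclusion, whereas you use it one-dimensionally after conditioning. The paper's proof is slicker and entirely avoids disintegration and measurable selection; yours is more modular, reducing to a textbook 1D fact (MLR of log-concave location families), at the cost of the measurable-selection bookkeeping you already flag and of verifying that the conditional ROC formula $F_w(F_w^{-1}(1-\alpha) - s)$ is valid (which it is, since log-concave 1D conditionals are atomless so threshold tests suffice). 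Both methods prove the statement in the same generality.
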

Since larger ROC function means more privacy, this lemma confirms that $M^t$ gains privacy as $t$ increases. In other words, it confirms the existence of a ``privacy-accuracy trade-off'' given the log-concavity of $X$.
Note that without log-concavity, monotonicity in the lemma need not hold. For a one-dimensional example, consider an $X$ that supports on even numbers and $f(D)=0,f(D')=2$. When $t=2$, $T[M^t(D),M^t(D')]=T[2X,2X+2]=T[X,X+1]$. There is no privacy in this case as $X$ and $X+1$ have completely disjoint support. On the other hand, when $t=1$, $T[M^t(D),M^t(D')]=T[X,X+2]$ and incurs some privacy. That is, more noise does not imply more privacy, hence violating the conclusion of \Cref{lem:monotone}.

In summary, results in this section show that mechanisms adding noise that satisfies \ref{d1} and \ref{d2} (e.g. densities $\propto \e^{-\|x\|_p^\alpha}$) behave like a Gaussian mechanism. Changing the noise in this class does not change the ROC function by much. Hence we cannot repeat the success at fully utilizing the $(\ep,\delta)$ privacy budget as in \Cref{sec:preliminary}.

On the other hand, our CLT involves Fisher information, and hence gives us the opportunity to relate to the (arguably) most successful tool for constant-sharp lower bound --- the Cramer--Rao inequality. This will be the content of the next section.

\section{Privacy-Accuracy Trade-off via Uncertainty Principles} 
\label{sub:uncertainty}
The central limit theorem in the previous section suggests that we use the GDP parameter $\mu$ to measure privacy. Adopting this, we will show that the  privacy-accuracy trade-off is naturally characterized by the Cramer--Rao lower bound. The conclusion has a similar flavor to the uncertainty principles.

%


Recall that the mechanism $M(D)=f(D)+tX_\varphi$ is determined by two ``parameters'': the shape parameter $\varphi\in \mathfrak{F}_n$, which determines the distribution of $X_\varphi$, and the scale parameter $t$. If $\varphi$ also satisfies the conditions of \Cref{thm:main}, then we can use the desired (asymptotic) GDP parameter $\mu$ to determine the scale parameter, i.e., $t={\mu}^{-1}\cdot \sqrt{\|\Fisher_\varphi\|_2}$. Using the equivalent parametrization $(\mu,\varphi)$, the corresponding mechanism $M_{\mu,\varphi}$ is given by
\begin{equation}\label{eq:mech}
	M_{\mu,\varphi}(D)=f(D)+{\mu}^{-1}\cdot \sqrt{\|\Fisher_\varphi\|_2}\cdot X_\varphi.
\end{equation}

\newcommand{\myrow}[6]{ #1 & #3 & #2 & #4 & #5 & #6 \\ \hline}
\begin{table*}[!t]
	\caption{Explicit expressions of Fisher information and mean squared error.
	}	\label{tab:compare}
	\centering
	\begin{tabular}{|c|c|c|c|c|c|}
	\hline
	\myrow{Density}{$\E\|X\|_2^2$}{$\|\Fisher_\varphi\|_2$}{$\E\|X\|_2^2\cdot\|\Fisher_\varphi\|_2$}{$\E\|X\|_\infty^2$}{$\E\|X\|_\infty^2\cdot\|\Fisher_\varphi\|_2$}
	\myrow{$\propto \e^{-\|x\|_1}$ }{ $2n$ }{ $1$ }{ $2n$}{$\sim (\log n)^2$}{$\sim (\log n)^2$}
	\myrow{$\propto \e^{-\|x\|_2}$ }{ $n(n+1)$ }{ $\frac{1}{n}$ }{ $n+1$}{$\sim 2n\log n$}{$\sim 2\log n$}
	\myrow{$\propto \e^{-\|x\|_2^2}$ }{ $\frac{1}{2}n$ }{ $2$ }{ $n$}{$\sim \log n$}{$\sim 2\log n$}
	\myrow{$\propto \e^{-\|x\|_p^\alpha}$ }{Lemma \ref{thm:variance}}{Lemma \ref{thm:variance}}{$\sim C_p\cdot n$}{Appendix}{$\leqslant C_p'\cdot(\log n)^{\frac{2}{p}}$}
	\end{tabular}
\end{table*}
As we have explained in the introduction, one way to measure the accuracy of the mechanism is the mean squared error of the noise
\begin{equation}\label{eq:err}
	\err(M_{\mu,\varphi})=\E\|tX_\varphi\|_2^2={\mu}^{-2}\cdot {\|\Fisher_\varphi\|_2}\cdot \E\|X_\varphi\|_2^2.
\end{equation}
The following theorem characterizes the privacy-accuracy trade-off as the product of the mean squared error $\err(M_{\mu,\varphi})$ and privacy parameter $\mu^2$.
\begin{theorem}[Restatement of \Cref{thm:l2}]\label{thm:CR}
If $\varphi\in\mathfrak{F}_n$, then for any mechanism $M_{\mu,\varphi}$ defined as in \eqref{eq:mech}, we have
	$$\mu^2\cdot \err(M_{\mu,\varphi})\geqslant n.$$
	In addition, the equality holds if the added noise $X$ is $n$-dimensional standard Gaussian.
\end{theorem}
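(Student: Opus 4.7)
The plan is to reduce the claim to a direct application of the multivariate Cramér--Rao inequality. Plugging in \eqref{eq:err} causes the $\mu$'s to cancel, so the desired bound $\mu^2\cdot\err(M_{\mu,\varphi})\geqslant n$ is equivalent to the clean inequality
\[
	\|\Fisher_\varphi\|_2\cdot \E\|X_\varphi\|_2^2 \,\geqslant\, n
\]
between the operator norm of the Fisher information matrix and the second moment of the noise, and everything after this reduction is a statement purely about the distribution of $X_\varphi$.

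Next, I view $X_\varphi$ as a sample from the location family $\{p_\theta(x)\propto \e^{-\varphi(x-\theta)}:\theta\in\R^n\}$ at $\theta=0$. Evenness $\varphi(x)=\varphi(-x)$ yields $\E X_\varphi=0$, so $T(X)=X$ is an unbiased estimator of $\theta$. Because the family is location, the Fisher information at every $\theta$ equals $\Fisher_\varphi$; the QMD property noted in the footnote of the excerpt supplies the regularity to invoke the multivariate Cramér--Rao bound
\[
	\mathrm{Cov}(X_\varphi) \,\succeq\, \Fisher_\varphi^{-1}.
\]
Taking traces and using $\E X_\varphi=0$,
\[
	\E\|X_\varphi\|_2^2 \,=\, \tr \mathrm{Cov}(X_\varphi) \,\geqslant\, \tr \Fisher_\varphi^{-1} \,=\, \sum_{i=1}^n \lambda_i^{-1} \,\geqslant\, \frac{n}{\|\Fisher_\varphi\|_2},
\]
where $\lambda_1\geqslant\cdots\geqslant\lambda_n>0$ are the eigenvalues of $\Fisher_\varphi$ and the last inequality uses $\lambda_i\leqslant \lambda_1=\|\Fisher_\varphi\|_2$ termwise. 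This is exactly the reduced inequality. For the equality case, the standard Gaussian $X\sim \N(0,I_{n\times n})$ has $\mathrm{Cov}(X)=I_{n\times n}$ and $\Fisher=I_{n\times n}$, so $\|\Fisher\|_2\cdot\E\|X\|_2^2=1\cdot n=n$.

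I do not foresee a serious obstacle. The only ingredient requiring more than one line of justification is the applicability of multivariate Cramér--Rao to log-concave location families, which is covered by the QMD property already used elsewhere in the paper. A small sanity check is that $\Fisher_\varphi$ is positive definite whenever $\varphi\in\mathfrak{F}_n$: if it had a zero eigenvalue, the Cramér--Rao bound would force $\E\|X_\varphi\|_2^2=\infty$, contradicting the finite-second-moment condition built into $\mathfrak{F}_n$. So the chain of inequalities above is non-vacuous and sharp precisely at the Gaussian.
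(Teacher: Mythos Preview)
Your proof is correct and follows essentially the same route as the paper: reduce to $\|\Fisher_\varphi\|_2\cdot\E\|X_\varphi\|_2^2\geqslant n$, invoke Cram\'er--Rao for the location family to get $\mathrm{Cov}(X_\varphi)\succeq\Fisher_\varphi^{-1}$, and take traces. The only cosmetic difference is the final elementary step: the paper detours through the slightly stronger inequality $\E\|X\|_2^2\cdot\tr\Fisher_\varphi\geqslant n^2$ via Cauchy--Schwarz (which it later reuses for the uncertainty-principle analogy), whereas you bound $\sum\lambda_i^{-1}\geqslant n/\lambda_1$ directly.
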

\begin{proof}[Proof of \Cref{thm:CR}]
To simplify notations we will drop the subscript $\varphi$ in $X$.
By \eqref{eq:err}, we see that $\mu^2\cdot \err(M_{\mu,\varphi}) =  \E\|X\|_2^2\cdot {\|\Fisher_\varphi\|_2}$. So it suffices to show
\begin{equation}\label{eq:goal}
	\E\|X\|_2^2\cdot {\|\Fisher_\varphi\|_2}\geqslant n.
\end{equation}
We prove a slightly stronger result, namely
\begin{equation}\label{eq:UP}
	\E\|X\|_2^2\cdot {\tr\Fisher_\varphi}\geqslant n^2.
\end{equation}
\Cref{eq:goal} (and hence \Cref{thm:CR}) is a direct consequence of \eqref{eq:UP}. To see this, it suffices to show that $\|\Fisher_\varphi\|_2\geqslant \frac{1}{n}\tr\Fisher_\varphi$. Let $\lambda_1\geqslant\cdots\geqslant\lambda_n> 0$ be the eigenvalues of $\Fisher_\varphi$. Then
$$\|\Fisher_\varphi\|_2=\lambda_1\geqslant \tfrac{1}{n}(\lambda_1+\cdots+\lambda_n)=\tfrac{1}{n}\tr\Fisher_\varphi.$$
Next we focus on the proof of \eqref{eq:UP}. Consider the location family $\{X+\theta:\theta\in\R^n\}$. The Fisher information of this family is $\Fisher_\varphi$ at all $\theta$. The random vector itself is an unbiased estimator of the location. Therefore, by the Cramer--Rao inequality \citet{van2000asymptotic}, we have that $\mathrm{Cov}(X)- \Fisher_\varphi^{-1}$ is positive semi-definite. As a consequence,
\begin{align*}
\E\|X\|_2^2
		=\tr\E[XX^T]&\geqslant\tr\Fisher_\varphi^{-1}=\lambda_1^{-1}+\cdots+\lambda_n^{-1}.
\end{align*}
Therefore, by the Cauchy--Schwarz inequality,
	\begin{align*}
		\E\|X\|_2^2\cdot {\tr\Fisher_\varphi}&\geqslant
		(\lambda_1^{-1}+\cdots+\lambda_n^{-1})(\lambda_1+\cdots+\lambda_n)\geqslant n^2.
	\end{align*}
Thus, we prove \eqref{eq:UP} and obtain the desired result.
\end{proof}

We then would like to point out the similarity between \Cref{thm:CR} and the \textit{Uncertainty Principles}. Intuitively, they both claim that two desired quantities cannot be small simultaneously by showing a lower bound on their product: \Cref{thm:CR} claims that for noise-addition mechanisms, the mean squared error and the privacy parameter cannot be small simultaneously, while the (Hesenberg) uncertainty principle claims that for a physical system, the uncertainty in position and the uncertainty of momentum cannot be small simultaneously. A more precise comparision can be seen with the following presentation.

We abuse the notation $\Var[X]$ to denote the mean squared distance of a random vector $X\in\R^n$ from its expectation, i.e., $\Var[X] = \E[\|X-\E X\|_2^2]$.
For $\varphi\in\mathfrak{F}_n$, we have $\E X_\varphi=0$ and $\E\nabla\varphi(X_\varphi)=0$.
Therefore, $\Var[X] = \E\|X\|_2^2$ and $\Var[\nabla\varphi(X)] = \tr\,\Fisher_\varphi$. Hence \eqref{eq:UP} becomes
\begin{equation*}
		\Var [X_\varphi]\cdot\Var[\nabla\varphi(X_\varphi)]\geqslant n^2.
\end{equation*}
On the other hand, there are various mathematical manifestations of the uncertainty principle. The one behind the Hesenberg uncertainty principle is that a function and its Fourier transform cannot both be localized simultaneously. Specifically, for a function $f\in L^2(\R^n)$, its Fourier transform is defined as $\hat{f}(\xi) = \int\e^{-2\pi i\langle\xi,x\rangle}f(x)\diff x$. Fourier transform is unitary, i.e., $\|f\|_{L^2}=\|\hat{f}\|_{L^2}$. In particular, if $|f|^2$ is a probability density, then so is $|\hat{f}|^2$. Our previous abuse of notation also applies here, for example, $\Var[|f|^2]=\int(x-a)^T(x-a)|f(x)|^2\diff x$ where $a=\int x|f(x)|^2\diff x$.
For $\|f\|_{L^2}=1$, we have the following result\footnote{The Hessenberg uncertainty principle is a direct consequence of \Cref{eq:hesenberg} and the fact that the position operator and momentum operator are conjugate of each other via Fourier transform.}
, stated as Corollary 2.8 of \citep{folland1997uncertainty}.
\begin{align}\label{eq:hesenberg}
	\Var[|f|^2]\cdot \Var[|\hat{f}|^2]\geqslant\tfrac{n^2}{16\pi^2}.
\end{align}
This similarity suggests that \Cref{thm:CR} can be considered as yet another manifestation of the uncertainty principle.

Note that although \Cref{thm:CR} holds true for very general $\varphi$, the interpretation that $\mu$ is the asymptotic privacy parameter only holds for distributions that satisfy \ref{d1} and \ref{d2}. Therefore, let us consider the special case where $\varphi(x)=\|x\|_p^\alpha$. The corresponding $X_\varphi$ will be denoted by $X_{p,\alpha}$ and $\Fisher_\varphi$ by $\Fisher_{p,\alpha}$. In this special case, we can compute the quantities in \eqref{eq:goal} exactly. In the following lemma, we write $a_n\sim b_n$ for the two sequences $a_n$ and $b_n$ if $\frac{a_n}{b_n}\to1$ as $n\to\infty$.
\begin{restatable}[]{lemma}{var}
\label{thm:variance}
	For $1\leqslant p<\infty$ and $1\leqslant \alpha<\infty$, as $n\to\infty$, we have
\begin{align*}
	\E\|X_{p,\alpha}\|_2^2
	&\sim n^{\frac{2}{\alpha}-\frac{2}{p}+1} \cdot \alpha^{-\frac{2}{\alpha}}\cdot p^{\frac{2}{p}}\cdot\frac{\Gamma(\tfrac{3}{p})}{\Gamma(\frac{1}{p})}\\
	\Fisher_{p,\alpha}
	&\sim n^{\frac{2}{p}-\frac{2}{\alpha}}\cdot\alpha^{\frac{2}{\alpha}}\cdot p^{2-\frac{2}{p}}\cdot\frac{\Gamma(2-\frac{1}{p})}{\Gamma(\frac{1}{p})} \cdot  I_{n\times n}.
\end{align*}
\end{restatable}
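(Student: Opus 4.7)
The plan is to exploit the $\ell_p$-polar decomposition $X_{p,\alpha} = R\Theta$, where $R = \|X_{p,\alpha}\|_p$ has density $\propto r^{n-1}e^{-r^\alpha}$ on $[0,\infty)$, where $\Theta$ is uniform on the $\ell_p$-sphere, and where $R \perp \Theta$. This holds because the joint density depends on $x$ only through $\|x\|_p$. A single substitution $u = r^\alpha$ gives
\[
\E R^k = \frac{\Gamma((n+k)/\alpha)}{\Gamma(n/\alpha)},
\]
and the Stirling ratio estimate $\Gamma(n/\alpha + c)/\Gamma(n/\alpha) \sim (n/\alpha)^c$ handles every such ratio asymptotically. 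For the angular piece I would use the Schechtman--Zinn representation: if $Y = (Y_1,\ldots,Y_n)$ has i.i.d.\ coordinates with density $\propto e^{-|y|^p}$, then $Y/\|Y\|_p$ is distributed as $\Theta$ and is independent of $\|Y\|_p$. Independence yields, for every $k \ge 0$,
\[
\E|\Theta_1|^k = \frac{\E|Y_1|^k}{\E\|Y\|_p^k},
\]
whose numerator is an elementary one-dimensional Gamma integral and whose denominator is a Gamma ratio of the same form as above (with $\alpha = p$).

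With these ingredients the first asymptotic follows quickly: permutation symmetry gives $\E\|X_{p,\alpha}\|_2^2 = n\,\E X_1^2 = n\,\E R^2 \cdot \E\Theta_1^2$, and plugging in the explicit Gamma ratios with $k=2$ and letting $n \to \infty$ produces the claimed exponent $2/\alpha - 2/p + 1$ and the prefactor $\alpha^{-2/\alpha} p^{2/p}\,\Gamma(3/p)/\Gamma(1/p)$. For the Fisher information I would first compute
\[
\nabla\|x\|_p^\alpha = \alpha\,\|x\|_p^{\alpha - p}\bigl(\sign(x_i)|x_i|^{p-1}\bigr)_{i=1}^n,
\]
then use the sign-flip symmetries $x \mapsto (x_1,\ldots,-x_i,\ldots,x_n)$ of the density to conclude that $\Fisher_{p,\alpha}$ is diagonal, and permutation symmetry to conclude it is a scalar multiple of $I_{n\times n}$. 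The diagonal entry factors through the polar decomposition as
\[
\alpha^2 \cdot \E R^{2\alpha - 2} \cdot \E|\Theta_1|^{2p-2},
\]
and each factor is evaluated by the recipe above, producing the claimed exponent $2/p - 2/\alpha$ and prefactor $\alpha^{2/\alpha} p^{2 - 2/p}\,\Gamma(2 - 1/p)/\Gamma(1/p)$.

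The main obstacle is essentially bookkeeping: the Gamma exponents $n/\alpha$, $(n+2)/\alpha$, $n/p$, $(n+2p-2)/p$, $\ldots$ must combine cleanly so that the powers of $n$ and the prefactors in $p,\alpha$ line up with the stated formulas, and one must verify that the Stirling remainders in the four or five Gamma ratios multiply to $1 + o(1)$ uniformly as $n \to \infty$. The one nontrivial input is the Schechtman--Zinn polar decomposition of the $\ell_p$-ball, which is what decouples angular and radial moments and reduces everything to one-dimensional integrals; everything else is calculus and Stirling.
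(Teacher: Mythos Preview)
Your proposal is correct and follows essentially the same strategy as the paper. Both arguments hinge on an $\ell_p$-polar decomposition of $X_{p,\alpha}$ into independent radial and angular parts, and both compute the angular moments by reducing to the product case $\alpha=p$ (i.e., to i.i.d.\ coordinates with density $\propto e^{-|y|^p}$). The paper phrases the radial part as $t^{1/\alpha}\cdot r$ with $t\sim\Gamma(n/\alpha+1,1)$ and $r$ having density $nx^{n-1}$ on $[0,1]$ (its Corollary on the uniform sphere representation), whereas you collapse this into a single $R$ with density $\propto r^{n-1}e^{-r^\alpha}$; and the paper extracts the angular moments by solving for $\E\|V_p\|_2^2$ and $\E\|U_p\|_{2p-2}^{2p-2}$ from the known $\alpha=p$ case, whereas you invoke Schechtman--Zinn directly to write $\E|\Theta_1|^k=\E|Y_1|^k/\E\|Y\|_p^k$. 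These are the same computation in slightly different packaging; your version is arguably a bit more streamlined, but there is no substantive difference in the underlying idea.
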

This result put \Cref{thm:CR} into a more concrete context.
Some important cases with specific values of $p$ and $\alpha$ are worked out in \Cref{tab:compare}. Remarkably, in the last row, the products that characterize the privacy-accuracy trade-off are asymptotically independent of $\alpha$.
As a by-product of this calculation, we also derive the expression for the isotropic constant of the $n$-dimensional $\ell_p$ ball, which is an important concept in convex geometry. See the appendix for more results and discussion and \citep{brazitikos2014geometry} for more on the subject of convex geometry.

Alternatively, we may want to measure the accuracy by the expected squared $\ell_\infty$-norm of the noise. A similar argument suggests that we should consider the following quantity
$$\E\|X_{\varphi}\|_\infty^2\cdot\|\Fisher_{\varphi}\|_2.$$
By \eqref{eq:goal} and the fact that $\|x\|_\infty\geqslant\frac{1}{\sqrt{n}}\|x\|_2$, we have
\begin{equation}\label{eq:newCR}
\E\|X_{\varphi}\|_\infty^2\cdot\|\Fisher_{\varphi}\|_2
	\geqslant\tfrac{1}{n}
	\E\|X_{\varphi}\|_2^2\cdot\|\Fisher_{\varphi}\|_2\geqslant 1.
\end{equation}
We would like to point out a connection to a recently resolved open problem proposed in \citep{SteinkeUl17}, asking if there is a DP algorithm that answers a high-dimensional query with $\ell_2$-sensitivity 1 with $O(1)$ error in $\ell_\infty$ norm. In particular, recent solutions \citep{dagan2020bounded,ghazi2020avoiding} provide strong evidence that the lower bound in~\eqref{eq:newCR} is tight up to a constant factor.

\section{Conclusions and Future Work} 
\label{sec:conclusion}
In this work, we study constant-sharp optimality of noise-addition algorithms for high-dimensional query answering with differential privacy. We demonstrate that the ROC function offers good insight in the design of such algorithms when the dimension $n=1$. However, when $n$ is large, the CLT shows that a large class of noise-addition mechanisms behaves like a Gaussian mechanism through the lens of the ROC function. On the one hand, this suggests that $(\ep,\delta)$ privacy budget cannot be fully utilized, while on the other hand, if we adopt a GDP budget, then an ``uncertainty principle'' style of result yields an elegant characterization of the precise privacy-accuracy trade-off, and justifies the constant-sharp optimality of the Gaussian mechanism. We believe the insights offer a novel perspective to the long-lived privacy-accuracy trade-off question.

Various extensions are possible. An immediate one is to extend the CLT to a broader class of noise distributions, such as log-concave distributions as specified in Conjecture \ref{conj}. This may require significant machinary from convex geometry. For non-log-concave noise, \Cref{lem:monotone} suggests that a corresponding log-concave noise with no less privacy and accuracy may exist. For algorithms beyond noise-addition, \citet{bhaskara2012unconditional} shows that they can be reduced to a noise-addition mechanism with better accuracy and slightly worse privacy.

Another exciting question is to search for distributions such that \eqref{eq:newCR} is tight, where \citet{dagan2020bounded,ghazi2020avoiding} can serve as good hints. Since \citet{dagan2020bounded} uses bounded noise, a CLT compatible with bounded log-concave noise is also desirable.

\section*{Acknowledgements}
We thank Jason Hartline, Yin-tat Lee, Haotian Jiang, Qiyang Han, Sasho Nikolov, and Yuansi Chen for helpful comments on earlier versions of the manuscript. W.~J.~S.~was supported in part by NSF through CCF-1763314 and CAREER DMS-1847415, an Alfred Sloan Research Fellowship, and a Facebook Faculty Research Award. L.~Z.~was supported in part by NSF through DMS-2015378.

\bibliography{privacy}
\bibliographystyle{alpha}


\begin{appendices}
\section*{Appendices Overview} 
\label{sec:overview}

In Appendix \ref{sub:details_of_empirical_trade_off_functions} we provide the detail of the numerical experiments. The central limit theorem \ref{thm:main} (normalized) and \ref{cor:main} (general case) are proved in \Cref{sub:proofs_in_sec:clt}. \Cref{sub:proofs_in_sub:uncertainty} proves \Cref{thm:variance} and provide some additional results that apply beyond norm powers. The proof of \Cref{lem:normpower}, which verifies that norm powers satisfy the technical conditions \ref{d1} and \ref{d2}, requires results in \Cref{sub:proofs_in_sub:uncertainty} and takes significant effort, so we dedicate the entire \Cref{sec:proof-case-x_palpha} to it.
\section{Numerical Verification of the Central Limit Theorem} 
\label{sub:details_of_empirical_trade_off_functions}
This section discusses the details of the numerical experiments shown in \Cref{fig:N} (repeated below) that verifies our central limit theorem.
\begin{figure}[!h]
    \centering
    \includegraphics[width=\textwidth]{empiricalfigure}
    \vskip-5pt
    \caption*{\Cref{fig:N}: Fast convergence to GDP as claimed in \Cref{thm:informal}. Blue solid curves indicate the true privacy (i.e. ROC functions, see \Cref{sec:preliminary} for details) of the noise addition mechanism considered in \Cref{thm:informal}. Red dashed curves are GDP limit predicted by our CLT. In all three panels the dimension $n=30$. In order to show that our theory works for general $\ell_p$-norm to the power $\alpha$, we pick them to be famous mathematical constants, namely $p=\pi,\alpha=\e$ and the coeffcient being the Euler–Mascheroni constant $\gamma$.}
\end{figure}
The mechanism in consideration is $M(D)=f(D)+tX$ where the $n$-dimensional random vector $X$ has density $\propto \e^{-\|x\|_p^\alpha}$. We want to demonstrate that when $t={\mu^{-1}}\cdot \sqrt{\|\Fisher_{p,\alpha}\|_2}={\mu^{-1}}\cdot n^{\frac{1}{p}-\frac{1}{\alpha}}\cdot\alpha^{\frac{1}{\alpha}}\cdot p^{1-\frac{1}{p}}\cdot\sqrt{\frac{\Gamma(2-\frac{1}{p})}{\Gamma(\frac{1}{p})}}\cdot 1+(o(1))$ we have
    \[\inf_{D,D'} T[M(D),M(D')]\approx G_\mu.\]
The infimum is taken over $D,D'$ such that the direction of $f_n(D)-f_n(D')$ is in a large subset $E_n$ of the unit sphere. It is hard to evaluate the infimum even numerically, but it turns out that the infimum is equal to $\inf_{v\in E_n} T[tX,tX+v]$. This is the first part of the proof of \Cref{thm:informal}. 

Therefore, it suffices to evaluate $T(tX,tX+v)$ and compare with the GDP function $G_\mu$, but the high-dimensional nature of $X$ prevents exact evaluation, so we will introduce a Monte Carlo approach.

It suffices to evaluate $T(tX,tX+v)$ and compare with the GDP function $G_\mu$, but the high-dimensional nature of $X$ prevents exact evaluation, so we will introduce a Monte Carlo approach.

\paragraph{Empirical ROC Function} 
\label{par:empirical_trade_off_function}
$X$ has density $\propto\e^{-\varphi(x)}$ and $X+v$ has density $\propto\e^{-\varphi(x-v)}$. The log likelihood ratio is $\varphi(x)-\varphi(x-v)$. Thresholding it at $h$ yields the following type I and type II errors
\begin{align*}
    \alpha(h) &= \P_{x\sim X}(\varphi(x)-\varphi(x-v)\geqslant h)=\P(\varphi(X)-\varphi(X-v)\geqslant h)\\
    \beta(h) &= \P_{x\sim X+v}(\varphi(x)-\varphi(x-v)< h)=\P(\varphi(X+v)-\varphi(X)< h)
\end{align*}
Once we have these, $T(X,X+v)$ can be obtained by eliminating $h$ and express $\beta$ as a function of $\alpha$. These two probabilities can be computed by a simple Monte Carlo approach. First 
We can sample $\{x_1,\ldots, x_N\}$ as i.i.d. copies of $X$. Let
\begin{align*}
    a_i &= \varphi(x_i-v)-\varphi(x_i)\\
    b_i &= \varphi(x_i+v)-\varphi(x_i)\\
    \hat{\alpha}(h) &= \frac{1}{N} \cdot\#\{a_i\leqslant-h\}\\
    \hat{\beta}(h) &= \frac{1}{N}\cdot \#\{b_i< h\}
\end{align*}
We only evaluate on a discrete set of $h$ such that the corresponding $\alpha$ forms a uniform grid $\{\frac{1}{N},\ldots, \frac{N}{N}\}$. Let $h_j = -a_{(j)}$ where $a_{(1)}\leqslant\cdots \leqslant a_{(N)}$ are order statistics of $a_i,i=1,2,\ldots, N$. Then for $j=0,1,2,\ldots, N$,
\begin{align*}
     \hat{\alpha}_j &= \hat{\alpha}(h_j)=\frac{1}{N}\cdot \#\{a_i\leqslant a_{(j)}\}=\frac{j}{N}\\
     \hat{\beta}_j &= \hat{\beta}(h_j) = \frac{1}{N}\cdot \#\{b_i< a_{(j)}\}
\end{align*}
Let $\hat{T}_N(X,X+v)$ be the function that linearly interpolates the values $\hat{\beta}_0,\ldots, \hat{\beta}_N$ at $\frac{0}{N},\ldots, \frac{N}{N}$. As a direct consequence of the well-known Glivenko--Cantelli theorem, we have
\[\|\hat{T}_N(X,X+v)-{T}(X,X+v)\|_\infty
\to 0 \text{ almost surely.}\] 

\paragraph{Evaluating the Fisher Information} 
Note that it is numerically infeasible to use the exact expression in \Cref{thm:variance}, since Gamma function grows extremely fast (of course it does, as an interpolation of the factorial). In practice, we find the asymptotic expression in \Cref{thm:variance} works extremely well.

\begin{figure}[!h]
    \centering
    \includegraphics[width=\textwidth]{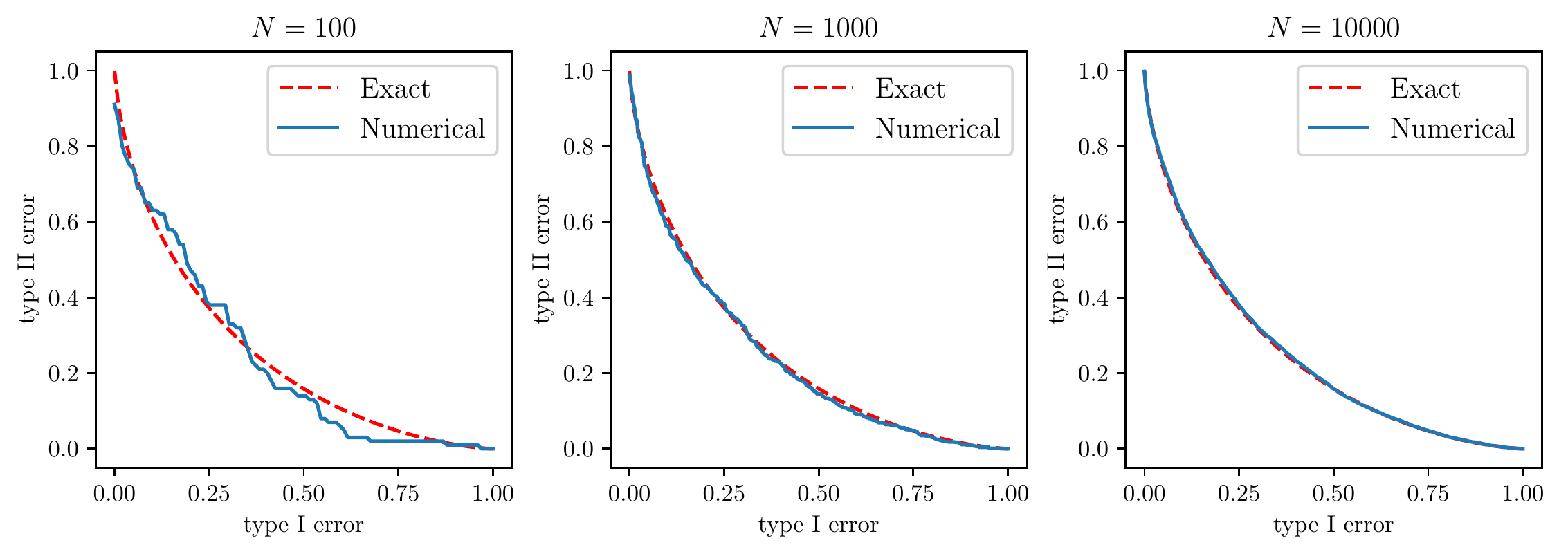}
    \vskip-5pt\caption{The effect of sample size $N$. We see that $N=10000$ provides a good estimate.
    The red dashed curve corresponds to Gaussian noise $\varphi(x)=\frac{1}{2}\|x\|_2^2$, which is why we can perform exact computation. The dimension $n=30$.}
    \label{fig:sample}
\end{figure}

Next we present the algorithm that samples an $n$-dimensional random vector whose density is $\propto\e^{-\|x\|_p^\alpha}$.
\begin{algorithm}[!htb]
    \caption{\texttt{Sample $\propto\e^{-\|x\|_p^\alpha}$}}\label{algo:sample}
    \begin{algorithmic}[1]
    \State{\bf Input:} $p,\alpha$ and dimension $n$
        \Statex Generate $t\sim \Gamma(\frac{n}{\alpha}+1,1)$
        \Statex Generate $\xi_i,i=1,2,\ldots, n$ i.i.d. from $\Gamma(\frac{1}{p},1)$
        \Statex Generate random vector $\bm x\in\R^n$ where $x_i=\epsilon_i\cdot\xi_i^{\frac{1}{p}}$, $\epsilon_i$ are Rademacher random variables (unbiased coin flips) independent from everything else.
        \Statex Generate $r\sim U[0,1]$
        \Statex Let $V=r^{\frac{1}{n}}\cdot \frac{x}{\|x\|_p}$
        \State {\bf Output} $t\cdot V$
    \end{algorithmic}
\end{algorithm}

It is easy to see its correctness from \Cref{thm:uniform} and \cite{calafiore1998uniform}.

\newif\ifhideproofs

\ifhideproofs
\let\proof\hide
\let\endproof\endhide
\fi

\section{Proof of \Cref{thm:main} and \ref{cor:main}} 
\label{sub:proofs_in_sec:clt}

In this section we first prove the normalized central limit theorem \ref{thm:main} and then the general case theorem \ref{cor:main}. Recall that in normalized CLT,
the mechanism in consideration is $M_n(D)=f_n(D)+ X_{n}$, where $X_n$ has density $\propto\e^{-\varphi_n}$ with Fisher information $\Fisher_n$ being the identity matrix $I_{n\times n}$.
\main*
\begin{enumerate}[label=(D\arabic*)]
    \item$\KS(P_v^{\varphi_n}(X_n),v^T\nabla\varphi_n(X_n)=o(1)$ with probability at least $1-o(1)$ over $v\sim S^{n-1}$
	\item$\|\nabla\varphi_n(X_n)\|_2=\sqrt{n}\cdot(1+o_{P}(1))$
\end{enumerate}

\begin{proof}[Proof of \Cref{thm:main}]
For clarity, in the proof we drop the subscript $n$ unless the limit $n\to\infty$ is taken.
First we show that 
\[\inf_{D,D'} T[M(D),M(D')]=\inf_{v\in E_n} T[X,X+v].\]
Notice that
\begin{align*}
	T[M(D),M(D')]
	=T[f(D)+X,f(D')+X]&=T[X,X+f(D')-f(D)]
\end{align*}
Consider the vector $f(D')-f(D)$. Let $v=\tfrac{f(D')-f(D)}{\|f(D')-f(D)\|_2}$ be its direction and $r$ be its length. We have $f(D')-f(D)=rv$. Since $f$ has $\ell_2$-sensitivity 1, we have $r\in[0,1]$.
The infimum over $D,D'$ can be taken in two steps: first over $v$ and then over $r$. That is,
\begin{align*}
\inf_{D,D'} T[M(D),M(D')] = \inf_{v\in E_n} \inf_{r\in[0,1]} T[M(D),M(D')]
= \inf_{v\in E_n} \inf_{r\in[0,1]} T[X,X+rv]
\end{align*}
By \Cref{lem:monotone}, $T[X,X+rv]$ is pointwise monotone decreasing in $r$, so for the inner infimum we have $\inf_{r\in[0,1]} T[X,X+rv] = T[X,X+v]$.
Back to the limiting conclusion, it suffices to show that for all $v\in E_n$,
\[\|T[X_n,X_n+v]-G_1\|_\infty\leqslant c_n.\]
To prove this, we use the following lemma
\begin{lemma}\label{lem:ROC_compute}
Suppose random vector $X$ has density $\propto \e^{-\varphi}$ where $\varphi\in\mathfrak{F}_n$. Let $F_v$ be the CDF of the likelihood projection
$P_v^{\varphi}(X)=\varphi(X+v)-\varphi(X)-\frac{1}{2}v^T\Fisher_\varphi v$, then for any $v\in\R^n$,
	\[
		T[X,X+v](\alpha)= F_v\big(-F_{-v}^{-1}(\alpha)-v^T\Fisher_\varphi v\big).
	\]
\end{lemma}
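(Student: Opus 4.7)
The plan is to apply the Neyman--Pearson lemma to the simple versus simple testing problem $H_0: \text{output} \sim X$ against $H_1: \text{output} \sim X+v$, and then rewrite the resulting type I and type II error expressions in terms of the likelihood projection CDFs $F_v$ and $F_{-v}$.

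First I would identify the densities. Under $H_0$, the density is proportional to $\e^{-\varphi(x)}$; under $H_1$, it is proportional to $\e^{-\varphi(x-v)}$ since $X+v$ is a translate of $X$. The common normalizing constant cancels, so the log-likelihood ratio is exactly $\log(p_1/p_0)(x) = \varphi(x) - \varphi(x-v)$. By Neyman--Pearson, every optimal-level test rejects $H_0$ on $\{\varphi(x) - \varphi(x-v) > h\}$ for some threshold $h \in \R$, and the full ROC curve is traced out as $h$ varies.

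Next I would express the two error types using the quantity $P_v^{\varphi}(X) = \varphi(X+v) - \varphi(X) - \tfrac{1}{2}v^T\Fisher_\varphi v$. Note $(-v)^T\Fisher_\varphi(-v) = v^T\Fisher_\varphi v$, so $P_{-v}^{\varphi}(X) = \varphi(X-v) - \varphi(X) - \tfrac{1}{2} v^T\Fisher_\varphi v$. For the type~I error, when the observation is drawn from $X$ I rewrite
\[
\alpha(h) = \P\!\left(\varphi(X)-\varphi(X-v) > h\right) = \P\!\left(P_{-v}^{\varphi}(X) < -h - \tfrac{1}{2}v^T\Fisher_\varphi v\right) = F_{-v}\!\left(-h - \tfrac{1}{2}v^T\Fisher_\varphi v\right),
\]
using that $X$ is absolutely continuous (from $\varphi \in \mathfrak{F}_n$) so the strict inequality and equality events have the same probability. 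For the type~II error, when the observation is drawn from $X+v$, substitute $x = Y+v$ with $Y \stackrel{d}{=} X$ to convert $\varphi(x) - \varphi(x-v)$ into $\varphi(Y+v) - \varphi(Y)$; this gives
\[
\beta(h) = \P\!\left(\varphi(X+v) - \varphi(X) \le h\right) = \P\!\left(P_v^{\varphi}(X) \le h - \tfrac{1}{2}v^T\Fisher_\varphi v\right) = F_v\!\left(h - \tfrac{1}{2}v^T\Fisher_\varphi v\right).
\]

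The last step is to eliminate $h$. Setting $\alpha(h) = \alpha$ and inverting the first display (using that $F_{-v}$ is continuous and strictly increasing on the support of $P_{-v}^\varphi(X)$, which holds because $X$ has a strictly positive log-concave density on $\R^n$) yields $h = -F_{-v}^{-1}(\alpha) - \tfrac{1}{2}v^T\Fisher_\varphi v$. Plugging this into the expression for $\beta(h)$ collapses the two $\tfrac{1}{2}v^T\Fisher_\varphi v$ terms into a single $v^T\Fisher_\varphi v$ offset and produces exactly the claimed formula $T[X,X+v](\alpha) = F_v\!\left(-F_{-v}^{-1}(\alpha) - v^T\Fisher_\varphi v\right)$. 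There is no real obstacle here; the only care needed is sign bookkeeping between $P_v^\varphi$ and $P_{-v}^\varphi$ and a short justification that absolute continuity lets us invert $F_{-v}$ and ignore ties between strict and non-strict inequalities.
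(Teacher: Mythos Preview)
Your proposal is correct and follows essentially the same approach as the paper: apply Neyman--Pearson to obtain the log-likelihood ratio $\varphi(x)-\varphi(x-v)$, express the type I and type II errors at threshold $h$ as $F_{-v}(-h-\tfrac12 v^T\Fisher_\varphi v)$ and $F_v(h-\tfrac12 v^T\Fisher_\varphi v)$, and eliminate $h$. If anything, your write-up is slightly more careful than the paper's in justifying the inversion of $F_{-v}$ and the handling of strict versus non-strict inequalities.
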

Let $H_v^n$ and $F_v^n$ be the CDFs of the linear projection $v^T\nabla\varphi_n(X_n)$ and the likelihood projection $P_v^{\varphi_n}(X_n)=\varphi_n(X_n+v)-\varphi_n(X_n)-\frac{1}{2}v^T\Fisher_n v$. When \Cref{lem:ROC_compute} is applied to $X_n$ and unit vector $v$, we have
\begin{equation}\label{eq:g1}
	T[X_n,X_n+v](\alpha) = F_v^n\big(-(F_{-v}^n)^{-1}(\alpha)-1\big).
\end{equation}
Recall that $G_1(\alpha)=\Phi\big(-\Phi^{-1}(\alpha)-1\big)$ where $\Phi$ is the CDF of standard normal.
Comparing with \eqref{eq:g1}, it suffices to show $F_v^n$ is close to $\Phi$.
To prove this and to take care of the set $E_n\subseteq S^{n-1}$, we use the two conditions \ref{d1} and \ref{d2}, which allow us to apply Sudakov's Theorem (c.f. \cite{klartag2010high}) stated below as \Cref{thm:Sudakov}. In the following, let $\sigma_n$ be the uniform measure (with total measure 1) on the unit sphere $S^{n-1}$.
\begin{lemma}\label{thm:Sudakov}
	Let $Y_n$ be an isotropic random vector in $\R^n$. Assume that there is $a_n\to0$ and
	\begin{equation}\label{eq:thin-shell}
		\P\left(\left|\frac{\|Y_n\|_2}{\sqrt{n}}-1\right|\geqslant a_n\right)\leqslant a_n.
	\end{equation}
Then, there exists $b_n\to 0$ and $\Theta_n\subseteq S^{n-1}$ with $\sigma_n( \Theta_n)\geqslant 1-b_n$, such that
for any $v\in\Theta_n$,
\[
\sup_{t\in\R}|\P(v^TY_n\leqslant t)-\Phi(t)|\leqslant b_n.
\]
\end{lemma}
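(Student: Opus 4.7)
The plan is to show that for most $v\in S^{n-1}$, the distribution of $v^T Y_n$ is close in Kolmogorov--Smirnov distance to $N(0,1)$. I would split the KS distance as
\begin{equation*}
	\sup_t |\P(v^T Y_n \leq t) - \Phi(t)| \leq \sup_t |F_v(t) - \overline{F}_n(t)| + \sup_t |\overline{F}_n(t) - \Phi(t)|,
\end{equation*}
where $F_v(t) = \P(v^T Y_n \leq t)$ and $\overline{F}_n(t) = \int_{S^{n-1}} F_v(t)\, d\sigma_n(v)$ is the spherically averaged CDF. The first term measures how much an individual directional CDF deviates from its average, and the second measures how far the average sits from the Gaussian target. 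I would control each term separately; only the first requires a bad set of directions.

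For the averaged CDF, I would invoke rotational invariance of $\sigma_n$ and Fubini to rewrite
\begin{equation*}
	\overline{F}_n(t) = \P\bigl(\|Y_n\|\cdot V_1 \leq t\bigr),
\end{equation*}
where $V_1$ is the first coordinate of an independent $\sigma_n$-distributed sample. A classical computation (Poincar\'e's lemma for projections of uniform measures on high-dimensional spheres) gives $\sqrt{n}\, V_1 \Rightarrow N(0,1)$ with a quantitative rate in KS. The thin-shell hypothesis \eqref{eq:thin-shell} forces $\|Y_n\|/\sqrt{n} \to 1$ in probability, so Slutsky combined with P\'olya's theorem yields $\sup_t|\overline{F}_n(t) - \Phi(t)| = o(1)$, uniformly in $t$.

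For the directional deviation, I would bound $\Var_{v\sim\sigma_n}[F_v(t)]$ by a two-copy trick: introducing i.i.d.\ copies $Y, Y'$ of $Y_n$,
\begin{equation*}
	\Var_v[F_v(t)] = \E_{Y,Y'}\!\bigl[\E_v[\mathbf 1_{v^T Y \leq t}\,\mathbf 1_{v^T Y' \leq t}]\bigr] - \overline{F}_n(t)^2.
\end{equation*}
The inner spherical integral depends on $(Y, Y')$ only through $\|Y\|$, $\|Y'\|$, and $\langle Y, Y'\rangle$; the thin-shell condition plus a second-moment calculation give $\langle Y,Y'\rangle/(\|Y\|\|Y'\|) = O(1/\sqrt n)$ with high probability, making $v^T Y$ and $v^T Y'$ approximately independent under $\sigma_n$, hence the variance is $o(1)$. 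Chebyshev then gives, for each fixed $t$, an exceptional set of $\sigma_n$-measure $o(1)$. I would then take a grid $t_1<\cdots<t_N$ on which $\Phi$ oscillates by at most $1/N$, union-bound over the $N$ exceptional sets, and fill the gaps using monotonicity of CDFs. Choosing $N \to \infty$ slowly produces a single $\Theta_n$ with $\sigma_n(\Theta_n)\geq 1-b_n$ on which $\sup_t |F_v(t) - \overline{F}_n(t)| \leq b_n$.

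The main obstacle is the variance estimate itself. Because $F_v(t)$ is a discontinuous functional of $v$, standard Lipschitz concentration on the sphere does not apply, and one cannot appeal to L\'evy's inequality directly. The workaround is precisely the Fubini reduction above: it converts the concentration of $F_v(t)$ into a near-orthogonality statement for two independent isotropic vectors in high dimension, a statement that follows from the thin-shell assumption alone and bypasses any smoothness requirement. Unifying the resulting bounds and replacing $b_n$ by the worse of the rates obtained in the two pieces completes the argument.
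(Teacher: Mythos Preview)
The paper does not prove this lemma at all: it is quoted as Sudakov's theorem, with a pointer to \cite{klartag2010high}, and then used as a black box inside the proof of Theorem~\ref{thm:main}. So there is no ``paper's own proof'' to compare against.

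That said, your sketch is a faithful outline of the classical argument for Sudakov's theorem. The two-step decomposition (spherically averaged CDF versus Gaussian, then individual direction versus average) is exactly the standard route. The first step is clean: rotational invariance plus Poincar\'e's lemma for $\sqrt{n}\,V_1$ and Slutsky via the thin-shell assumption do the job. The second step is where the real work hides. Your variance-via-two-copies reduction is correct in spirit, and the near-orthogonality estimate $\langle Y,Y'\rangle/(\|Y\|\|Y'\|)=O_{\P}(n^{-1/2})$ follows from isotropy alone since $\E\langle Y,Y'\rangle^2=\tr\mathrm{Cov}(Y)=n$. The step that needs a bit more than you wrote is turning ``nearly orthogonal'' into ``$(v^TY,v^TY')$ are nearly independent under $\sigma_n$'' with a rate that is uniform in $t$; this is where one typically invokes the two-dimensional version of Poincar\'e's lemma (the first two coordinates of a uniform sphere vector converge jointly to independent Gaussians) together with a perturbation argument in the angle. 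Once that is in hand, the grid-and-monotonicity step is routine. So: correct strategy, with one substantive analytic lemma left implicit.
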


Let $Y_n = \varphi_n(X_n)$. $v^T\nabla\varphi_n(X_n)=v^TY_n$ We know $\E Y_n = \E\nabla\varphi_n(X_n)=0$, and by the normalization of the Fisher information, 
\[\E Y_nY_n^T = \Fisher_n = I_{n\times n}.\]
Therefore, $Y_n$ is isotropic.
	Condition \ref{d2} says $\|Y_n\|_2 = \sqrt{n}\cdot(1+o_{P}(1))$. That is, 
		$\left|\frac{\|Y_n\|_2}{\sqrt{n}}-1\right|=o_{P}(1)$. This implies the existence of $a_n$ in \eqref{eq:thin-shell}. Therefore, we can apply \Cref{thm:Sudakov} to $Y_n=\varphi_n(X_n)$ and conclude that there is $b_n\to 0$ and $\Theta_n\subseteq S^{n-1}$ with $\sigma_n( \Theta_n)\geqslant 1-b_n$, such that
for any $v\in\Theta_n$,
\[
\|H_v^n-\Phi\|_\infty\leqslant b_n.
\]

Condition \ref{d1} says, there is $b_n'\to0$ and $\Omega_n\subseteq S^{n-1}\geqslant b_n'$ with $\sigma_n( \Omega_n)$ such that for all $v\in \Omega_n$,
	\begin{align*}
		\|F_v^n-H_v^n\|_\infty\leqslant b_n'
	\end{align*}

So $v\in \Theta_n\cap\Omega_n$ implies $\|H_v^n-\Phi\|_\infty\leqslant b_n$ and $\|F_v^n-H_v^n\|_\infty\leqslant b_n'$. Therefore, $\|F_v^n-\Phi\|_\infty\leqslant b_n+b_n'$.
Set $E_n=\Theta_n\cap (-\Theta_n)\cap\Omega_n\cap (-\Omega_n)$.
 Then $v\in E_n$ implies
 \[
 \|F_v^n-\Phi\|_\infty\leqslant b_n+b_n'\text{ and } \|F_{-v}^n-\Phi\|_\infty\leqslant b_n+b_n'
 \]
That is, for any $x\in\R$,
\begin{align*}
	\Phi(x)-b_n-b_n'&\,\leqslant F_{ v}^{n}(x)\,\,\leqslant\Phi(x)+b_n+b_n'\\
	\Phi(x)-b_n-b_n'&\leqslant F_{-v}^{n}(x)\leqslant\Phi(x)+b_n+b_n'
\end{align*}
As a consequence of the second inequality,
\[
	\Phi^{-1}(\alpha-b_n-b_n')\leqslant (F_{-v}^{n})^{-1}(\alpha)\leqslant\Phi^{-1}(\alpha+b_n+b_n')
\]
Therefore, when $v\in\E_n$, by \eqref{eq:g1} and the inequalities above,
	\begin{align*}
	T[X_n,X_n+v](\alpha)
	&= F_v^n\big(-(F_{-v}^n)^{-1}(\alpha)-1\big)\\
	&\leqslant F_v^n\big(-\Phi^{-1}(\alpha-b_n-b_n')-1\big)\\
	&\leqslant \Phi\big(-\Phi^{-1}(\alpha-b_n-b_n')-1\big)+b_n+b_n'\\
	&= G_1(\alpha-b_n-b_n')+b_n+b_n'\\
	&\leqslant G_1(\alpha)+C\sqrt{b_n+b_n'}+b_n+b_n'
	\end{align*}
The final step used the Hölder continuity of $G_\mu$ which we state as \Cref{lem:holder} and prove afterwards.
\begin{lemma}\label{lem:holder}
	Let $G_\mu=T[N(0,1),N(\mu,1)]$ for $\mu\geqslant0$. Then $G_\mu$ is $\alpha$-Hölder continuous for any $\alpha<1$.
\end{lemma}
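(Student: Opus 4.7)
The plan is to establish H\"older continuity by controlling the derivative $G_\mu'$ on $(0,1)$. A direct calculation using $G_\mu(\alpha) = \Phi(-\Phi^{-1}(\alpha) - \mu)$ together with $(\Phi^{-1})'(\alpha) = 1/\phi(\Phi^{-1}(\alpha))$ (where $\phi$ denotes the standard normal density) yields
\[
G_\mu'(\alpha) = -\frac{\phi(\Phi^{-1}(\alpha) + \mu)}{\phi(\Phi^{-1}(\alpha))} = -\exp\!\left(-\mu\,\Phi^{-1}(\alpha) - \tfrac{\mu^2}{2}\right).
\]
Since $\Phi^{-1}(\alpha) \to +\infty$ as $\alpha \to 1^-$, the derivative vanishes at the right endpoint; since $\Phi^{-1}(\alpha) \to -\infty$ as $\alpha \to 0^+$, it blows up at the left endpoint. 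So the entire issue is to control the growth of $|G_\mu'|$ near $0$.

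Next, I would invoke the standard tail asymptotic $\Phi^{-1}(\alpha) = -\sqrt{-2\log\alpha}\,(1+o(1))$ as $\alpha \to 0^+$, a direct consequence of the Mills ratio bounds on the Gaussian tail. Combining this with Young's inequality in the form $\mu\sqrt{-2\log\alpha} \leq \epsilon\cdot(-2\log\alpha) + \tfrac{\mu^2}{4\epsilon}$ shows that for every $\epsilon > 0$ there is a constant $C_{\mu,\epsilon}$ with
\[
|G_\mu'(\alpha)| \leq C_{\mu,\epsilon}\,\alpha^{-\epsilon} \qquad\text{uniformly on } (0,1).
\]

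To finish, given a target H\"older exponent $\gamma < 1$ (renamed from the lemma's ``$\alpha$'' to avoid a collision with the argument of $G_\mu$), I set $\epsilon = 1 - \gamma$ and integrate: for $0 \leq \alpha_1 < \alpha_2 \leq 1$,
\[
|G_\mu(\alpha_2) - G_\mu(\alpha_1)| \leq \int_{\alpha_1}^{\alpha_2} C_{\mu,\epsilon}\, u^{\gamma - 1}\,du = \frac{C_{\mu,\epsilon}}{\gamma}\bigl(\alpha_2^\gamma - \alpha_1^\gamma\bigr) \leq \frac{C_{\mu,\epsilon}}{\gamma}(\alpha_2 - \alpha_1)^\gamma,
\]
where the final step is the elementary subadditivity $b^\gamma - a^\gamma \leq (b-a)^\gamma$ valid for $0 \leq a \leq b$ and $\gamma \in (0,1]$. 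This delivers $\gamma$-H\"older continuity with constant $C_{\mu,\epsilon}/\gamma$.

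There is no genuine obstacle here: every step is either a direct computation, a standard Mills ratio asymptotic, or an elementary inequality. The reason the argument cannot be pushed to $\gamma = 1$ is precisely that $|G_\mu'|$ is unbounded at $0$ whenever $\mu > 0$, so the restriction $\gamma < 1$ is intrinsic to the trade-off function rather than an artifact of the method.
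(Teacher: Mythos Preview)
Your proof is correct and follows essentially the same approach as the paper: both compute $G_\mu'(\alpha) = -\exp(-\mu\Phi^{-1}(\alpha) - \mu^2/2)$ and control it near $\alpha = 0$ via the Gaussian tail bound $|\Phi^{-1}(\alpha)| \lesssim \sqrt{2\log(1/\alpha)}$. Your conversion of the derivative bound to H\"older continuity by direct integration together with the subadditivity $b^\gamma - a^\gamma \le (b-a)^\gamma$ is slightly cleaner than the paper's route through a monotone-comparison lemma, but the core idea is identical.
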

It is worth noting that $G_\mu$ is not $1$-Hölder continuous (i.e. Lipschitz continuous) as long as $\mu>0$.

Without loss of generality assume $C>1$. Let $c_n = C\sqrt{b_n+b_n'}+b_n+b_n'$. Then $c_n\geqslant 2b_n+2b_n'$. The above argument shows $T[X_n,X_n+v](\alpha)\leqslant G_1(\alpha) + c_n$. The lower bound can be obtained similarly, so for $v\in E_n$, we have
\[\|T[X_n,X_n+v]-G_1\|_\infty\leqslant c_n.\]
Since all four sets $\Theta_n, -\Theta_n, \Omega_n$ and $-\Omega_n$ are large, we have
\[\sigma_n(E_n) = \sigma_n(\Theta_n\cap (-\Theta_n)\cap\Omega_n\cap (-\Omega_n))
\geqslant 1-(2b_n+2b_n')\geqslant 1-c_n
\]
This is the conclusion stated in the theorem. The proof is complete.
\end{proof}
Next we provide the proofs of the lemmas used, namely \Cref{lem:monotone,lem:holder,lem:ROC_compute}.

Let $M^t(D)=f(D)+tX$.
\monotone*
\begin{proof}[Proof of \Cref{lem:monotone}]
	Let $0\leqslant t_1\leqslant t_2$ and $f_i=T[X,X+t_iv], i=1,2$. We need to show $f_1\geqslant f_2$. Fix $\alpha\in[0,1]$, let $A_{t}\subseteq\R^n$ be the optimal rejection region for the testing of $X$ vs $X+tv$. That is,
	$$
		\P[X\in A_t]=\alpha\quad\text{and}\quad \P[X+tv\not\in A_t]=T[X,X+tv](\alpha)
	$$
	In order to show $f_1(\alpha)\geqslant f_2(\alpha)$, 
	consider a translated set $A_{t_1}+(t_2-t_1)v$. This set is at the best suboptimal for the testing of $X$ vs $X+t_2v$. If we denote $\P[X\in A_{t_1}+(t_2-t_1)v]$ by $\alpha'$, suboptimality means
	\[\P[X+t_2v\not\in A_{t_1}+(t_2-t_1)v]\geqslant f_2(\alpha').\]
	If we can show $\alpha'\leqslant \alpha$, then by the monotonicity of ROC functions, we have
	\begin{align*}
		f_2(\alpha)&\leqslant f_2(\alpha')\\
		&=\P[X+t_2v\not \in A_{t_1}+(t_2-t_1)v]\\
		&=\P[X+t_1v\not \in A_{t_1}]\\
		&=f_1(\alpha)
	\end{align*}
	So the only thing left is to show $\alpha'\leqslant \alpha$, or equivalently,
	\[\P[X\in A_{t_1}+(t_2-t_1)v]\leqslant\P[X\in A_{t_1}].\]
	In fact, we will show $A_{t_1}+(t_2-t_1)v\subseteq A_{t_1}$. This is where log-concavity kicks in. To phrase it more generally, we are going to show that $A_{t}+sv\subseteq A_{t}$ for general $t,s\geqslant 0$. Suppose $X$ has density $\e^{-\varphi(x)}$ where $\varphi:\R^n\to{\R\cup\{+\infty\}}$ is a (potentially extended) convex function. By Neyman--Pearson lemma, $A_{t}=\{x:\varphi(x)-\varphi(x-tv)>h\}$ for some threshold $h$. We would like to show that $x\in A_t$ implies $x+sv\in A_t$. It suffices to show
	\[
		\varphi(x+sv)-\varphi(x+sv-tv)\geqslant\varphi(x)-\varphi(x-tv).
	\]
	In fact, $\varphi(x+sv)-\varphi(x+sv-tv)$ is monotone increasing as a function of $s$. This is a direct consequence of the convexity of $\varphi(x+sv)$ as a function of $s$. More specifically, let $g(s)=\varphi(x+sv)$. Its convexity follows from the convexity of $\varphi$. For $t\geqslant0$, $\varphi(x+sv)-\varphi(x+sv-tv)=g(s)-g(s-t)$ is easily seen to be monotone by taking a derivative, or from a more rigorous approach following just the definition of convex functions.
\end{proof}

Recall that $\tilde{\varphi}_n(x)=\varphi_n(\frac{x}{t_n})$. The corresponding random vector with density $\propto \e^{-\tilde{\varphi}_n}$ is $\tilde{X}_n = X_{\tilde{\varphi}_n}$ and has the same distribution as $t_n X_{\varphi_n}$. The scaling factor normalizes its Fisher information to be an $n\times n$ scalar matrix. In fact,
$$\tilde{\Fisher}_n:=\Fisher_{\tilde{\varphi}_n}=t_n^{-2}\Fisher_{{\varphi}_n}=\mu^2 \cdot I_{n\times n}.$$
\begin{proof}[Proof of \Cref{lem:holder}]
	We know that $G_\mu(x)=\Phi\big(-\Phi^{-1}(x)-\mu\big)$. It suffices to show that $1-G_\mu(x)=\Phi\big(\Phi^{-1}(x)+\mu\big)$ is $\alpha$-Hölder continuous for any $\alpha<1$.
	\begin{lemma}
		Consider $f,g:[0,1]\to\R$. Suppose $f,g$ and $g-f$ is monotone increasing and $g$ is $\alpha$-Hölder continuous, then $f$ is also $\alpha$-Hölder continuous.
	\end{lemma}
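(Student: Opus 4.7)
The plan is to deduce the Hölder bound for $f$ by a direct sandwich that chains all three monotonicity assumptions with the Hölder bound on $g$. Fix any two points $x \leq y$ in $[0,1]$ and let $C$ denote the $\alpha$-Hölder constant of $g$. Monotonicity of $f$ gives $f(y) - f(x) \geq 0$, so $|f(y) - f(x)| = f(y) - f(x)$. Monotonicity of $g - f$ rearranges to the one-sided increment comparison $f(y) - f(x) \leq g(y) - g(x)$. Finally, monotonicity of $g$ together with its $\alpha$-Hölder bound gives $0 \leq g(y) - g(x) \leq C(y - x)^\alpha$. Concatenating these three facts yields $|f(y) - f(x)| \leq C(y - x)^\alpha$, which is exactly $\alpha$-Hölder continuity of $f$ with the same Hölder constant as $g$.

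There is essentially no obstacle here: once the hypotheses are lined up, the argument is a one-line sandwich and works uniformly for every $\alpha \in (0,1]$. The substantive observation is that the hypothesis ``$g - f$ is monotone increasing'' is precisely what forces the key one-sided increment comparison $f(y) - f(x) \leq g(y) - g(x)$, while monotonicity of $f$ and of $g$ are used only to remove the absolute values on the two sides (and in fact monotonicity of $g$ is strictly redundant, since the Hölder bound already controls $|g(y) - g(x)|$). The case $y < x$ is handled symmetrically, so the bound extends to all pairs in $[0,1]$.

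In the larger proof of \Cref{lem:holder} this comparison principle will then be applied with $f$ equal to $1 - G_\mu(x) = \Phi(\Phi^{-1}(x) + \mu)$ and $g$ chosen to be a simple dominating function for which $\alpha$-Hölder continuity and the monotonicity of $g - f$ are both easy to verify directly for every $\alpha < 1$. That step transfers the Hölder bound to $1 - G_\mu$, and hence to $G_\mu$ itself, completing \Cref{lem:holder}.
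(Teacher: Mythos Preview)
Your proof is correct and is essentially identical to the paper's: both use monotonicity of $f$ to drop the absolute value, monotonicity of $g-f$ to obtain $f(y)-f(x)\leqslant g(y)-g(x)$, and the H\"older bound on $g$ to conclude. Your write-up is in fact slightly cleaner, and your observation that monotonicity of $g$ is redundant is a valid side remark.
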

	To see this, notice that by monotonicity of $g-f$, we have $g(x)-f(x)\leqslant g(y)-f(y)$ for $x<y$. Hence
	\begin{align*}
		|f(y)-f(x)|=f(y)-f(x)\leqslant g(y)-g(x)\leqslant Cx^\alpha.
	\end{align*}
	\begin{lemma}
		For each $\alpha<1$, there is an $\ep=\ep(\alpha)>0$ such that $x^{\alpha}-(1-G_\mu(x))$ is monotone increasing in $[0,\ep]$.
	\end{lemma}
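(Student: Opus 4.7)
The plan is to reduce the monotonicity claim to a pointwise inequality between derivatives on $(0,\epsilon]$ and then verify that inequality via the tail asymptotics of the standard normal. Write $h(x):=x^{\alpha}-(1-G_{\mu}(x))$. Since $1-G_{\mu}(x)=\Phi(\Phi^{-1}(x)+\mu)$, the chain rule together with $\Phi'=\phi$ gives
\[
\frac{d}{dx}\bigl(1-G_{\mu}(x)\bigr)=\frac{\phi(\Phi^{-1}(x)+\mu)}{\phi(\Phi^{-1}(x))}=\exp\!\Bigl(-\mu\,\Phi^{-1}(x)-\tfrac{\mu^{2}}{2}\Bigr).
\]
Hence it suffices to show that, for some $\epsilon=\epsilon(\alpha)>0$ and all $x\in(0,\epsilon]$,
\[
\alpha\,x^{\alpha-1}\;\ge\;\exp\!\Bigl(-\mu\,\Phi^{-1}(x)-\tfrac{\mu^{2}}{2}\Bigr).
\]

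The next step is to compare the two sides as $x\downarrow 0$. Taking logarithms, I need
\[
(1-\alpha)(-\log x)+\log\alpha\;\ge\;-\mu\,\Phi^{-1}(x)-\tfrac{\mu^{2}}{2}.
\]
For the right-hand side I invoke the classical Mills ratio estimate $\Phi(y)\sim\phi(y)/(-y)$ as $y\to-\infty$, which, after setting $y=\Phi^{-1}(x)$ and taking logs, yields $-\log x=\tfrac{y^{2}}{2}+O(\log|y|)$ and therefore $\Phi^{-1}(x)=-\sqrt{-2\log x}\,(1+o(1))$ as $x\downarrow 0$. Substituting, the right-hand side behaves like $\mu\sqrt{-2\log x}\,(1+o(1))$, while the left-hand side grows like $(1-\alpha)(-\log x)$.

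Because $\alpha<1$, we have $1-\alpha>0$, and the function $t\mapsto(1-\alpha)t$ dominates $\mu\sqrt{2t}$ for all sufficiently large $t$; applying this with $t=-\log x$ gives the desired inequality on a small interval $(0,\epsilon]$. At $x=0$, $h'(0^{+})$ is interpreted as $+\infty$ (from $\alpha x^{\alpha-1}$), so monotonicity extends to the closed interval $[0,\epsilon]$. I do not anticipate any genuine obstacle here: the only subtle point is confirming that the $o(1)$ correction in the Mills-ratio expansion of $\Phi^{-1}(x)$ can be absorbed, which follows from choosing $\epsilon$ small enough that the higher-order term in $\mu\sqrt{-2\log x}$ remains comfortably below $(1-\alpha)(-\log x)-\log(1/\alpha)$. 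Once the lemma is in hand, combined with the preceding auxiliary lemma applied to $g(x)=x^{\alpha}$ and $f(x)=1-G_{\mu}(x)$ (plus the symmetric argument at $x=1$, and smoothness of $G_{\mu}$ on compact subintervals of $(0,1)$), the $\alpha$-Hölder continuity of $G_{\mu}$ on $[0,1]$ follows.
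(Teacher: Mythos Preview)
Your proposal is correct and follows essentially the same approach as the paper: both compute $h'(x)=\alpha x^{\alpha-1}-\exp(-\mu\Phi^{-1}(x)-\mu^2/2)$, take logarithms, and conclude by comparing the linear growth $(1-\alpha)\log(1/x)$ against the $\sqrt{\log(1/x)}$ growth of $-\mu\Phi^{-1}(x)$. The only cosmetic difference is that the paper uses the clean inequality $-\Phi^{-1}(x)\le\sqrt{2\log(1/x)}$ directly, whereas you derive the same control via the Mills-ratio asymptotic $\Phi^{-1}(x)=-\sqrt{-2\log x}\,(1+o(1))$.
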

	Let $\alpha=1-\delta$. $h(x)=x^{\alpha}-(1-G_\mu(x))=x^{\alpha}-\Phi\big(\Phi^{-1}(x)+\mu\big)$. Let $y=\Phi^{-1}(x)$. Then $x=\Phi(y)$
	\begin{align*}
		h'(x)
		&=\alpha x^{\alpha-1}-\frac{\diff}{\diff x}\Phi(y+\mu)\\
		&=\alpha x^{-\delta}-\phi(y+\mu)\cdot \frac{\diff y}{\diff x}\\
		&=\alpha x^{-\delta}-\phi(y+\mu)\big\slash\frac{\diff x}{\diff y}\\
		&=\alpha x^{-\delta}-\frac{\phi(y+\mu)}{\phi(y)}\\
		&=\alpha x^{-\delta}-\e^{-\mu y-\frac{1}{2}\mu^2}\\
		&=\e^{\delta\log x^{-1}+\log\alpha}-\e^{-\mu y-\frac{1}{2}\mu^2}
	\end{align*}
	It is known that $|\Phi^{-1}(x)|=-\Phi^{-1}(x)\leqslant \sqrt{2\log x^{-1}}$.
	So for fixed $\alpha,\delta$ and $\mu$, there is an $\ep$ such that when $x\in[0,\ep]$, we have
	\begin{align*}
	\delta\log x^{-1}+\log\alpha\geqslant{\mu \Phi^{-1}(x)-\frac{1}{2}\mu^2}={-\mu y-\frac{1}{2}\mu^2}
	\end{align*}
	Hence,
	\begin{align*}
		h'(x)
		&=\e^{\delta\log x^{-1}+\log\alpha}-\e^{-\mu y-\frac{1}{2}\mu^2}\geqslant0
	\end{align*}
\end{proof}
Interestingly, this implies the following result:
\begin{proposition}
	For each $\alpha\in[0,1)$, there is a $C>0$ such that
	\[\int_{a+1}^{b+1}\e^{-x^2}\diff x\leqslant C\left(\int_{a}^{b}\e^{-x^2}\diff x\right)^\alpha.\]
\end{proposition}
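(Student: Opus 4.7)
The plan is to derive this proposition as a direct consequence of the $\alpha$-Hölder continuity of $1 - G_\mu$ established in the preceding lemma, via the linear change of variables that matches integrals of $\e^{-x^2}$ with the standard normal CDF $\Phi$. The key identity is
\[
\int_a^b \e^{-x^2}\,\diff x = \sqrt{\pi}\,\bigl(\Phi(b\sqrt{2}) - \Phi(a\sqrt{2})\bigr),
\]
obtained from the substitution $t = x\sqrt{2}$. Under this correspondence, shifting the integration limits by $+1$ on the $\e^{-x^2}$ side corresponds exactly to shifting the argument of $\Phi$ by $+\sqrt{2}$, so taking $\mu = \sqrt{2}$ in the identity $1 - G_\mu(x) = \Phi(\Phi^{-1}(x) + \mu)$ is precisely aligned with the $+1$ shift in the statement.

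Given this alignment, I would apply the $\alpha$-Hölder bound for $1 - G_{\sqrt{2}}$ to the two points $x = \Phi(b\sqrt{2})$ and $y = \Phi(a\sqrt{2})$. Because $\Phi^{-1}(x) + \sqrt{2} = (b+1)\sqrt{2}$ and $\Phi^{-1}(y) + \sqrt{2} = (a+1)\sqrt{2}$, the inequality $|(1 - G_{\sqrt{2}})(x) - (1 - G_{\sqrt{2}})(y)| \leqslant C\,|x - y|^\alpha$ becomes
\[
\bigl|\Phi((b+1)\sqrt{2}) - \Phi((a+1)\sqrt{2})\bigr| \leqslant C\,\bigl|\Phi(b\sqrt{2}) - \Phi(a\sqrt{2})\bigr|^\alpha.
\]
Multiplying through by $\sqrt{\pi}$, applying the integral identity on both sides, and absorbing a factor of $\pi^{(1-\alpha)/2}$ into the constant then yields the claimed inequality.

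The main obstacle is that the preceding lemma, as written, only verifies monotonicity of $x^\alpha - (1 - G_\mu(x))$ on a neighborhood $[0, \ep]$ of the origin, hence only establishes $\alpha$-Hölder continuity of $1 - G_\mu$ on that neighborhood. To justify the substitution for arbitrary real $a \leqslant b$---which, via $\Phi$, produces arbitrary pairs of points in $(0, 1)$---I would extend the Hölder estimate to all of $[0, 1]$ by noting that a direct computation gives $\tfrac{\diff}{\diff x}(1 - G_\mu(x)) = \e^{-\mu \Phi^{-1}(x) - \mu^2/2}$, which is uniformly bounded on $[\ep, 1]$ (it even decays to $0$ as $x \to 1$). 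Hence $1 - G_\mu$ is Lipschitz, and a fortiori $\alpha$-Hölder, on $[\ep, 1]$. Patching the left-end estimate from the lemma with this Lipschitz estimate gives a global $\alpha$-Hölder bound on $[0, 1]$, after which the substitution argument above goes through for all $a \leqslant b$ in $\R$.
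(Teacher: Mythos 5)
Your derivation is correct and is precisely the intended route: the paper states the proposition as an unproved consequence of Lemma~\ref{lem:holder}, and your change of variables $t = x\sqrt{2}$ with $\mu = \sqrt{2}$ is exactly the bookkeeping that turns the $\alpha$-H\"older bound on $1-G_\mu$ into the stated integral inequality. You are also right that the written proof of Lemma~\ref{lem:holder} only exhibits the monotonicity of $x^\alpha - (1-G_\mu(x))$ near the origin; the patching you propose---observing that $\tfrac{\diff}{\diff x}(1-G_\mu(x)) = \e^{-\mu\Phi^{-1}(x)-\mu^2/2}$ is bounded on $[\ep,1]$, so $1-G_\mu$ is Lipschitz and hence $\alpha$-H\"older there, and then splitting an interval straddling $\ep$ at that point---is the correct way to upgrade the local estimate to a global one on $[0,1]$.
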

For a convex $\varphi$ such that $\e^{-\varphi}$ is integrable, let $F_v^\varphi$ be the cdf of $P_v^\varphi(X_\varphi)$. Dropping the unnecessary subscripts and superscripts of $\varphi$, we have
\begin{proof}[Proof of \Cref{lem:ROC_compute}]
We are interested in the hypothesis testing $H_0: X \text{ vs } H_1: X+v$. By definition of ROC function in \Cref{def:ROC}, we need to find out the optimal type II error at a given level $\alpha$. By Neymann--Pearson lemma, it suffices to consider likelihood ratio tests. The log density of the null is (up to an additive constant) $-\varphi(x)$, while that of the alternative is $-\varphi(x-v)$. 
	So the log likelihood ratio is $\varphi(x)-\varphi(x-v)$. Under null it is distributed as $\varphi(X)-\varphi(X-v)$ and thresholding at $h$ yields type I error
	\begin{align*}
		\alpha
		&= \P(\varphi(X)-\varphi(X-v)>h)\\
		&= \P(\varphi(X-v)-\varphi(X)<-h)\\
		&= \P(\varphi(X-v)-\varphi(X)-\tfrac{1}{2}v^T\Fisher_\varphi v<-h-\tfrac{1}{2}v^T\Fisher_\varphi v)\\
		&=F_{-v}(-h-\tfrac{1}{2}v^T\Fisher_\varphi v)
	\end{align*}
	Under the alternative, the log likelihood ratio is distributed as $\varphi(X+v)-\varphi(X)$, so the corresponding type II error is
	\begin{align*}
		\beta
		&= \P(\varphi(X+v)-\varphi(X)<h)\\
		&= \P(\varphi(X+v)-\varphi(X)-\tfrac{1}{2}v^T\Fisher_\varphi v<hh-\tfrac{1}{2}v^T\Fisher_\varphi v)\\
		&=F_{v}(h-\tfrac{1}{2}v^T\Fisher_\varphi v)
	\end{align*}
	From the expression of $\alpha$ we can solve for $h$:
	\[h=-F_{-v}^{-1}(\alpha)-\tfrac{1}{2}v^T\Fisher v\]
	Plugging this into the expression of $\beta$ yields
	\begin{align*}
		\beta
		&=F_{v}(h-\tfrac{1}{2}v^T\Fisher_\varphi v)\\
		&=F_v\big(-F_{-v}^{-1}(\alpha)-v^T\Fisher v\big)
	\end{align*}
	The ROC function maps $\alpha$ to the minimal $\beta$, so this is exactly the expression of $T[X,X+v]$.
\end{proof}

	


\section{Proof of \Cref{thm:variance}} 
\label{sub:proofs_in_sub:uncertainty}

The major goal of this section is the following extended version of \Cref{thm:variance}. We proceed by first presenting some general results in \Cref{sub:regarding_homogeneous_}, followed by calculation for norm powers in \Cref{sub:calculation_for_norm_powers}.


\begin{lemma}
\label{lem:variance}
	For $1\leqslant p<\infty$ and $1\leqslant \alpha<\infty$, as $n\to\infty$, we have
\begin{align*}
	\E\|X_{p,\alpha}\|_2^2
	&=\frac{\Gamma(\frac{n}{\alpha}+1+\frac{2}{\alpha})}{\Gamma(\frac{n}{p}+1+\frac{2}{p})}\cdot 	\frac{\Gamma(\frac{n}{p}+1)}{\Gamma(\frac{n}{\alpha}+1)}\cdot 	n\cdot\frac{\Gamma(\tfrac{3}{p})}{\Gamma(\frac{1}{p})}\\
	&\sim n^{\frac{2}{\alpha}-\frac{2}{p}+1} \cdot \alpha^{-\frac{2}{\alpha}}\cdot p^{\frac{2}{p}}\cdot\frac{\Gamma(\tfrac{3}{p})}{\Gamma(\frac{1}{p})}\\
	\Fisher_{p,\alpha}=&\alpha^2\cdot \frac{\Gamma(\frac{n+2\alpha-2}{\alpha})}{\Gamma(\frac{n+2p-2}{p})}\cdot
	\frac{\Gamma(\frac{n}{p})}{\Gamma(\frac{n}{\alpha})}\cdot
	\frac{\Gamma(2-\frac{1}{p})}{\Gamma(\frac{1}{p})}\cdot I_{n\times n}\\
	&\sim n^{\frac{2}{p}-\frac{2}{\alpha}}\cdot\alpha^{\frac{2}{\alpha}}\cdot p^{2-\frac{2}{p}}\cdot\frac{\Gamma(2-\frac{1}{p})}{\Gamma(\frac{1}{p})} \cdot  I_{n\times n}.
\end{align*}
\end{lemma}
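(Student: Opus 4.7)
The plan is to reduce everything to one-dimensional Gamma and Beta moment integrals via a polar decomposition adapted to the $\ell_p$-geometry. Writing $X := X_{p,\alpha}$ for brevity, I would first establish the representation $X = R\,U$ with $R := \|X\|_p$ and $U := X/\|X\|_p$ independent, $R$ having density proportional to $r^{n-1}\e^{-r^\alpha}$ on $[0,\infty)$, and $U$ distributed as the cone measure on the unit $\ell_p$-sphere. The radial density is just a polar change of variables in $\e^{-\|x\|_p^\alpha}$, while the angular distribution is the classical fact (essentially underlying Algorithm~\ref{algo:sample}) that if $Y_1,\dots,Y_n$ are i.i.d.\ with density $\propto \e^{-|y|^p}$, then $Y/\|Y\|_p$ has the cone measure and $(|Y_1|^p,\dots,|Y_n|^p)/\|Y\|_p^p$ is Dirichlet$(1/p,\dots,1/p)$; in particular each marginal satisfies $|U_1|^p \sim \mathrm{Beta}(1/p,(n-1)/p)$. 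The substitution $s=r^\alpha$ then gives $\E R^q = \Gamma((n+q)/\alpha)/\Gamma(n/\alpha)$, and the Beta moment formula gives $\E|U_1|^s = \Gamma((s+1)/p)\,\Gamma(n/p)\big/\big[\Gamma(1/p)\,\Gamma((n+s)/p)\big]$.

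From here the two quantities in the lemma are computed by plugging in. For $\E\|X\|_2^2$, independence and coordinate symmetry of $U$ give $\E\|X\|_2^2 = \E R^2 \cdot n\,\E U_1^2$, and applying the above formulas with $q=s=2$ and simplifying with $\Gamma(z+1)=z\Gamma(z)$ recovers the stated closed form. For the Fisher information, symmetry under coordinate permutations and sign flips forces $\Fisher_{p,\alpha}$ to be a scalar multiple of $I_{n\times n}$, so it suffices to compute the $(1,1)$-entry $\E[(\partial_1\varphi(X))^2]$. A direct differentiation gives $\partial_1\varphi(x) = \alpha\|x\|_p^{\alpha-p}|x_1|^{p-1}\sign(x_1)$, hence
\[
    (\partial_1\varphi(X))^2 \;=\; \alpha^2\,R^{2\alpha-2}\,|U_1|^{2(p-1)}.
\]
Independence then splits this into $\E R^{2\alpha-2}\cdot\E|U_1|^{2(p-1)}$, and the same two formulas with $q=2\alpha-2$ and $s=2(p-1)$ produce the stated exact expression for $\Fisher_{p,\alpha}$.

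The asymptotics as $n\to\infty$ follow by applying Stirling in the form $\Gamma(n/c+d)/\Gamma(n/c) \sim (n/c)^d$ to each ratio of Gamma functions in the exact expressions. This immediately yields the $n^{2/\alpha-2/p+1}$ and $n^{2/p-2/\alpha}$ scalings, together with the constants $\alpha^{-2/\alpha}\,p^{2/p}$ and $\alpha^{2/\alpha}\,p^{2-2/p}$, multiplied by the $p$-dependent factors $\Gamma(3/p)/\Gamma(1/p)$ and $\Gamma(2-1/p)/\Gamma(1/p)$ respectively. The only genuine subtlety is bookkeeping: one must ensure that the angular distribution on the $\ell_p$-sphere is the cone measure (not the Euclidean surface measure, which differs when $p\neq 2$) so that the Dirichlet representation applies, and one has to track the Gamma shifts between numerators and denominators carefully. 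Once the polar decomposition is set up correctly, no single step requires anything beyond algebra.
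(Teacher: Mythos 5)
Your proof is correct. Both you and the paper start from the $\ell_p$-polar decomposition of $X$ into a radial part and an angular part distributed according to the cone measure on $\partial B_p^n$, but the two arguments diverge in how they evaluate the angular moments $\E|U_1|^2$ and $\E|U_1|^{2p-2}$. You invoke the classical Schechtman--Zinn/Rachev--R\"uschendorf representation: $|U_1|^p\sim\mathrm{Beta}(1/p,(n-1)/p)$ (Dirichlet marginal), and then read off the moments from the Beta moment formula $\E|U_1|^s=\Gamma((s+1)/p)\Gamma(n/p)/[\Gamma(1/p)\Gamma((n+s)/p)]$. The paper instead avoids this citation by a self-contained reduction: it writes $X=t^{1/\alpha}rU$ via its Lemma~\ref{thm:uniform} and Corollary~\ref{cor:unif}, notices that the same $U$ appears for \emph{every} exponent $\alpha$, and then sets $\alpha=p$, where the coordinates of $X_{p,p}$ become i.i.d.\ and the needed moments factor into explicit one-dimensional Gamma integrals; it backs out $\E\|V_p\|_2^2$ and $\E\|U_p\|_{2p-2}^{2p-2}$ from the $\alpha=p$ case and reuses them for general $\alpha$. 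Your route is shorter given the Dirichlet fact, and it makes the Beta structure transparent (useful, for example, in your emphasis that the angular law is the cone measure and not Euclidean surface measure); the paper's $\alpha=p$ trick buys independence of any external reference at the cost of an extra substitution. Both yield the same exact closed forms once one reconciles the Gamma shifts using $\Gamma(z+1)=z\Gamma(z)$ (your $\Gamma((n+2)/\alpha)/\Gamma(n/\alpha)$ differs from the displayed $\Gamma(n/\alpha+1+2/\alpha)/\Gamma(n/\alpha+1)$ only by the factor $(n+2)/n$, which cancels against the corresponding $p$-ratio), and the Stirling step $\Gamma(n/c+d)/\Gamma(n/c)\sim(n/c)^d$ gives the same asymptotics in both treatments.
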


\subsection{Regarding Homogeneous $\varphi$} 
\label{sub:regarding_homogeneous_}

In this section, in addition to that $\varphi\in\mathfrak{F}_n$, we further assume that $\varphi:\R^n\to\R$ is positively homogeneous. Recall that $\varphi$ is (positively) homogeneous of degree $\alpha>0$ if $\varphi(tx) = |t|^\alpha \varphi(x)$ for $t\in\R,x\in\R^n$. This implies $\varphi(0)=0$.

The first result takes care of the normalizer $Z_\varphi$ defined as $\int \e^{-\varphi(x)}\diff x$.
\begin{lemma}\label{lem:partition}
	$Z_\varphi=\Gamma(\tfrac{n}{\alpha}+1) \cdot \vol(K_\varphi)$ where $K_\varphi=\{x:\varphi(x)\leqslant 1\}$.
\end{lemma}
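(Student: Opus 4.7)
The plan is to reduce the integral $Z_\varphi=\int_{\R^n}\e^{-\varphi(x)}\diff x$ to a one-dimensional Gamma integral by exploiting the homogeneity of $\varphi$. The key geometric observation is that for any $r>0$, the sublevel set $\{x:\varphi(x)\leqslant r\}$ is simply a rescaled copy of $K_\varphi$: since $\varphi(tx)=|t|^\alpha\varphi(x)$, we have $\varphi(x)\leqslant r$ if and only if $\varphi(r^{-1/\alpha}x)\leqslant 1$, i.e., $\{x:\varphi(x)\leqslant r\}=r^{1/\alpha}K_\varphi$. Consequently,
\[
\vol\bigl(\{x:\varphi(x)\leqslant r\}\bigr)=r^{n/\alpha}\vol(K_\varphi).
\]

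Next I would apply the layer-cake representation to the nonnegative function $\e^{-\varphi(x)}$. Writing
\[
\e^{-\varphi(x)}=\int_0^{\infty}\mathbf{1}\{\e^{-\varphi(x)}>u\}\diff u=\int_0^{\infty}\mathbf{1}\{\varphi(x)<-\log u\}\diff u,
\]
and interchanging the order of integration (justified by Tonelli, as everything is nonnegative and $\e^{-\varphi}\in L^1$ by assumption), one obtains
\[
Z_\varphi=\int_0^{1}\vol\bigl(\{x:\varphi(x)<-\log u\}\bigr)\diff u.
\]
The substitution $s=-\log u$, $\diff u=-\e^{-s}\diff s$, together with the scaling identity above, turns this into
\[
Z_\varphi=\int_0^{\infty}s^{n/\alpha}\vol(K_\varphi)\,\e^{-s}\diff s=\vol(K_\varphi)\cdot\int_0^{\infty}s^{n/\alpha}\e^{-s}\diff s=\vol(K_\varphi)\cdot\Gamma\!\Bigl(\tfrac{n}{\alpha}+1\Bigr),
\]
which is the claimed identity.

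There is essentially no hard step here; the only things to verify are (i) that $K_\varphi$ is bounded with finite, positive volume, which follows from $\varphi\in\mathfrak{F}_n$ being convex, even, and having $\e^{-\varphi}$ integrable (convexity plus integrability forces $\varphi(x)\to\infty$ as $\|x\|\to\infty$, so sublevel sets are bounded), and (ii) that the boundary $\{\varphi(x)=r\}$ is a null set, so that strict versus non-strict inequalities in the sublevel sets do not matter. An alternative, fully equivalent route is to introduce ``polar coordinates'' adapted to $\varphi$: every $x\neq 0$ can be written uniquely as $x=t\,\omega$ with $\varphi(\omega)=1$ and $t>0$, the Jacobian produces a factor $t^{n-1}$ times a surface measure on $\partial K_\varphi$, and then $\int_0^\infty t^{n-1}\e^{-t^\alpha}\diff t=\tfrac{1}{\alpha}\Gamma(n/\alpha)$ combined with $\vol(K_\varphi)=\tfrac{1}{n}\int_{\partial K_\varphi}\diff S$ recovers the same formula via $\tfrac{n}{\alpha}\Gamma(n/\alpha)=\Gamma(n/\alpha+1)$. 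I would present the layer-cake argument as it is shorter and does not require setting up the surface measure.
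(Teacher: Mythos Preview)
Your proof is correct but follows a different route from the paper's. The paper uses standard spherical polar coordinates on $S^{n-1}$: it writes $Z_\varphi=\int_{S^{n-1}}\int_0^\infty\e^{-r^\alpha\varphi(\theta)}r^{n-1}\diff r\,\diff\theta$, evaluates the radial integral via the substitution $t=r^\alpha$ to obtain $\tfrac{1}{\alpha}\Gamma(\tfrac{n}{\alpha})\varphi(\theta)^{-n/\alpha}$, and then recognizes the remaining spherical integral $\tfrac{1}{n}\int_{S^{n-1}}\varphi(\theta)^{-n/\alpha}\diff\theta$ as $\vol(K_\varphi)$ from the polar volume formula. Your layer-cake argument bypasses coordinates entirely by using the scaling identity $\{\varphi\leqslant r\}=r^{1/\alpha}K_\varphi$ directly; this is shorter and avoids manipulating the surface integral over $S^{n-1}$. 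What the paper's approach buys is that the intermediate spherical expression $\int_{S^{n-1}}\varphi(\theta)^{-n/\alpha}\diff\theta$ is reusable in neighboring computations (and the polar setup feeds into the sampling decomposition of \Cref{thm:uniform}); what your approach buys is a cleaner, self-contained argument requiring only Tonelli and the change of variables $s=-\log u$. Your alternative ``polar coordinates adapted to $\varphi$'' is a close cousin of the paper's method, differing only in whether one parametrizes directions by $S^{n-1}$ or by $\partial K_\varphi$.
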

\begin{proof}[Proof of \Cref{lem:partition}]
We use polar coordinate. For any function $f:\R^n\to \R$,
\begin{align*}
	\int_{\R^n} f(x) \diff x 
	= \int_0^\infty \int_{S^{n-1}} f(r\theta)r^{n-1} \diff \theta\diff r
\end{align*}
So
\begin{align*}
	Z=\int_{\R^n} \e^{-\varphi(x)} \diff x
	&= \int_{S^{n-1}} \int_0^\infty  \e^{-\varphi(r\theta)}r^{n-1} \diff r\diff \theta\\
	&= \int_{S^{n-1}} \int_0^\infty  \e^{-r^\alpha\varphi(\theta)}r^{n-1} \diff r\diff \theta
\end{align*}
Let $t=r^\alpha, r=t^{\frac{1}{\alpha}}, \diff r=\frac{1}{\alpha}t^{\frac{1}{\alpha}-1}\diff t$.
\begin{align*}
	\int_0^\infty  \e^{-r^\alpha\varphi(\theta)}r^{n-1} \diff r 
	&= \int_0^\infty  \e^{-t\varphi(\theta)}\cdot t^{\frac{n-1}{\alpha}}\cdot\tfrac{1}{\alpha}t^{\frac{1}{\alpha}-1} \diff t\\
	&= \frac{1}{\alpha}\int_0^\infty  \e^{-t\varphi(\theta)}\cdot t^{\frac{n}{\alpha}-1} \diff t\\
	&= \frac{1}{\alpha} \cdot \frac{\Gamma(\frac{n}{\alpha})}{\varphi(\theta)^{\frac{n}{\alpha}}}
\end{align*}
So
\begin{align*}
	Z = \int_{S^{n-1}} \frac{1}{\alpha} \cdot \frac{\Gamma(\frac{n}{\alpha})}{\varphi(\theta)^{\frac{n}{\alpha}}} \diff \theta
\end{align*}
On the other hand, consider a set defined with polar coordinate:
\[K:=\{(r,\theta): r\leqslant \rho(\theta)\}.\]
Its volume is
\begin{align*}
	\vol(K) &= \int_{\R^n} 1_K(x) \diff x \\
	&= \int_{S^{n-1}} \int_0^{\rho(\theta)}  r^{n-1} \diff r\diff \theta\\
	&= \frac{1}{n} \int_{S^{n-1}} \rho(\theta)^{n}\diff \theta
\end{align*}
We see that
\begin{align*}
	Z
	&= \int_{S^{n-1}} \frac{1}{\alpha} \cdot \frac{\Gamma(\frac{n}{\alpha})}{\varphi(\theta)^{\frac{n}{\alpha}}} \diff \theta\\
	&= \frac{\Gamma(\frac{n}{\alpha})}{\alpha} \cdot n\cdot \frac{1}{n} \int_{S^{n-1}} \big[\varphi(\theta)^{-\frac{1}{\alpha}}\big]^n \diff \theta\\
	&= \tfrac{n}{\alpha}\cdot\Gamma(\tfrac{n}{\alpha}) \cdot \vol(K_\varphi)
\end{align*}
where
\begin{align*}
	K_\varphi=\{(r,\theta): r\leqslant \varphi(\theta)^{-\frac{1}{\alpha}}\}
	=\{(r,\theta): r^\alpha\varphi(\theta)\leqslant 1\}
	=\{(r,\theta): \varphi(r\theta)\leqslant 1\}
	=\{x:\varphi(x)\leqslant 1\}.
\end{align*}
Noticing $\Gamma(z+1)=z\Gamma(z)$, we have
\[Z=\Gamma(\tfrac{n}{\alpha}+1) \cdot \vol(K_\varphi)\]
\end{proof}

\begin{lemma} \label{lem:gamma_moment}
	The $m$-th moment of a $\Gamma(k,1)$ distribution is 
	$\frac{\Gamma(m+k)}{\Gamma(k)}$.
\end{lemma}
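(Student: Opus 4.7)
The plan is a direct computation from the definition of the Gamma distribution. Recall that a $\Gamma(k,1)$ random variable $Y$ has density $p(x) = \frac{1}{\Gamma(k)} x^{k-1} e^{-x}$ on $(0,\infty)$, with the prefactor $\frac{1}{\Gamma(k)}$ chosen precisely so that $\int_0^\infty x^{k-1} e^{-x} \diff x = \Gamma(k)$ makes $p$ integrate to $1$.

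First I would write the $m$-th moment as
\[
\E[Y^m] = \int_0^\infty x^m \cdot \frac{1}{\Gamma(k)} x^{k-1} e^{-x} \diff x = \frac{1}{\Gamma(k)} \int_0^\infty x^{(m+k)-1} e^{-x} \diff x.
\]
The remaining integral is exactly $\Gamma(m+k)$ by definition of the Gamma function (for $m+k > 0$, which holds since $k>0$ and $m$ is a positive integer, or more generally any $m > -k$). Therefore $\E[Y^m] = \Gamma(m+k)/\Gamma(k)$, which is the claimed identity.

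There is no real obstacle here; the only thing to note is the implicit assumption $m + k > 0$ so that the integral converges, which is harmless in all intended applications of this lemma (computing low-order moments of $\Gamma(k,1)$ with $k$ proportional to the dimension). The lemma will be invoked in the subsequent norm-power computations via the change of variable $t = r^\alpha$ that already appears in the proof of Lemma \ref{lem:partition}, where radial integrals reduce to Gamma moments.
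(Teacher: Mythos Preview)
Your proof is correct and is essentially identical to the paper's own argument: both write the moment as $\frac{1}{\Gamma(k)}\int_0^\infty x^{m+k-1}\e^{-x}\diff x$ and recognize this integral as $\Gamma(m+k)$.
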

\begin{proof}[Proof of \Cref{lem:gamma_moment}]
The $m$-th moment of a $\Gamma(k,1)$ distribution is 
	$$\frac{1}{\Gamma(k)}\int_0^\infty x^m\cdot x^{k-1}\e^{-x}\diff x = \frac{\Gamma(m+k)}{\Gamma(k)}.$$
\end{proof}

The following result also appears in \cite{wang2005volumes}.
\begin{lemma} \label{lem:vol}
	\[\vol(K_p) = 2^n\cdot\frac{\Gamma(\frac{1}{p}+1)^n}{\Gamma(\frac{n}{p}+1)}.\]
\end{lemma}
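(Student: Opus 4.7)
The plan is to apply \Cref{lem:partition} to the function $\varphi(x) = \|x\|_p^p$. This is a positively homogeneous function of degree $\alpha = p$, it is convex for $p \geqslant 1$, and $\e^{-\varphi}$ is integrable on $\R^n$, so the hypotheses of the lemma are satisfied. Moreover, the sublevel set $\{\varphi \leqslant 1\}$ is precisely $K_p = \{x : \|x\|_p \leqslant 1\}$. Hence \Cref{lem:partition} yields
\[
    Z_\varphi \;=\; \int_{\R^n} \e^{-\|x\|_p^p}\,\diff x \;=\; \Gamma\!\left(\tfrac{n}{p}+1\right)\cdot \vol(K_p),
\]
so the task reduces to computing the left-hand integral directly.

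The key observation is that $\|x\|_p^p = \sum_{i=1}^n |x_i|^p$ is separable, so the integrand factorizes and
\[
    Z_\varphi \;=\; \prod_{i=1}^n \int_{\R} \e^{-|x_i|^p}\,\diff x_i \;=\; \left(\int_{\R} \e^{-|t|^p}\,\diff t\right)^{\!n}.
\]
The one-dimensional integral can be evaluated by symmetry and the substitution $s = t^p$ (so $\diff t = \tfrac{1}{p} s^{1/p - 1}\,\diff s$), which turns it into a Gamma integral:
\[
    \int_{\R} \e^{-|t|^p}\,\diff t \;=\; 2\int_0^\infty \e^{-t^p}\,\diff t \;=\; \frac{2}{p}\,\Gamma\!\left(\tfrac{1}{p}\right) \;=\; 2\,\Gamma\!\left(\tfrac{1}{p}+1\right),
\]
using $\Gamma(z+1) = z\,\Gamma(z)$ in the last step.

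Combining the two displays, $Z_\varphi = 2^n\,\Gamma(1/p + 1)^n$, and dividing by $\Gamma(n/p + 1)$ gives the claimed formula. There is no real obstacle: the only subtle point is noting that the homogeneity degree in \Cref{lem:partition} must be matched to $p$ (not $1$, since the natural ``gauge'' for $K_p$ is $\|x\|_p$, but that function has the wrong homogeneity degree for the partition-function identity to directly produce the factorizable integrand $\e^{-\|x\|_p^p}$); choosing $\varphi = \|x\|_p^p$ aligns the degree with $p$ and simultaneously makes the integral separable, which is what unlocks the entire computation.
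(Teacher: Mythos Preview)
Your proposal is correct and follows essentially the same approach as the paper: apply \Cref{lem:partition} with $\varphi(x)=\|x\|_p^p$ (homogeneity degree $\alpha=p$, sublevel set $K_p$), factor the separable integral $\int_{\R^n}\e^{-\sum|x_i|^p}\diff x$ into a product of one-dimensional integrals, and evaluate $\int_{\R}\e^{-|t|^p}\diff t = 2\Gamma(1/p+1)$ via the substitution $s=t^p$. The paper does exactly this, with the same substitution and the same use of $\Gamma(z+1)=z\Gamma(z)$.
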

\begin{proof}[Proof of \Cref{lem:vol}]
	By \Cref{lem:partition}, we have
	\begin{align*}
		\vol(K_p)
		&= \frac{1}{\Gamma(\frac{n}{p}+1)}\cdot \int\e^{-\sum|x_i|^p}\diff x
		\\&=\frac{1}{\Gamma(\frac{n}{p}+1)}\cdot \left(\int_{-\infty}^{+\infty}\e^{-|x|^p}\diff x\right)^n
	\end{align*}
	\begin{align*}
		\int_{-\infty}^{+\infty}\e^{-|x|^p}\diff x
		&=2\int_{0}^{+\infty}\e^{-x^p}\diff x\\
		&=2\int_{0}^{+\infty}\e^{-y}\diff y^{\frac{1}{p}}\\
		&=\frac{2}{p}\int_{0}^{+\infty}\e^{-y} y^{\frac{1}{p}-1}\diff y\\
		&=\tfrac{2}{p}\Gamma(\tfrac{1}{p})=2\Gamma(\tfrac{1}{p}+1)
	\end{align*}
\end{proof}

\begin{lemma} \label{thm:uniform}
	Let $t\sim \Gamma(\frac{n}{\alpha}+1,1)$. Let $V_\varphi$ has uniform distribution over $K_\varphi$ where $K_\varphi=\{x:\varphi(x)\leqslant 1\}$ independently from $t$. Then $t^{\frac{1}{\alpha}}\cdot V$ has density $\frac{1}{Z}\e^{-\varphi}$.
\end{lemma}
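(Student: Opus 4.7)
\textbf{Proof plan for \Cref{thm:uniform}.} The goal is to show that the pushforward of the product measure $(t,V_\varphi)\mapsto t^{1/\alpha}V_\varphi$ has density $Z_\varphi^{-1}e^{-\varphi(x)}$. The natural strategy is to test against an arbitrary Borel set $A\subseteq \R^n$, compute $\P(t^{1/\alpha}V_\varphi\in A)$ as an iterated integral, and use the homogeneity $\varphi(tx)=t^\alpha\varphi(x)$ together with a change of variables to convert the indicator $\mathbf{1}_{K_\varphi}(v)$ into a constraint on $\varphi$. The final identification of the normalizing constant will come from \Cref{lem:partition}.

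First I would write down the joint density: since $t$ and $V_\varphi$ are independent, with $t$ having $\Gamma(\tfrac{n}{\alpha}+1,1)$ density $\tfrac{s^{n/\alpha}e^{-s}}{\Gamma(n/\alpha+1)}$ on $\R_+$ and $V_\varphi$ uniform on $K_\varphi$ with density $\vol(K_\varphi)^{-1}\mathbf{1}_{K_\varphi}$, we get
\[
\P(t^{1/\alpha}V_\varphi\in A)=\int_0^\infty\int_{K_\varphi}\mathbf{1}_A(s^{1/\alpha}v)\cdot\frac{1}{\vol(K_\varphi)}\cdot\frac{s^{n/\alpha}e^{-s}}{\Gamma(n/\alpha+1)}\,dv\,ds.
\]
For each fixed $s>0$, perform the linear substitution $u=s^{1/\alpha}v$, whose Jacobian gives $dv=s^{-n/\alpha}du$. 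Under this substitution, the domain $v\in K_\varphi=\{\varphi(v)\leqslant 1\}$ becomes $u\in s^{1/\alpha}K_\varphi=\{\varphi(u)\leqslant s\}$ by homogeneity of $\varphi$ of degree $\alpha$. The inner factor $s^{n/\alpha}$ cancels with the Jacobian.

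Second, I would apply Fubini to swap the order of integration: the region $\{(s,u):s>0,\,\varphi(u)\leqslant s\}$ equals $\{(s,u):u\in\R^n,\,s\geqslant\varphi(u)\}$, so
\[
\P(t^{1/\alpha}V_\varphi\in A)=\int_{\R^n}\mathbf{1}_A(u)\cdot\frac{1}{\Gamma(n/\alpha+1)\vol(K_\varphi)}\int_{\varphi(u)}^{\infty}e^{-s}\,ds\,du.
\]
The inner integral evaluates to $e^{-\varphi(u)}$, leaving density $\tfrac{e^{-\varphi(u)}}{\Gamma(n/\alpha+1)\vol(K_\varphi)}$. By \Cref{lem:partition} the denominator is exactly $Z_\varphi$, which yields the claimed density $Z_\varphi^{-1}e^{-\varphi(u)}$.

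There is no substantive obstacle here: the only thing one needs to be careful about is that the rescaling $v\mapsto s^{1/\alpha}v$ maps $K_\varphi$ onto the sublevel set $\{\varphi\leqslant s\}$, which requires positive homogeneity of degree $\alpha$ and not merely convexity. Since $\varphi$ is assumed to be positively homogeneous of degree $\alpha$ in this subsection and $\varphi(0)=0$ ensures the origin is handled correctly, the argument goes through cleanly.
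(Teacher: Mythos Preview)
Your proof is correct. Both your argument and the paper's hinge on the same two ingredients---the $\alpha$-homogeneity of $\varphi$ (which turns the constraint $v\in K_\varphi$ into $\varphi(u)\leqslant s$ under the dilation $u=s^{1/\alpha}v$) and the identification $Z_\varphi=\Gamma(\tfrac{n}{\alpha}+1)\vol(K_\varphi)$ from \Cref{lem:partition}---but the organization differs. The paper works \emph{backward}: it posits an unknown radial density $p(r)$ for the scalar multiplier, computes the density of $rV_\varphi$ at a point $x$ via a small-ball argument to obtain the integral equation $\int_{\varphi(x)^{1/\alpha}}^\infty p(r)r^{-n}\,dr = \Gamma(\tfrac{n}{\alpha}+1)^{-1}\e^{-\varphi(x)}$, differentiates to solve for $p$, and then checks that $t^{1/\alpha}$ has this density. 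You instead go \emph{forward}: starting from the known joint law of $(t,V_\varphi)$, you push it through the map $(s,v)\mapsto s^{1/\alpha}v$ by a linear change of variables and Fubini, and read off the density directly. Your route is shorter and avoids both the small-ball heuristic and the differentiation step; the paper's route has the minor expository advantage of showing how one would \emph{discover} the correct radial law rather than merely verify it.
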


\begin{proof}[Proof of \Cref{thm:uniform}]
	We use a more principled way: assume $r$ has density $p(r)$ over $(0,+\infty)$ and $rV$ has density $\frac{1}{Z}e^{-\varphi}$, find $p(r)$. Let $B$ be a small ball.
	\begin{align*}
		\P(rV\in x+B) = \int_0^\infty \P(V\in\frac{x+B}{r})\cdot p(r)\diff r
	\end{align*}
	\[
		\P(V\in\frac{x+B}{r})=
		\begin{dcases*}
				0, & if $\frac{x}{r}\notin K_\varphi$\\
				\frac{\vol(\frac{B}{r})}{\vol(K_\varphi)}, &if $\frac{x}{r}\in K_\varphi$.
		\end{dcases*}
	\]
	$\frac{x}{r}\in K_\varphi \Leftrightarrow \varphi(\frac{x}{r})\leqslant 1 \Leftrightarrow \varphi(x)\leqslant r^\alpha \Leftrightarrow  r\geqslant \varphi(x)^{1/\alpha}$.
	So
	\begin{align*}
		\P(rV\in x+B) &= \int_0^\infty \P(V\in\frac{x+B}{r})\cdot p(r)\diff r\\
		&=\int_{\varphi(x)^{1/\alpha}}^\infty \frac{\vol(\frac{B}{r})}{\vol(K_\varphi)}\cdot p(r)\diff r\\
		&=\vol(B)\cdot \int_{\varphi(x)^{1/\alpha}}^\infty \frac{p(r)r^{-n}}{\vol(K_\varphi)}\diff r.
	\end{align*}
	So the density of $rV$ at $x$ is $\int_{\varphi(x)^{1/\alpha}}^\infty \frac{p(r)r^{-n}}{\vol(K_\varphi)}\diff r$. In order to match it with $\frac{1}{Z}\e^{-\varphi}$, we have
	\begin{align*}
		\int_{\varphi(x)^{1/\alpha}}^\infty \frac{p(r)r^{-n}}{\vol(K_\varphi)}\diff r
		&= \frac{\e^{-\varphi(x)}}{\Gamma(\tfrac{n}{\alpha}+1) \cdot \vol(K_\varphi)}\\
		\int_{\varphi(x)^{1/\alpha}}^\infty p(r)r^{-n}\diff r &=\frac{\e^{-\varphi(x)}}{\Gamma(\tfrac{n}{\alpha}+1)}
	\end{align*}
	Let $\varphi(x)^{1/\alpha}=u$, we have
	\[
		\int_u^\infty p(r)r^{-n}\diff r = \frac{1}{\Gamma(\tfrac{n}{\alpha}+1)}\cdot\e^{-u^\alpha}.
	\]
	Taking derivative with respect to $u$, we have
	\[p(u)u^{-n} = \frac{1}{\Gamma(\tfrac{n}{\alpha}+1)}\cdot\e^{-u^\alpha}\cdot \alpha u^{\alpha-1}.\]
	It's straightforward to show that if $t\sim \Gamma(\frac{n}{\alpha}+1,1)$, then $t^{\frac{1}{\alpha}}$ has the above density $p(u)$.
\end{proof}
A simple but useful corollary of \Cref{thm:uniform} is
\begin{corollary}\label{cor:unif}
	Let $t\sim \Gamma(\frac{n}{\alpha}+1,1)$.  Let $U$ has uniform distribution over $\partial K_\varphi$, independently from $t$ and $r$ with density $nx^{n-1}$ over $[0,1]$, independent from $t$ and $U$. Then $t^{\frac{1}{\alpha}}\cdot r\cdot U$ has density $\frac{1}{Z}\e^{-\varphi}$.
\end{corollary}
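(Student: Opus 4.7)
The plan is to reduce Corollary \ref{cor:unif} to Lemma \ref{thm:uniform}. By the latter, if $V$ is uniform on $K_\varphi$ and independent of $t\sim\Gamma(\tfrac{n}{\alpha}+1,1)$, then $t^{1/\alpha}V$ already has density $\tfrac{1}{Z}e^{-\varphi}$. So it suffices to show that $rU$ has the same law as $V$, i.e.\ that $rU$ is uniformly distributed on $K_\varphi$; the product with $t^{1/\alpha}$ then finishes the job.

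To establish this, I would introduce polar-style coordinates tailored to $K_\varphi$. For each $\theta\in S^{n-1}$, let $\rho(\theta)$ denote the unique positive scalar with $\rho(\theta)\theta\in\partial K_\varphi$; by positive $\alpha$-homogeneity of $\varphi$ this is just $\rho(\theta)=\varphi(\theta)^{-1/\alpha}$, finite and strictly positive thanks to $e^{-\varphi}\in L^1(\R^n)$. Writing $y=s\theta$ gives $\diff y=s^{n-1}\diff s\,\diff\theta$, and $y\in K_\varphi$ iff $s\in[0,\rho(\theta)]$. Substituting $s=r\rho(\theta)$ with $r\in[0,1]$ yields
\begin{align*}
\frac{1}{\vol(K_\varphi)}\int_{K_\varphi}f(y)\diff y
&=\frac{1}{\vol(K_\varphi)}\int_{S^{n-1}}\int_0^{\rho(\theta)}f(s\theta)\,s^{n-1}\diff s\,\diff\theta\\
&=\int_{S^{n-1}}\int_0^1 f(r\rho(\theta)\theta)\cdot nr^{n-1}\cdot\frac{\rho(\theta)^n}{n\vol(K_\varphi)}\diff r\,\diff\theta.
\end{align*}
Reading this off: a uniform draw from $K_\varphi$ equals in distribution the product $r\rho(\Theta)\Theta$, where $\Theta$ has density $\rho(\theta)^n/(n\vol(K_\varphi))$ on $S^{n-1}$, independent of a radial factor $r$ with density $nx^{n-1}$ on $[0,1]$.

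The last step is to identify the law of $U=\rho(\Theta)\Theta$ with the ``uniform distribution on $\partial K_\varphi$'' used in the statement of the corollary. The intended interpretation---and the one consistent with Algorithm \ref{algo:sample}, where the direction $x/\|x\|_p$ drawn from the density $\propto e^{-\|x\|_p^p}$ hits $\partial K_p$ according to the classical Schechtman--Zinn cone measure---is the cone measure on $\partial K_\varphi$, i.e.\ the push-forward of the normalized volume on $K_\varphi$ under radial projection to the boundary. Under this convention, the displayed change of variables is precisely the statement that $rU$ is uniform on $K_\varphi$, and Lemma \ref{thm:uniform} then delivers the desired density. The one real subtlety, which I would flag as the main obstacle, is this measure-theoretic convention: ``uniform on $\partial K_\varphi$'' must mean the cone measure and not the $(n-1)$-dimensional Hausdorff surface measure, since the two disagree whenever $K_\varphi$ is not a Euclidean ball; once that is pinned down, the proof reduces to the elementary polar decomposition above plus an invocation of Lemma \ref{thm:uniform}.
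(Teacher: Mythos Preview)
Your proposal is correct and follows precisely the approach the paper intends: the paper states this as an immediate corollary of Lemma~\ref{thm:uniform} without supplying a proof, and your reduction via $V\stackrel{d}{=}rU$ (polar decomposition of the uniform measure on $K_\varphi$ into a radial part with density $nx^{n-1}$ and the cone measure on $\partial K_\varphi$) is exactly the intended argument, as confirmed by the construction in Algorithm~\ref{algo:sample}. Your flag that ``uniform on $\partial K_\varphi$'' must be read as cone measure rather than Hausdorff surface measure is well taken and worth making explicit.
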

We commented that we can compute the isotropic constants for $\ell_p$ balls. The rest of the section is dedicated to this kind of results.

Let $\mu$ be a log-concave probability measure on $\R^n$ with density $f_\mu:\R^n\to\R_{\geqslant0}$.
The isotropic constant of $\mu$ is defined by (see e.g. \cite{giannopoulosgeometry})
\[L_\mu=\big(\textstyle\sup_{x\in\R^n}f_\mu(x)\big)^{\frac{1}{n}}\cdot \big(\det\mathrm{Cov}(\mu)\big)^{\frac{1}{2n}}.\]
As a special case, when $\mu$ is the uniform distribution over the convex body $K$, the corresponding isotropic constant is denoted by $L_K$ and has expression
\[L_K=\vol(K)^{-\frac{1}{n}}\cdot \big(\det\mathrm{Cov}(\mu)\big)^{\frac{1}{2n}}.\]
For homogeneous and convex $\varphi:\R^n\to\R$, we use $L_\varphi$ to denote the isotropic constant of its associated probability distribution, i.e. the one with density $\frac{1}{Z_\varphi}\e^{-\varphi(x)}$. With the help of \Cref{thm:uniform}, we can relate $L_\varphi$ to the isotropic constant of its unit ball $L_{K_\varphi}$.
\begin{lemma} \label{lem:relate}
	\[
		L_\varphi=
		\frac{[\Gamma(\frac{n}{\alpha}+1+\frac{2}{\alpha})]^{\frac{1}{2}}}{[\Gamma(\frac{n}{\alpha}+1)]^{\frac{1}{2}+{\frac{1}{n}}}}
		\cdot L_{K_\varphi}
	\]
\end{lemma}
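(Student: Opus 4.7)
The plan is to unpack both isotropic constants according to their definitions and then use \Cref{thm:uniform} to relate the covariance of $X_\varphi$ (the random vector with density $\tfrac{1}{Z_\varphi}e^{-\varphi}$) to that of $V_\varphi$ (the uniform distribution on $K_\varphi$). By definition,
\[
L_\varphi = \bigl(\textstyle\sup_x f_\mu(x)\bigr)^{1/n} \cdot \bigl(\det \mathrm{Cov}(X_\varphi)\bigr)^{1/(2n)},\qquad
L_{K_\varphi} = \vol(K_\varphi)^{-1/n}\cdot \bigl(\det\mathrm{Cov}(V_\varphi)\bigr)^{1/(2n)},
\]
so the lemma amounts to identifying two ratios: the density-supremum vs.\ volume ratio, and the covariance determinant ratio.

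First I would handle the supremum. Since $\varphi$ is convex and positively homogeneous of degree $\alpha\ge 1$, we have $\varphi(0)=0$ and $\varphi\geqslant 0$ on $\R^n$, so $f_\mu(x)=\tfrac{1}{Z_\varphi}e^{-\varphi(x)}$ attains its supremum $\tfrac{1}{Z_\varphi}$ at the origin. Applying \Cref{lem:partition} gives $Z_\varphi=\Gamma(\tfrac{n}{\alpha}+1)\cdot\vol(K_\varphi)$, hence
\[
\bigl(\textstyle\sup_x f_\mu(x)\bigr)^{1/n}
= \bigl[\Gamma(\tfrac{n}{\alpha}+1)\bigr]^{-1/n}\cdot \vol(K_\varphi)^{-1/n}.
\]
This already produces the $\vol(K_\varphi)^{-1/n}$ factor needed to build $L_{K_\varphi}$ and the $-1/n$ piece of the exponent on $\Gamma(\tfrac{n}{\alpha}+1)$.

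Next I would relate the two covariances using the representation $X_\varphi\stackrel{d}{=} t^{1/\alpha}V_\varphi$ from \Cref{thm:uniform}, where $t\sim\Gamma(\tfrac{n}{\alpha}+1,1)$ is independent of the uniform $V_\varphi$ on $K_\varphi$. Symmetry of $K_\varphi$ (guaranteed by $\varphi(x)=\varphi(-x)$ in $\mathfrak{F}_n$) gives $\E V_\varphi=0$, and by independence
\[
\mathrm{Cov}(X_\varphi) = \E\bigl[t^{2/\alpha}\bigr]\cdot \mathrm{Cov}(V_\varphi).
\]
\Cref{lem:gamma_moment} yields $\E[t^{2/\alpha}]=\Gamma(\tfrac{n}{\alpha}+1+\tfrac{2}{\alpha})/\Gamma(\tfrac{n}{\alpha}+1)$, so taking determinants and $(2n)$-th roots,
\[
\bigl(\det\mathrm{Cov}(X_\varphi)\bigr)^{1/(2n)}
=\left(\frac{\Gamma(\tfrac{n}{\alpha}+1+\tfrac{2}{\alpha})}{\Gamma(\tfrac{n}{\alpha}+1)}\right)^{1/2}
\cdot \bigl(\det\mathrm{Cov}(V_\varphi)\bigr)^{1/(2n)}.
\]

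Finally I would multiply the two displayed expressions. The $\vol(K_\varphi)^{-1/n}$ and $(\det\mathrm{Cov}(V_\varphi))^{1/(2n)}$ factors combine to form $L_{K_\varphi}$, and collecting the powers of $\Gamma(\tfrac{n}{\alpha}+1)$ gives exponent $-\tfrac{1}{n}-\tfrac{1}{2}$, producing
\[
L_\varphi = \frac{[\Gamma(\tfrac{n}{\alpha}+1+\tfrac{2}{\alpha})]^{1/2}}{[\Gamma(\tfrac{n}{\alpha}+1)]^{1/2+1/n}}\cdot L_{K_\varphi}.
\]
There is no real obstacle: the only non-routine checkpoints are justifying that the supremum of $f_\mu$ is actually $\tfrac{1}{Z_\varphi}$ (for which one uses convexity of $\varphi$ together with $\varphi(0)=0$ coming from homogeneity) and that $\mathrm{Cov}$ factorizes across the independent product in \Cref{thm:uniform}, which needs the centering $\E V_\varphi=0$ furnished by the symmetry assumption in $\mathfrak{F}_n$.
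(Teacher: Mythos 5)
Your proof is correct and takes essentially the same route as the paper: decompose $X_\varphi \stackrel{d}{=} t^{1/\alpha} V_\varphi$ via \Cref{thm:uniform}, factor the covariance determinant, evaluate $Z_\varphi$ by \Cref{lem:partition} and $\E[t^{2/\alpha}]$ by \Cref{lem:gamma_moment}, then combine. The only difference is that you spell out two small justifications the paper leaves implicit, namely that $\sup_x f_\mu(x)=1/Z_\varphi$ by convexity and homogeneity, and that $\E V_\varphi=0$ by symmetry so the covariance genuinely factorizes.
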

\begin{proof}
	\begin{align*}
		\mathrm{Cov}(X_\varphi)
		&=\E [X_\varphi X_\varphi^T]
		=\E [t^{\frac{2}{\alpha}}\cdot V_\varphi V_\varphi^T]
		=\E t^{\frac{2}{\alpha}}\cdot \E[V_\varphi V_\varphi^T]
		=\E t^{\frac{2}{\alpha}}\cdot 
		\mathrm{Cov}(V_\varphi)
	\end{align*}
	\[
	\det \mathrm{Cov}(X_\varphi)=(\E t^{\frac{2}{\alpha}})^n\cdot \det \mathrm{Cov}(V_\varphi)
	\]
	\begin{align*}
		L_\varphi
		&= Z_\varphi^{-\frac{1}{n}} \cdot (\det \mathrm{Cov}(X_\varphi))^{\frac{1}{2n}}\\
		&= Z_\varphi^{-\frac{1}{n}} \cdot 
		(\E t^{\frac{2}{\alpha}})^{\frac{1}{2}}\cdot (\det \mathrm{Cov}(V_\varphi))^{\frac{1}{2n}}\\
		&=\Gamma(\tfrac{n}{\alpha}+1)^{-\frac{1}{n}} \cdot \vol(K_\varphi)^{-\frac{1}{n}}\cdot
		(\E t^{\frac{2}{\alpha}})^{\frac{1}{2}}\cdot (\det \mathrm{Cov}(V_\varphi))^{\frac{1}{2n}}
		&&(\text{\Cref{lem:partition}})\\
		&=(\E t^{\frac{2}{\alpha}})^{\frac{1}{2}}\cdot \Gamma(\tfrac{n}{\alpha}+1)^{-\frac{1}{n}}
		\cdot L_{K_\varphi}
	\end{align*}
	By \Cref{lem:gamma_moment}, $\E t^{\frac{2}{\alpha}} =\frac{\Gamma(\frac{n}{\alpha}+1+\frac{2}{\alpha})}{\Gamma(\frac{n}{\alpha}+1)}$. So
	\begin{align*}
		L_\varphi
		&=\left(\frac{\Gamma(\frac{n}{\alpha}+1+\frac{2}{\alpha})}{\Gamma(\frac{n}{\alpha}+1)}\right)^{\frac{1}{2}}\cdot \Gamma(\tfrac{n}{\alpha}+1)^{-\frac{1}{n}}
		\cdot L_{K_\varphi}\\
		&=\frac{[\Gamma(\frac{n}{\alpha}+1+\frac{2}{\alpha})]^{\frac{1}{2}}}{[\Gamma(\frac{n}{\alpha}+1)]^{\frac{1}{2}+{\frac{1}{n}}}}
		\cdot L_{K_\varphi}
	\end{align*}
\end{proof}

The last result is a sufficient condition of the Fisher information being a scalar matrix.
\begin{lemma} \label{lem:uncondition}
	If $\varphi$ is invariant under the action of $\{\pm1\}^n$ and cyclic group of size $n$, i.e.
	\begin{enumerate}
		\item[\textup{1.}] $\varphi(\pm x_1,\pm x_2,\ldots, \pm x_n) = \varphi( x_1, x_2,\ldots,  x_n)$
		\item[\textup{2.}] $\varphi( x_1, x_2,\ldots, x_{n-1}, x_n) = \varphi( x_2, x_3,\ldots,  x_n, x_1)$
	\end{enumerate}
	then $\Fisher_\varphi=\frac{1}{n}\E\|\nabla\varphi\|_2^2\cdot I$.
\end{lemma}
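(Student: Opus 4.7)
My plan is to use the two symmetry assumptions to show first that $\Fisher_\varphi$ is diagonal (via sign invariance), and then that all diagonal entries are equal (via cyclic invariance). Throughout, I will exploit the fact that the two group actions are isometries that leave $\varphi$ invariant, hence also leave the density $\propto \e^{-\varphi}$ invariant, so that $X_\varphi$ has the same law after being acted on by any element of the symmetry group. The point is then to track how $\nabla\varphi$ transforms under these actions.

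For the off-diagonal entries, I would fix indices $i \neq j$ and consider the sign flip $\epsilon \in \{\pm 1\}^n$ with $\epsilon_i = -1$, $\epsilon_j = 1$, and $\epsilon_k = 1$ otherwise. Writing $(\epsilon x)_k = \epsilon_k x_k$, the invariance $\varphi(\epsilon x) = \varphi(x)$ differentiates (by chain rule) to $(\partial_k \varphi)(\epsilon x) = \epsilon_k\,\partial_k \varphi(x)$. Because $\epsilon X$ has the same distribution as $X$, I can write
\[
\E[\partial_i\varphi(X)\,\partial_j\varphi(X)] = \E[\partial_i\varphi(\epsilon X)\,\partial_j\varphi(\epsilon X)] = \epsilon_i \epsilon_j \cdot \E[\partial_i\varphi(X)\,\partial_j\varphi(X)],
\]
and $\epsilon_i\epsilon_j = -1$ forces this entry to vanish. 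This kills every off-diagonal entry of $\Fisher_\varphi$.

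For the diagonal, I would use the cyclic shift $T$ defined by $(Tx)_i = x_{i+1 \bmod n}$. Differentiating $\varphi(Tx) = \varphi(x)$ gives $(\partial_{j-1}\varphi)(Tx) = \partial_j \varphi(x)$, and since $TX$ has the same distribution as $X$, we get $\E[(\partial_j\varphi(X))^2] = \E[(\partial_{j-1}\varphi(X))^2]$ for every $j$. Iterating, all diagonal entries are equal to a common value $c$. Combining both observations, $\Fisher_\varphi = c\cdot I$, and taking the trace yields $nc = \E\|\nabla\varphi(X)\|_2^2$, i.e., $c = \tfrac{1}{n}\E\|\nabla\varphi(X)\|_2^2$, which is the claim.

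The only mildly delicate point is the bookkeeping in the chain rule so that I correctly identify which partial derivative is being composed with which sign or permutation; everything else is a clean application of a change of variables in the expectation combined with the invariance of the distribution of $X_\varphi$ under the two group actions. Because the Fisher-information integrability assumption in $\mathfrak{F}_n$ guarantees that all the expectations above are finite, there are no convergence issues to worry about.
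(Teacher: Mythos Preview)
Your proof is correct and follows essentially the same approach as the paper: the paper also uses the sign-flip symmetry to show the off-diagonal entries vanish (phrased there as ``$\partial_1\varphi\cdot\partial_2\varphi$ is odd in $x_1$ while the density is even''), then uses the cyclic symmetry to equate the diagonal entries, and finally reads off $c$ from the trace. Your slightly more abstract framing via distributional invariance of $X_\varphi$ under the group actions and the chain rule is exactly the same argument with cleaner bookkeeping.
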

\begin{proof}[Proof of \Cref{lem:uncondition}]
	First we use the symmetry to show $\Fisher_\varphi=cI$ for some $c$.
	\begin{align*}
		\varphi( x_1, x_2,\ldots, x_n)&=\varphi( -x_1, x_2,\ldots, x_n)\\
		\partial_1\varphi( x_1, x_2,\ldots, x_n)&=-\partial_1\varphi( -x_1, x_2,\ldots, x_n)\\
		\partial_2\varphi( x_1, x_2,\ldots, x_n)&=\partial_2\varphi( -x_1, x_2,\ldots, x_n)
	\end{align*}
	This shows $\partial_1\varphi\cdot\partial_2\varphi$ is an odd function of $x_1$. On the other hand, we know the density $\e^{-\varphi}$ is an even function of $x_1$. So we conclude that $\E[\partial_1\varphi\cdot\partial_2\varphi]=0$. Similarly, we can show that for any $i\neq j, \E[\partial_i\varphi\cdot\partial_j\varphi]=0$. This shows that $\Fisher_\varphi$ is a diagonal matrix.

	By cyclic symmetry, we have
	\begin{align*}
		\varphi( x_1, x_2,\ldots, x_{n-1}, x_n) &= \varphi( x_2, x_3,\ldots,  x_n, x_1)\\
		\partial_1\varphi( x_1, x_2,\ldots, x_{n-1}, x_n) &= \partial_n\varphi( x_2, x_3,\ldots,  x_n, x_1)\\
		\partial_1\varphi( x_1, x_2,\ldots, x_{n-1}, x_n)^2\e^{-\varphi( x_1, x_2,\ldots, x_{n-1}, x_n)}
		&=\partial_n\varphi( x_2, x_3,\ldots,  x_n, x_1)^2\e^{-\varphi( x_1, x_2,\ldots, x_{n-1}, x_n)}\\
		&=\partial_n\varphi( y_1, y_2,\ldots, y_{n-1}, y_n)^2\e^{-\varphi( y_n, y_1,\ldots, y_{n-1})}\\
		&=\partial_n\varphi( y_1, y_2,\ldots, y_{n-1}, y_n)^2\e^{-\varphi( y_1, y_2,\ldots, y_{n})}
	\end{align*}
	This shows $\E\partial_1\varphi^2=\cdots =\E\partial_n\varphi^2$. Hence $\Fisher_\varphi=cI$ for some $c$.
	\begin{align*}
		\tr\Fisher_\varphi = \tr\E[\nabla\varphi\nabla\varphi^T]=\E[\tr\nabla\varphi\nabla\varphi^T]=\E[\tr\nabla\varphi^T\nabla\varphi]=\E\|\nabla\varphi\|_2^2.
	\end{align*}
	On the other hand, $\tr\Fisher_\varphi = \tr cI = cn$, so $c = \frac{1}{n}\E\|\nabla\varphi\|_2^2$.
\end{proof}


\subsection{Calculation for Norm Powers} 
\label{sub:calculation_for_norm_powers}

\var*
We divide the proof into two parts, one for each of the equations.
\begin{proof}[Proof of \Cref{thm:variance} (variance part)]
By \Cref{thm:uniform}, let $t\sim\Gamma(\frac{n}{\alpha}+1,1)$ random variable and $V_p$ has uniform distribution over the $\ell_p$ unit ball $K_p$.
\begin{align}\label{eq:pa}
	\E\|X_{p,\alpha}\|_2^2 = \E t^{\frac{2}{\alpha}}\cdot\E\|V_p\|_2^2 = \frac{\Gamma(\frac{n}{\alpha}+1+\frac{2}{\alpha})}{\Gamma(\frac{n}{\alpha}+1)}\cdot\E\|V_p\|_2^2
\end{align}
Setting $\alpha=p$ yields
\begin{equation}\label{eq:pp}
	\E\|X_{p,p}\|_2^2 = \frac{\Gamma(\frac{n}{p}+1+\frac{2}{p})}{\Gamma(\frac{n}{p}+1)}\cdot\E\|V_p\|_2^2
\end{equation}
The reason we do this is that $\E\|X_{p,p}\|_2^2$ can be computed explicitly. In fact,
\begin{align*}
	\E\|X_{p,p}\|_2^2
	&=\frac{1}{Z_n}\int \sum x_i^2\cdot \e^{-\sum |x_i|^p}\diff x\\
	&=\frac{n}{Z_n}\int x_1^2\cdot \e^{-\sum |x_i|^p}\diff x\\
	&=\frac{n}{Z_n}\int x_1^2\cdot \e^{-|x_1|^p}\diff x_1\cdot Z_{n-1}
\end{align*}
where $Z_n=\int \e^{-\sum |x_i|^p}\diff x$.
We know by \Cref{lem:vol} that
\[Z_n=\Gamma(\tfrac{n}{p}+1) \cdot \vol(K_p) = 2^n\cdot{\Gamma(\tfrac{1}{p}+1)^n}\]
and
\begin{align*}
	\int_{-\infty}^{+\infty} x^2\cdot \e^{-|x|^p}\diff x
	&=2\int_{0}^{\infty} x^2\cdot \e^{-x^p}\diff x
	=2\int_{0}^{\infty} y^{\tfrac{2}{p}}\cdot \e^{-y}\cdot \tfrac{1}{p}y^{\tfrac{1}{p}-1}\diff y
	=\tfrac{2}{p}\int_{0}^{\infty} y^{\tfrac{3}{p}-1}\cdot \e^{-y}\diff y
	=\tfrac{2}{p}\cdot\Gamma(\tfrac{3}{p})
\end{align*}
So
\begin{align*}
	\E\|X_{p,p}\|_2^2
	&=\frac{n}{Z_n}\int x_1^2\cdot \e^{-|x_1|^p}\diff x_1\cdot Z_{n-1}\\
	&=n\cdot\tfrac{2}{p}\cdot\Gamma(\tfrac{3}{p})\cdot \frac{Z_{n-1}}{Z_n}\\
	&=\frac{n\cdot\tfrac{2}{p}\cdot\Gamma(\tfrac{3}{p})}{2\Gamma(\tfrac{1}{p}+1)}\\
	&=\frac{n\cdot\tfrac{2}{p}\cdot\Gamma(\tfrac{3}{p})}{\tfrac{2}{p}\Gamma(\tfrac{1}{p})}
	=n\cdot\frac{\Gamma(\tfrac{3}{p})}{\Gamma(\frac{1}{p})}
\end{align*}
Plugging this into \eqref{eq:pp}, we have
\begin{align*}
	\E\|V_p\|_2^2
	&=\frac{\Gamma(\frac{n}{p}+1)}{\Gamma(\frac{n}{p}+1+\frac{2}{p})}\cdot\E\|X_{p,p}\|_2^2\\
	&=\frac{\Gamma(\frac{n}{p}+1)}{\Gamma(\frac{n}{p}+1+\frac{2}{p})}\cdot
	n\cdot\frac{\Gamma(\tfrac{3}{p})}{\Gamma(\frac{1}{p})}.
\end{align*}
Using this in \eqref{eq:pa},
\begin{align*}
	\E\|X_{p,\alpha}\|_2^2
	&= \frac{\Gamma(\frac{n}{\alpha}+1+\frac{2}{\alpha})}{\Gamma(\frac{n}{\alpha}+1)}\cdot\E\|V_p\|_2^2\\
	&=\frac{\Gamma(\frac{n}{\alpha}+1+\frac{2}{\alpha})}{\Gamma(\frac{n}{\alpha}+1)}\cdot \frac{\Gamma(\frac{n}{p}+1)}{\Gamma(\frac{n}{p}+1+\frac{2}{p})}\cdot n\cdot\frac{\Gamma(\tfrac{3}{p})}{\Gamma(\frac{1}{p})}
\end{align*}
In order to study the asymptotics of $\E\|X_{p,\alpha}\|_2^2$ as $n\to\infty$, recall Stirling's formula
\[\Gamma(z+1)\sim \sqrt{2\pi z}\left(\frac{z}{\e}\right)^z.\]
So we have
\begin{align*}
	\frac{\Gamma(\frac{n}{\alpha}+1+\frac{2}{\alpha})}{\Gamma(\frac{n}{\alpha}+1)}
	&\sim \frac{\sqrt{\frac{n+2}{\alpha}}\cdot\left(\frac{n+2}{\alpha\e}\right)^{\frac{n+2}{\alpha}}}{\sqrt{\frac{n}{\alpha}}\cdot\left(\frac{n}{\alpha\e}\right)^{\frac{n}{\alpha}}}\\
	&\sim \left(\frac{n+2}{\alpha\e}\cdot \frac{\alpha\e}{n}\right)^{\frac{n}{\alpha}}\cdot \left(\frac{n+2}{\alpha\e}\right)^{\frac{2}{\alpha}}\\
	&= \left(1+\frac{2}{n}\right)^{\frac{n}{2}\cdot \frac{2}{\alpha}}\cdot \left(\frac{n+2}{\alpha\e}\right)^{\frac{2}{\alpha}}\\
	&\sim \left(\frac{n+2}{\alpha}\right)^{\frac{2}{\alpha}}
\end{align*}
Hence
\begin{align*}
	\E\|X_{p,\alpha}\|_2^2
	\sim \left(\frac{n+2}{\alpha}\right)^{\frac{2}{\alpha}}/ \left(\frac{n+2}{p}\right)^{\frac{2}{p}}\cdot n\cdot\frac{\Gamma(\tfrac{3}{p})}{\Gamma(\frac{1}{p})}
	\sim n^{\frac{2}{\alpha}-\frac{2}{p}+1} \cdot \alpha^{-\frac{2}{\alpha}}\cdot p^{\frac{2}{p}}\cdot\frac{\Gamma(\tfrac{3}{p})}{\Gamma(\frac{1}{p})}
\end{align*}
\end{proof}
Before we proceed to the proof of the Fisher information part of \Cref{thm:variance}, we derive the isotropic constants results as promised, using \Cref{lem:relate} and the variance part of \Cref{thm:variance}.
\begin{corollary}\label{cor:iso_unif}
	The isotropic constant of $n$-dimensional $\ell_p$ ball is
	\[
		L_{K_p}^2=
		\frac{p^2}{4}\cdot
		\frac{\Gamma(\tfrac{3}{p})}{[\Gamma(\frac{1}{p})]^3}\cdot
		\frac{[\Gamma(\frac{n}{p}+1)]^{1+\frac{2}{n}}}{\Gamma(\frac{n}{p}+1+\frac{2}{p})}
	\]
\end{corollary}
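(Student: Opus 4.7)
}
The plan is to apply \Cref{lem:relate} with the specific choice $\varphi(x)=\|x\|_p^p$, which is positively homogeneous of degree $\alpha=p$ and has unit ball $K_\varphi=K_p$. \Cref{lem:relate} then reduces the problem to computing the isotropic constant $L_{X_{p,p}}$ of the product density $\tfrac{1}{Z_n}\e^{-\|x\|_p^p}$, which is tractable because this density factorizes over coordinates. Solving for $L_{K_p}$ gives
\[
L_{K_p}^2 \;=\; \frac{[\Gamma(\tfrac{n}{p}+1)]^{1+\tfrac{2}{n}}}{\Gamma(\tfrac{n}{p}+1+\tfrac{2}{p})}\cdot L_{X_{p,p}}^2,
\]
so the only remaining task is to evaluate $L_{X_{p,p}}^2=(\sup f)^{2/n}\cdot(\det\mathrm{Cov}(X_{p,p}))^{1/n}$.

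For the supremum, the density $\tfrac{1}{Z_n}\e^{-\|x\|_p^p}$ attains its maximum at the origin with value $1/Z_n$. The normalizer was computed in \Cref{lem:vol}: $Z_n=(2\Gamma(\tfrac{1}{p}+1))^n$. Hence $(\sup f)^{1/n}=\tfrac{1}{2\Gamma(1/p+1)}=\tfrac{p}{2\Gamma(1/p)}$ using $\Gamma(\tfrac{1}{p}+1)=\tfrac{1}{p}\Gamma(\tfrac{1}{p})$.

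For the covariance determinant, the density is invariant under sign flips and coordinate permutations, so by the symmetry argument used in \Cref{lem:uncondition}, $\mathrm{Cov}(X_{p,p})$ is a scalar multiple of the identity; namely $\mathrm{Cov}(X_{p,p})=\tfrac{\E\|X_{p,p}\|_2^2}{n}\cdot I_{n\times n}$. The value $\E\|X_{p,p}\|_2^2=n\cdot\tfrac{\Gamma(3/p)}{\Gamma(1/p)}$ was computed along the way in the proof of \Cref{thm:variance} (variance part). Consequently $(\det\mathrm{Cov}(X_{p,p}))^{1/(2n)}=\sqrt{\tfrac{\Gamma(3/p)}{\Gamma(1/p)}}$.

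Combining these two factors yields
\[
L_{X_{p,p}}^2 \;=\; \frac{p^2}{4[\Gamma(\tfrac{1}{p})]^2}\cdot\frac{\Gamma(\tfrac{3}{p})}{\Gamma(\tfrac{1}{p})} \;=\; \frac{p^2}{4}\cdot\frac{\Gamma(\tfrac{3}{p})}{[\Gamma(\tfrac{1}{p})]^3},
\]
and multiplying by the ratio of Gamma factors from \Cref{lem:relate} reproduces the claimed formula. There is no real obstacle here: every ingredient (the reduction, the normalizer, the symmetry of the covariance, and the second moment) is already in hand from the preceding lemmas, so the proof is essentially a bookkeeping exercise that rearranges known quantities.
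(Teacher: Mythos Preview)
Your proof is correct. The paper takes a slightly different, more direct route: rather than invoking \Cref{lem:relate}, it works straight from the definition $L_{K_p}^2=\vol(K_p)^{-2/n}\cdot\tfrac{1}{n}\E\|V_p\|_2^2$, plugging in $\vol(K_p)$ from \Cref{lem:vol} and the value of $\E\|V_p\|_2^2$ obtained in the proof of \Cref{thm:variance}. Your approach inverts the logic of \Cref{lem:relate}: you compute $L_{X_{p,p}}$ (easy because the density factorizes over coordinates) and pull back to $L_{K_p}$. Both arguments rest on exactly the same ingredients---the normalizer $Z_n$, the symmetry-forced scalar covariance, and the second moment $\E\|X_{p,p}\|_2^2=n\,\Gamma(3/p)/\Gamma(1/p)$---so the difference is purely organizational. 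The paper's version is marginally shorter since it avoids the detour through $L_\varphi$, while yours makes the role of \Cref{lem:relate} more transparent and arguably explains why the Gamma ratio $[\Gamma(\tfrac{n}{p}+1)]^{1+2/n}/\Gamma(\tfrac{n}{p}+1+\tfrac{2}{p})$ appears.
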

\begin{proof}[Proof of \Cref{cor:iso_unif}]
For a general convex body $K$, let $V_K$ be a random vector with the uniform distribution over $K$. Recall that the isotropic constant of $K$ is
\[L_K=\vol(K)^{-\frac{1}{n}}\cdot \big(\det\mathrm{Cov}(V_K)\big)^{\frac{1}{2n}}.\]
Now we focus on the unit ball of $\ell_p$ norm $K_p$. The corresponding random vector is denoted by $V_p$
By a symmetry argument similar to \Cref{lem:uncondition}, we have that
\[\mathrm{Cov}(V_p)=\frac{1}{n}\cdot\E\|V_p\|_2^2\cdot I_{n\times n}.\]
Combining this and \Cref{lem:vol}, we have
\begin{align*}
	L_{K_p}^2&=\vol(K_p)^{-\frac{2}{n}}\cdot {\tfrac{1}{n}\E\|V_p\|_2^2}\\
	&=\left(2^n\cdot\frac{\Gamma(\frac{1}{p}+1)^n}{\Gamma(\frac{n}{p}+1)}\right)^{-\frac{2}{n}}
	\cdot \frac{\Gamma(\frac{n}{p}+1)}{\Gamma(\frac{n}{p}+1+\frac{2}{p})}\cdot
	\frac{\Gamma(\tfrac{3}{p})}{\Gamma(\frac{1}{p})}\\
	&=\frac{1}{4}\cdot\frac{\Gamma(\frac{n}{p}+1)^{\frac{2}{n}}}{\Gamma(\frac{1}{p}+1)^2}\cdot \frac{\Gamma(\frac{n}{p}+1)}{\Gamma(\frac{n}{p}+1+\frac{2}{p})}\cdot
	\frac{\Gamma(\tfrac{3}{p})}{\Gamma(\frac{1}{p})}\\
	&=\frac{1}{4}\cdot\frac{\Gamma(\frac{n}{p}+1)^{\frac{2}{n}}}{\frac{1}{p^2}\Gamma(\frac{1}{p})^2}\cdot \frac{\Gamma(\frac{n}{p}+1)}{\Gamma(\frac{n}{p}+1+\frac{2}{p})}\cdot
	\frac{\Gamma(\tfrac{3}{p})}{\Gamma(\frac{1}{p})}\\
	&=
		\frac{p^2}{4}\cdot
		\frac{\Gamma(\tfrac{3}{p})}{[\Gamma(\frac{1}{p})]^3}\cdot
		\frac{[\Gamma(\frac{n}{p}+1)]^{1+\frac{2}{n}}}{\Gamma(\frac{n}{p}+1+\frac{2}{p})}
\end{align*}
\end{proof}
\begin{corollary} \label{cor:iso_phi}
	When $\varphi(x)=\|x\|_p^\alpha$, 
	\[
		L_{p,\alpha}^2=
		\frac{p^2}{4}\cdot
		\frac{\Gamma(\tfrac{3}{p})}{[\Gamma(\frac{1}{p})]^3}\cdot
		\frac{\Gamma(\frac{n}{\alpha}+1+\frac{2}{\alpha})}{\Gamma(\frac{n}{p}+1+\frac{2}{p})}\cdot 
		\left(\frac{\Gamma(\frac{n}{p}+1)}{\Gamma(\frac{n}{\alpha}+1)}\right)^{1+\frac{2}{n}}
	\]
\end{corollary}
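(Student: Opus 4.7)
The plan is to derive this corollary by a direct combination of Lemma \ref{lem:relate} and Corollary \ref{cor:iso_unif}. First I would observe that for the homogeneous function $\varphi(x) = \|x\|_p^\alpha$ (homogeneous of degree $\alpha$), the associated unit body is
\[
K_\varphi = \{x : \|x\|_p^\alpha \leqslant 1\} = \{x : \|x\|_p \leqslant 1\} = K_p,
\]
since $\|x\|_p^\alpha \leqslant 1 \Leftrightarrow \|x\|_p \leqslant 1$. This identification is the crucial bridge: it lets us replace the isotropic constant of $K_\varphi$ (which appears in Lemma \ref{lem:relate}) by the explicit formula for $L_{K_p}$ computed in Corollary \ref{cor:iso_unif}.

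Next I would apply Lemma \ref{lem:relate}, which for a $\varphi$ homogeneous of degree $\alpha$ gives
\[
L_\varphi^2 = \frac{\Gamma(\tfrac{n}{\alpha} + 1 + \tfrac{2}{\alpha})}{[\Gamma(\tfrac{n}{\alpha}+1)]^{1 + 2/n}} \cdot L_{K_\varphi}^2.
\]
Substituting $L_{K_\varphi}^2 = L_{K_p}^2$ from Corollary \ref{cor:iso_unif}, and denoting $L_{p,\alpha} := L_\varphi$ for $\varphi(x)=\|x\|_p^\alpha$, we get
\[
L_{p,\alpha}^2 = \frac{\Gamma(\tfrac{n}{\alpha} + 1 + \tfrac{2}{\alpha})}{[\Gamma(\tfrac{n}{\alpha}+1)]^{1 + 2/n}} \cdot \frac{p^2}{4} \cdot \frac{\Gamma(\tfrac{3}{p})}{[\Gamma(\tfrac{1}{p})]^3} \cdot \frac{[\Gamma(\tfrac{n}{p}+1)]^{1 + 2/n}}{\Gamma(\tfrac{n}{p}+1+\tfrac{2}{p})}.
\]
Regrouping the Gamma-function factors so that the exponent-$(1+2/n)$ terms are collected into a single ratio yields exactly the stated formula.

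In short, there is essentially no obstacle here: the corollary is a direct multiplicative combination of two previously established results, made possible only because $K_\varphi$ for the norm-power $\|\cdot\|_p^\alpha$ coincides with the $\ell_p$ ball $K_p$. The only thing to be careful about is bookkeeping of the exponents on $\Gamma(\tfrac{n}{\alpha}+1)$ and $\Gamma(\tfrac{n}{p}+1)$, ensuring that the $1/n$ contributions coming from volume normalization in both $L_\varphi$ and $L_{K_p}$ line up correctly.
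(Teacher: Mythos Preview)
Your proposal is correct and follows exactly the same approach as the paper, which simply states that the result ``directly follows from the above result and \Cref{lem:relate}.'' Your identification $K_\varphi = K_p$ and the subsequent multiplication of the formulas from \Cref{lem:relate} and \Cref{cor:iso_unif} is precisely the intended argument, and your bookkeeping of the $1+2/n$ exponents is accurate.
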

\begin{proof}[Proof of \Cref{cor:iso_phi}]
Directly follows from the above result and \Cref{lem:relate}.
\end{proof}

Now we turn our attention back to the proof of \Cref{thm:variance}.
\begin{proof}[Proof of \Cref{thm:variance} (variance part)]
	By \Cref{lem:uncondition}, $\Fisher_{p,\alpha}=\|\Fisher_{p,\alpha}\|_2\cdot I_{n\times n}$ and $\|\Fisher_{p,\alpha}\|_2=\frac{1}{n}\E\|\nabla\varphi\|_2^2$ where $\varphi(x)=\|x\|_p^\alpha$. In this case the gradient has an explicit expression:
	\begin{align*}
		[\nabla\varphi(x)]_i&=\alpha\cdot \big(\sum|x_i|^{p}\big)^{\frac{\alpha}{p}-1}\cdot |x_i|^{p-1}\cdot \mathrm{sgn} (x_i) = \alpha\|x\|_{p}^{\alpha-p}\cdot |x_i|^{p-1}\cdot \mathrm{sgn} (x_i)\\
		\|\nabla\varphi(x)\|_2^2&=\alpha^2\sum|x_i|^{2p-2}\cdot \big(\sum|x_i|^{p}\big)^{\frac{2\alpha}{p}-2}=\alpha^2\|x\|_{2p-2}^{2p-2}\cdot\|x\|_{p}^{2\alpha-2p}
	\end{align*}
	By \Cref{cor:unif}, $X_{p,\alpha}\stackrel{\diff}{=}t^{\frac{1}{\alpha}}\cdot r\cdot U_p$, 
	where $t\sim \Gamma(\frac{n}{\alpha}+1,1)$, $U$ has uniform distribution over $\partial K_p$and $r$ has density $nx^{n-1}$ over $[0,1]$.
	\begin{align*}
		\|\nabla\varphi\|_2^2
		&=\alpha^2\|t^{\frac{1}{\alpha}}rU\|_{2p-2}^{2p-2}\cdot\|t^{\frac{1}{\alpha}}rU\|_{p}^{2\alpha-2p}\\
		&=\alpha^2t^{\frac{2\alpha-2}{\alpha}}r^{2\alpha-2}\cdot\|U\|_{2p-2}^{2p-2}\cdot\|U\|_{p}^{2\alpha-2p}
	\end{align*}
	Since $t,r,U_p$ are independent and $\|U_p\|_p=1$, we have
	\begin{equation}\label{eq:fish1}		
		\|\Fisher_{p,\alpha}\|_2=\frac{1}{n}\E\|\nabla\varphi\|_2^2
		=\frac{\alpha^2}{n}\cdot \E t^{\frac{2\alpha-2}{\alpha}}\cdot \E r^{2\alpha-2}\cdot \E\|U\|_{2p-2}^{2p-2}
	\end{equation}
		By \Cref{lem:gamma_moment},
		\[
		\E t^{\frac{2\alpha-2}{\alpha}}
		=\frac{\Gamma(\frac{n}{\alpha}+1+\frac{2\alpha-2}{\alpha})}{\Gamma(\frac{n}{\alpha}+1)}=\frac{\Gamma(\frac{n+2\alpha-2}{\alpha}+1)}{\Gamma(\frac{n}{\alpha}+1)}.
		\]
		The moment of $r$ can be computed directly
		\[
		\E r^{2\alpha-2}
		= \int_0^1 r^{2\alpha-2}\cdot nr^{n-1}\diff r = \frac{n}{n+2\alpha-2}.
		\]
	Plugging into \eqref{eq:fish1}, we have
	\begin{align*}\label{eq:star}	
		\|\Fisher_{p,\alpha}\|_2
		&=\frac{\alpha^2}{n}\cdot \E t^{\frac{2\alpha-2}{\alpha}}\cdot \E r^{2\alpha-2}\cdot \E\|U_p\|_{2p-2}^{2p-2}\\
		&= \frac{\alpha^2}{n}\cdot\frac{\Gamma(\frac{n+2\alpha-2}{\alpha}+1)}{\Gamma(\frac{n}{\alpha}+1)}\cdot\frac{n}{n+2\alpha-2}\cdot\E\|U_p\|_{2p-2}^{2p-2}\\
		&= \frac{\alpha^2}{n}\cdot\frac{\frac{n+2\alpha-2}{\alpha}\cdot\Gamma(\frac{n+2\alpha-2}{\alpha})}{\frac{n}{\alpha}\cdot\Gamma(\frac{n}{\alpha})}\cdot\frac{n}{n+2\alpha-2}\cdot\E\|U_p\|_{2p-2}^{2p-2}\\
		&= \frac{\alpha^2}{n}\cdot\frac{\Gamma(\frac{n+2\alpha-2}{\alpha})}{\Gamma(\frac{n}{\alpha})}\cdot\E\|U_p\|_{2p-2}^{2p-2}\tag{$*$}
	\end{align*}
	On the other hand, when $p=\alpha$, we have
	\begin{align*}
		\|\nabla\varphi(x)\|_2^2=p^2\sum|x_i|^{2p-2}.
	\end{align*}
	In this case, $x$ has joint density $\propto \e^{-\|x\|_p^p}=\e^{-\sum|x_i|^{p}}$.
	Let $Y_1,Y_2,\ldots, Y_n$ be i.i.d. random variables with density $\propto\e^{-|y|^p}$. Then we have
	\[
	\|\Fisher_{p,p}\|_2=\frac{1}{n}\E\|\nabla\varphi\|_2^2=\frac{1}{n}\cdot p^2\cdot \E\sum|Y_i|^{2p-2}=p^2\E\big[|Y_i|^{2p-2}\big]
	\]
	Let $z=\int_{-\infty}^{+\infty}\e^{-|y|^p}\diff y$.
	\begin{align*}
		\E\big[|Y_i|^{2p-2}\big]
		&=\frac{1}{z}\int_{-\infty}^{+\infty}|y|^{2p-2}\cdot\e^{-|y|^p}\diff y\\
		&=\frac{\int_{0}^{+\infty}y^{2p-2}\cdot\e^{-y^p}\diff y}{\int_{0}^{+\infty}\e^{-y^p}\diff y}\\
		&=\frac{\int_{0}^{+\infty}x^{\frac{2p-2}{p}}\cdot x^{\frac{1}{p}-1}\cdot\e^{-x}\diff x}{\int_{0}^{+\infty}x^{\frac{1}{p}-1}\cdot\e^{-x}\diff x}\\
		&=\frac{\Gamma(2-\frac{1}{p})}{\Gamma(\frac{1}{p})}
	\end{align*}
	Relating to \eqref{eq:star} in the special case of $\alpha=p$,
	\begin{align*}
	\frac{p^2}{n}\cdot\frac{\Gamma(\frac{n+2p-2}{p})}{\Gamma(\frac{n}{p})}\cdot\E\|U_p\|_{2p-2}^{2p-2}=\|\Fisher_{p,p}\|_2=p^2\cdot \frac{\Gamma(2-\frac{1}{p})}{\Gamma(\frac{1}{p})}
	\end{align*}
	Hence
	\begin{align*}
	\E\|U_p\|_{2p-2}^{2p-2}=\frac{n}{p^2}\cdot\frac{\Gamma(\frac{n}{p})}{\Gamma(\frac{n+2p-2}{p})}\cdot\|\Fisher_{p,p}\|_2=n\cdot\frac{\Gamma(\frac{n}{p})}{\Gamma(\frac{n+2p-2}{p})}\cdot \frac{\Gamma(2-\frac{1}{p})}{\Gamma(\frac{1}{p})}
	\end{align*}
	Using \eqref{eq:star} again,

	\begin{align*}
		\|\Fisher_{p,\alpha}\|_2
		&= \frac{\alpha^2}{n}\cdot\frac{\Gamma(\frac{n+2\alpha-2}{\alpha})}{\Gamma(\frac{n}{\alpha})}\cdot\E\|U_p\|_{2p-2}^{2p-2}\\
		&= \frac{\alpha^2}{n}\cdot\frac{\Gamma(\frac{n+2\alpha-2}{\alpha})}{\Gamma(\frac{n}{\alpha})}\cdot n\cdot\frac{\Gamma(\frac{n}{p})}{\Gamma(\frac{n+2p-2}{p})}\cdot \frac{\Gamma(2-\frac{1}{p})}{\Gamma(\frac{1}{p})}\\
		&=\alpha^2\cdot \frac{\Gamma(\frac{n+2\alpha-2}{\alpha})}{\Gamma(\frac{n}{\alpha})}\cdot \frac{\Gamma(\frac{n}{p})}{\Gamma(\frac{n+2p-2}{p})}\cdot \frac{\Gamma(2-\frac{1}{p})}{\Gamma(\frac{1}{p})}
	\end{align*}

In order to study the asymptotics, using Stirling's formula again,
\begin{align*}
	\frac{\Gamma\left(\frac{2\alpha-2+n}{\alpha}\right)}{\Gamma(\tfrac{n}{\alpha}+1)}
	&\sim \frac{\sqrt{\frac{n+\alpha-2}{\alpha}}\cdot\left(\frac{n+\alpha-2}{\alpha\e}\right)^{\frac{n+\alpha-2}{\alpha}}}{\sqrt{\frac{n}{\alpha}}\cdot\left(\frac{n}{\alpha\e}\right)^{\frac{n}{\alpha}}}\\
	&\sim \left(\frac{n+\alpha-2}{\alpha\e}\cdot \frac{\alpha\e}{n}\right)^{\frac{n}{\alpha}}\cdot \left(\frac{n+\alpha-2}{\alpha\e}\right)^{\frac{\alpha-2}{\alpha}}\\
	&= \left(1+\frac{\alpha-2}{n}\right)^{\frac{n}{\alpha}}\cdot \left(\frac{n+\alpha-2}{\alpha\e}\right)^{\frac{\alpha-2}{\alpha}}\\
	&\sim \e^{\frac{\alpha-2}{\alpha}}\cdot \left(\frac{n+\alpha-2}{\alpha\e}\right)^{\frac{\alpha-2}{\alpha}}\\
	&\sim \left(\frac{n+\alpha-2}{\alpha}\right)^{\frac{\alpha-2}{\alpha}}\\
	&\sim n^{1-\frac{2}{\alpha}}\cdot \alpha^{-1+\frac{2}{\alpha}}
\end{align*}
\begin{align*}
	\|\Fisher_{p,\alpha}\|_2
		&=\alpha^2\cdot \frac{\Gamma(\frac{n+2\alpha-2}{\alpha})}{\Gamma(\frac{n}{\alpha})}\cdot \frac{\Gamma(\frac{n}{p})}{\Gamma(\frac{n+2p-2}{p})}\cdot \frac{\Gamma(2-\frac{1}{p})}{\Gamma(\frac{1}{p})}\\
		&=\alpha^2\cdot \frac{\Gamma(\frac{n+2\alpha-2}{\alpha})}{\Gamma(\frac{n}{\alpha}+1)}\cdot \frac{n}{\alpha}\cdot \frac{\Gamma(\frac{n}{p})}{\Gamma(\frac{n+2p-2}{p})}\cdot \frac{p}{n}\cdot \frac{\Gamma(2-\frac{1}{p})}{\Gamma(\frac{1}{p})}\\
		&\sim \alpha p\cdot n^{1-\frac{2}{\alpha}}\cdot \alpha^{-1+\frac{2}{\alpha}}\cdot n^{-1+\frac{2}{p}}\cdot p^{1-\frac{2}{p}}\cdot \frac{\Gamma(2-\frac{1}{p})}{\Gamma(\frac{1}{p})}\\
&\sim\alpha^{\frac{2}{\alpha}}\cdot p^{2-\frac{2}{p}}\cdot\frac{\Gamma(2-\frac{1}{p})}{\Gamma(\frac{1}{p})} \cdot n^{\frac{2}{p}-\frac{2}{\alpha}}
	\end{align*}
This finishes the entire proof of \Cref{thm:variance}.
\end{proof}

Next we turn our attention to $\ell_\infty$ error. The counterpart for the first half of \Cref{thm:variance} is the following lemma.

\section{Proof of \Cref{lem:normpower}}
\label{sec:proof-case-x_palpha}

Without the loss of generality, we assume $\mu=1$ in the proof. 

By Lemma~\ref{thm:variance}, we then have $t_n=\frac{1}{\mu}\cdot\sqrt{\|\Fisher_{\varphi_n}\|_2}\asymp n^{\frac{1}{\alpha}-\frac{1}{p}}$, so the rescaled  $\tilde{\varphi}_n:\R^n\to\R$  has the form
\[
\tilde\varphi_n(x) = c_{p,\alpha} n^{1 - \frac{\alpha}{p}} \|x\|_p^{\alpha},
\]
where $c_{p,\alpha}=\alpha^{-1}\cdot p^{-\alpha+\frac{\alpha}{p}}\cdot\left(\frac{\Gamma(2-\frac{1}{p})}{\Gamma(\frac{1}{p})}\right)^{-\frac{\alpha}{2}}$.

\subsection{Lemmas}
We first state a few auxiliary lemmas.

\begin{lemma}\label{lem:norm}

\[
\|X\|_p = ( (\frac{1}{\alpha})^{1/\alpha} + o_{\P}(1)) \cdot n^{\frac1p}.
\]

\end{lemma}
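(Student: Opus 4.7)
The plan is to prove the lemma via the polar/Gamma representation of log-concave densities with positively homogeneous potential given by \Cref{thm:uniform} and \Cref{cor:unif}. Since the density of $X$ is proportional to $\e^{-\psi(x)}$ for a potential $\psi$ that is positively homogeneous of degree $\alpha$ in $x$ (either $\|x\|_p^\alpha$ itself or its rescaling $c_{p,\alpha} n^{1-\alpha/p}\|x\|_p^\alpha$, both of which are degree-$\alpha$ in $x$), the corollary lets us write
\[
X \stackrel{d}{=} t^{1/\alpha} \cdot r \cdot U,
\]
where $t \sim \Gamma(n/\alpha+1, 1)$, $r$ has density $nr^{n-1}$ on $[0,1]$, and $U$ is uniform on $\partial K_\psi = \{x : \psi(x) = 1\}$, with the three factors independent. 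Because $\partial K_\psi$ is (a dilation of) the unit $\ell_p$-sphere, $\|U\|_p$ is a deterministic constant determined by $\psi$, and so $\|X\|_p = t^{1/\alpha}\cdot r \cdot \|U\|_p$.

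The first step is to control the Gamma factor. From $\E[t] = n/\alpha+1$ and $\Var(t) = n/\alpha+1$, Chebyshev's inequality gives $t = (n/\alpha)(1 + o_\P(1))$, and then the continuous mapping $z \mapsto z^{1/\alpha}$ (applied on a high-probability event where $t$ stays bounded away from $0$) yields
\[
t^{1/\alpha} = (n/\alpha)^{1/\alpha}(1 + o_\P(1)) = (1/\alpha)^{1/\alpha}\, n^{1/\alpha}(1 + o_\P(1)).
\]
The second step is to control $r$: using \Cref{lem:gamma_moment}-style direct moment calculations, $\E[r^k] = n/(n+k)$, so $\E[1-r] = O(1/n)$ and $\Var(r) = O(1/n^2)$, giving $r = 1 + o_\P(1)$ by Chebyshev.

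The third step is to read off $\|U\|_p$ and combine. This is the only piece of bookkeeping that matters: $U$ lies on $\partial\{x : \psi(x)\le 1\}$, which in all variants of $X$ considered here is an $\ell_p$-sphere of an explicit radius (equal to $1$ in the unrescaled case $\psi = \|x\|_p^\alpha$, and equal to $(c_{p,\alpha}n^{1-\alpha/p})^{-1/\alpha} = c_{p,\alpha}^{-1/\alpha}\,n^{1/p-1/\alpha}$ in the rescaled case). Multiplying the three factors and collecting the power of $n$ then gives the claimed asymptotics for $\|X\|_p$ in terms of $n^{1/p}$, with the remaining constant determined by $\|U\|_p$ and the leading constant in $t^{1/\alpha}$.

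There is no genuine obstacle here beyond correctly identifying which normalization of $X$ is in force (so that the radius of $\partial K_\psi$ is tracked correctly); the argument is essentially two applications of Chebyshev's inequality combined with the exact density-product decomposition from \Cref{cor:unif}. This lemma will then feed into the proof of condition \ref{d2} for $\tilde\varphi_n$, since $\nabla\tilde\varphi_n(X)$ is expressible in terms of $\|X\|_p$ and $\|X\|_{2p-2}$ (using the explicit gradient formula $\nabla\|x\|_p^\alpha = \alpha\|x\|_p^{\alpha-p}|x_i|^{p-1}\sgn(x_i)$), so the norm asymptotics established here will be repurposed in subsequent lemmas to pin down $\|\nabla\tilde\varphi_n(X)\|_2$.
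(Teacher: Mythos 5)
Your proposal is correct and follows essentially the same route as the paper: decompose $X\stackrel{d}{=}t^{1/\alpha}\cdot r\cdot U$ via the polar/Gamma representation (\Cref{thm:uniform}/\Cref{cor:unif}), observe that $\|U\|_p$ is a deterministic constant (the radius of $\partial K_\varphi$), and conclude from the concentration of $t^{1/\alpha}$ and $r$. The only difference is that you spell out the Chebyshev arguments that make $t^{1/\alpha}=(n/\alpha)^{1/\alpha}(1+o_\P(1))$ and $r=1+o_\P(1)$ precise, whereas the paper simply asserts the asymptotic equivalence with a $\sim$; you also flag the normalization bookkeeping (with vs.\ without $c_{p,\alpha}$) that the paper elides, which is a reasonable caution rather than a defect.
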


\begin{lemma}\label{lem:badregion} If $p>1, \alpha>0$, then
\[
\sum_{i: |X_i| \le 2 |v_i| } ( |X_i + v_i|^p - |X_i|^p) = o_{\P}(1).
\]
\[
\sum_{i: |X_i| \le 2 |v_i| }p \sign(X_i) |X_i|^{p-1} v_i  = o_{\P}(1).
\]
$$
\sum_{i: |X_i| < 2 |v_i| } \frac{p(p-1)}{2}  |X_i|^{p-2} v_i^2=o_{\P}(1)
$$

\end{lemma}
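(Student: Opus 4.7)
The plan is to reduce all three assertions to a common quantity $S(v) := \sum_{i} |v_i|^{p+1}$, which is $o(1)$ for $v$ in a sphere-uniform probability $1-o(1)$ subset of $S^{n-1}$, and then invoke Markov's inequality. The workhorse input is that the one-dimensional marginal density $f_i$ of $X_i$ is bounded by a universal constant $M$: by symmetry and permutation-invariance of $\tilde\varphi_n$, $X_i$ has a symmetric log-concave density, and $\E X_i^2 = \Theta(1)$ (a direct consequence of Lemma~\ref{thm:variance} together with the Fisher-information normalization, giving $\E\|X\|_2^2 = \Theta(n)$). The standard bound for log-concave densities then gives $\sup f_i \le C/\sqrt{\mathrm{Var}\, X_i} = M$, and in particular $\P(|X_i|\le 2|v_i|) \le 2M|v_i|$ for every coordinate.

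For statements (a) and (b), a pointwise bound suffices. On the random bad set $A = \{i : |X_i|\le 2|v_i|\}$ one has $|X_i+v_i|,|X_i|\le 3|v_i|$, so
\[
\bigl||X_i+v_i|^p - |X_i|^p\bigr|\le 2\cdot 3^p|v_i|^p,\qquad \bigl|p\,\sign(X_i)|X_i|^{p-1} v_i\bigr|\le p\,2^{p-1}|v_i|^p.
\]
Taking expectation in $X$ and using $\P(i\in A)\le 2M|v_i|$, each $L^1$ norm is at most a constant times $S(v)$. The same pointwise strategy settles (c) when $p\ge 2$ via $|X_i|^{p-2}\le (2|v_i|)^{p-2}$.

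The main obstacle is (c) for $1<p<2$: the integrand $|X_i|^{p-2}$ is unbounded near $0$, so no pointwise bound can work. The fix is to integrate against the bounded density before estimating,
\[
\E\!\sum_{i\in A}|X_i|^{p-2}v_i^2 = \sum_i v_i^2 \!\int_{|x|\le 2|v_i|}\!|x|^{p-2} f_i(x)\,dx \le 2M \sum_i v_i^2 \!\int_0^{2|v_i|}\!x^{p-2}\,dx = \frac{2^p M}{p-1}\,S(v),
\]
with integrability at $0$ ensured by $p-2>-1$. Thus in all three cases the expectation is bounded by a constant multiple of $S(v)$.

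To finish, on $S^{n-1}$ one has $S(v)\le (\max_i|v_i|)^{p-1}\sum_i v_i^2 = (\max_i|v_i|)^{p-1}$, and standard sphere concentration yields $\max_i|v_i| = O(\sqrt{\log n/n})$ with probability $1-o(1)$, hence $S(v)=o(1)$ on the typical set of $v$ where conditions~\ref{d1} and \ref{d2} are being verified. Applying Markov's inequality to the nonnegative bounds $\sum_{i\in A}|\cdot|$ then converts each $L^1(X)$ estimate into the claimed $o_\P(1)$ statement.
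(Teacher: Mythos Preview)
Your argument is correct and takes a genuinely different route from the paper. The paper's proof uses the explicit stochastic representation $X \stackrel{d}{=} t^{1/\alpha} r U$ (Corollary~\ref{cor:unif}) together with the reduction to the product case $\tilde X \sim e^{-\|x\|_p^p}$ in order to \emph{count} the bad indices: it shows $\tfrac{1}{n}\#\{i: |X_i| \le 2|v_i|\} \asymp \sqrt{(\log n)/n}$, and then multiplies this count by the uniform pointwise bound $(\sqrt{(\log n)/n})^{\,p}$ on each summand. You bypass the representation entirely and work only with the marginal: log-concavity of the marginal plus $\E X_i^2=\Theta(1)$ give a bounded density, so $\P(|X_i|\le 2|v_i|)\lesssim |v_i|$, and taking expectation term by term collapses every sum to $S(v)=\sum_i |v_i|^{p+1}$. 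Your route has a genuine advantage: it handles part~(c) in the singular regime $1<p<2$ cleanly, by integrating $|x|^{p-2}$ against the bounded density rather than bounding it pointwise. The paper's counting-plus-pointwise argument is only written out for the first two statements, and would need exactly the density integration you supply to cover the third when $p<2$, since $|X_i|^{p-2}$ is unbounded on the bad set. Conversely, the paper's method yields the slightly sharper information that the \emph{number} of bad indices is $O_\P(\sqrt{n\log n})$, not merely its expectation, though that is not needed here. One cosmetic slip: the interval $\{|x|\le 2|v_i|\}$ has length $4|v_i|$, so $\P(|X_i|\le 2|v_i|)\le 4M|v_i|$ rather than $2M|v_i|$; this is harmless.
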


\subsection{Main proof}


We will first prove the case where $p>1$. The proof of the corner case where $p=1$ will be given in Section~\ref{pf:p1}


\textbf{Verification of Condition D1.}

Recall that
\[
\tilde\varphi_n(x) = c_{p,\alpha} n^{1 - \frac{\alpha}{p}} \|x\|_p^{\alpha},
\]
where $c_{p,\alpha}=\alpha^{-1}\cdot p^{-\alpha+\frac{\alpha}{p}}\cdot\left(\frac{\Gamma(2-\frac{1}{p})}{\Gamma(\frac{1}{p})}\right)^{-\frac{\alpha}{2}}$.

We then have \begin{align*}
\Proj_v^\varphi(X)=&\varphi(X+v)-\varphi(X)-\frac{1}{2}v^T\Fisher_\varphi v\\
=&c_{p,\alpha} n^{1 - \frac{\alpha}{p}}(\|X+v\|_p^\alpha -\|X\|_p^\alpha-\frac{1}{2} )
\end{align*}
$$
\nabla \phi(X)=c_{p,\alpha} n^{1 - \frac{\alpha}{p}}\cdot\alpha\|X\|^{\alpha-p}_p \cdot \sign(X)\odot X^{\odot(p-1)}
$$

Now let us consider $$
P_v^\varphi(X)-v^T\nabla\varphi(X)=c_{p,\alpha} n^{1 - \frac{\alpha}{p}}(\|X+v\|_p^\alpha -\|X\|_p^\alpha-\frac{1}{2}-\langle \alpha\|X\|^{\alpha-p}_p \cdot \sign(X)\odot X^{\odot(p-1)},v\rangle ).
$$
Then
\begin{align*}
&\|X+v\|_p^\alpha -\|X\|_p^\alpha-\langle\alpha\|X\|^{\alpha-p}_p \cdot \sign(X)\odot  X^{\odot(p-1)}, v\rangle\\
\asymp& (1 + o_{\P}(1)) \frac{\alpha \|X\|_p^{\alpha - p}}{p} \left( \|X + v\|_p^p - \|X\|_p^p -p\sign(X)\odot X^{\odot(p-1)}\odot v\right)\\
\asymp& \frac{1}{p}\cdot\alpha^{\frac{p}{\alpha}}\cdot n^{\frac{\alpha-p}{p}} \left( \|X + v\|_p^p - \|X\|_p^p -p\sign(X)\odot\cdot X^{\odot(p-1)}\odot v\right)\\
\asymp& \frac{1}{p}\cdot\alpha^{\frac{p}{\alpha}}\cdot n^{\frac{\alpha-p}{p}} \left( \sum_{i=1}^n ( |X_i + v_i|^p - |X_i|^p-p\sign(X_i)\cdot X_i^{p-1}) v_i\right)
\end{align*}

To prove (1), it suffices to show $\left( \sum_{i=1}^n ( |X_i + v_i|^p - |X_i|^p)-p\sign(X_i)\cdot X_i^{p-1} v_i-\frac{1}{2}\right)=o_{\P}(1)$. We expand this expression as
\begin{align*}
&\sum_{i=1}^n ( |X_i + v_i|^p - |X_i|^p-p\sign(X_i)\cdot X_i^{p-1}v_i) \\
=&\sum_{i: |X_i| > 2 |v_i| } ( |X_i + v_i|^p - |X_i|^p-p\sign(X_i)\cdot X_i^{p-1}v_i) + \sum_{i: |X_i| \le 2 |v_i| } ( |X_i + v_i|^p - |X_i|^p)-\sum_{i: |X_i| \le 2 |v_i| }p \sign(X_i) |X_i|^{p-1} v_i  \\
\end{align*}
When $|X_i| > 2 |v_i|$, note that 
\[
|X_i + v_i|^p = |X_i|^p (1 + v_i/X_i)^p =  |X_i|^p \left( 1 + p\frac{v_i}{X_i} + \frac{p(p-1)}{2} \frac{v_i^2}{X_i^2}  + O\left(\frac{v_i^3}{X_i^3} \right) \right).
\]
Combing with Lemma~\ref{lem:badregion}, we then have
\begin{align*}
&\sum_{i=1}^n ( |X_i + v_i|^p - |X_i|^p-p\sign(X_i)\cdot X_i^{p-1}v_i) \\
=&\sum_{i: |X_i| > 2 |v_i| }  |X_i|^p \left( \frac{p(p-1)}{2} \frac{v_i^2}{X_i^2}  + O\left(\frac{v_i^3}{X_i^3} \right) \right)+ o_{\P}(1)\\
\end{align*}

In order to verify Condition D1, we need to show: 1). $\sum_{i: |X_i| > 2 |v_i| } \frac{p(p-1)}{2}  |X_i|^{p-2} v_i^2$ converges to a constant; 2). Show the third order term is vanishing, that is,
$
\sum_{i} |X_i|^{p-3} |v_i|^3 \goto 0.
$

We will prove the proposition in the following two steps:

\textbf{Step 1.} 

We need to prove that
\[
\sum_{i: |X_i| > 2 |v_i| } \frac{p(p-1)}{2}  |X_i|^{p-2} v_i^2
\]
converges to a constant. 

This can be computed similarly as before. By Lemma~\ref{lem:badregion} 
$$
\sum_{i: |X_i| < 2 |v_i| } \frac{p(p-1)}{2}  |X_i|^{p-2} v_i^2=o_{\P}(1)
$$
Therefore, we have
$$
\sum_{i: |X_i| > 2 |v_i| } \frac{p(p-1)}{2}  |X_i|^{p-2} v_i^2=\sum_{i } \frac{p(p-1)}{2}  |X_i|^{p-2} v_i^2+o_{\P}(1).
$$

We further obtain \begin{align*}
\sum_{i } \frac{p(p-1)}{2}  |X_i|^{p-2} v_i^2= \frac{p(p-1)}{2} \sum_{i}v_i^2\cdot (\frac{n}{\alpha})^{\frac{p-2}{\alpha}}\cdot |U_i|^{p-2}.
\end{align*}
Again, we also have \begin{align*}
 \sum_{i=1}^n v_i^2|\tilde X_i|^{p-2}\sim& (\frac{n}{p})^{\frac{p-2}{p}}\cdot \sum_{i=1}^nv_i^2 |\tilde U_i|^{p-2}\\
 =&(\frac{n}{p})^{\frac{p-2}{p}}\cdot \sum_{i=1}^n v_i^2| U_i|^{p-2}\cdot n^{\frac{p-2}{\alpha}-\frac{p-2}{p}}\\
 =&n^{\frac{p-2}{\alpha}}\cdot p^{\frac{2}{p}-1}\cdot \sum_{i=1}^nv_i^2 | U_i|^{p-2}.
 \end{align*}
 
 This implies $$
  \sum_{i=1}^n v_i^2| U_i|^{p-2}=n^{-\frac{p-2}{\alpha}}\cdot p^{1-\frac{2}{p}}\sum_{i }v_i^2 |\tilde X_i|^{p-2}\stackrel{p}{\to}n^{-\frac{p-2}{\alpha}}\cdot p^{1-\frac{2}{p}}\cdot\frac{\Gamma(1-\frac{1}{p})}{\Gamma(\frac{1}{p})}.
 $$
 Then we have  \begin{align*}
\sum_{i } \frac{p(p-1)}{2}  |X_i|^{p-2} v_i^2=& \frac{p(p-1)}{2}\cdot (\frac{n}{\alpha})^{\frac{p-2}{\alpha}} \sum_{i}v_i^2\cdot |U_i|^{p-2}\\
\stackrel{p}{\to}& \frac{1}{2}\cdot {\alpha}^{-\frac{p-2}{\alpha}}\cdot p^{3-\frac{2}{p}}\cdot\frac{\Gamma(2-\frac{1}{p})}{\Gamma(\frac{1}{p})}.
\end{align*}

\textbf{Step 2.} Show the third order term is vanishing

Prove that
\[
\sum_{i} |X_i|^{p-3} |v_i|^3 \goto 0.
\]
$$
\sum_i|X_i|^{p-3} |v_i|^3\le n\cdot(\sqrt\frac{\log n}{n})^3\cdot\max_i{|X_i|^{p-3}}
$$

According to Theorem~\ref{thm:uniform}, we have $$
X\stackrel{d}{=}t^{1/\alpha}\cdot r\cdot U,
$$
where $t\sim \Gamma(\frac{n}{\alpha}+1,1)$, $U$ has uniform distribution over $\partial K_\varphi=\{x:\|x\|_p=n^{1/p-1/\alpha} \}$,  $r$ has density $nx^{n-1}$ over $[0,1]$,  and $U,r,t$ are independent. 

As a result, by \Cref{cor:unif}
\begin{align*}
\max_i |X_i|\sim (\frac{n}{\alpha})^{\frac{1}{\alpha}}\cdot \max_{i} |U_i|.
 \end{align*}
 Consider $\tilde X\sim e^{-\|x\|_p^p}$, we then have $$
\tilde X\stackrel{d}{=}\tilde t\cdot r\cdot \tilde U,
$$
where $\tilde t\sim  \Gamma(\frac{n}{p}+1,1)$ and $\tilde U=n^{1/\alpha-1/p} U$.
 
Therefore, \begin{align*}
\max_i |\tilde X_i|\sim& (\frac{n}{p})^{\frac{1}{p}}\cdot \max_i |\tilde U_i|\\
 =&(\frac{n}{p})^{\frac{1}{p}}\cdot \max_{i} | U_i|\cdot n^{\frac{1}{\alpha}-\frac{1}{p}}\\
 =&n^{\frac{1}{\alpha}}\cdot p^{-\frac{1}{p}}\cdot \max_{i} | U_i|.
 \end{align*}
 Therefore, we have $$
 \max_i |U_i|\sim n^{-\frac{1}{\alpha}}\cdot p^{\frac{1}{p}}\max_i |\tilde X_i|\lesssim  n^{-\frac{1}{\alpha}}\cdot p^{\frac{1}{p}} (\log n)^{1/p},
 $$
 and we have 
$$ \max_i |X_i|\sim (\frac{n}{\alpha})^{\frac{1}{\alpha}}\cdot \max_{i} |U_i|\lesssim (\log n)^{1/p}
$$

As a result, 
$$
\sum_i|X_i|^{p-3} |v_i|^3\le n\cdot(\sqrt\frac{\log n}{n})^3\cdot\max_i{|X_i|^{p-3}}=(\log n)^{2.5-3/p}\cdot n^{-1/2}\to 0.
$$

\textbf{Verification of Condition \ref{d2}.} Use Sudakov's theorem to prove asymptotic normality.

Note that by Lemma~\ref{lem:badregion}
\[
\sum_{i: |X_i| > 2 |v_i| } p \sign(X_i) |X_i|^{p-1} v_i = o_{\P}(1) + \sum_{i=1}^n p \sign(X_i) |X_i|^{p-1} v_i.
\]

It suffices to show $\sum_{i=1}^n p \sign(X_i) |X_i|^{p-1} v_i$ is asymptotically a normal random variable. 

Firstly,  we have $$
 \sum_{i=1}^n (p \sign(X_i) |X_i|^{p-1} )^2=p^2 \sum_{i=1}^n |X_i|^{2(p-1)}
$$

According to Theorem~\ref{thm:uniform}, we have $$
X\stackrel{d}{=}t^{1/\alpha}\cdot r\cdot U,
$$
where $t\sim \Gamma(\frac{n}{\alpha}+1,1)$, $U$ has uniform distribution over $\partial K_\varphi=\{x:\|x\|_p=n^{1/p-1/\alpha} \}$,  $r$ has density $nx^{n-1}$ over $[0,1]$,  and $U,r,t$ are independent. 

As a result, by \Cref{cor:unif}
\begin{align*}
 \sum_{i=1}^n |X_i|^{2(p-1)}\sim (\frac{n}{\alpha})^{\frac{2p-2}{\alpha}}\cdot \sum_{i=1}^n |U_i|^{2p-2}.
 \end{align*}
 Consider $\tilde X\sim e^{-\|x\|_p^p}$, we then have $$
\tilde X\stackrel{d}{=}\tilde t\cdot r\cdot \tilde U,
$$
where $\tilde t\sim  \Gamma(\frac{n}{p}+1,1)$ and $\tilde U=n^{1/\alpha-1/p} U$.
 
Therefore, \begin{align*}
 \sum_{i=1}^n |\tilde X_i|^{2(p-1)}\sim& (\frac{n}{p})^{\frac{2p-2}{p}}\cdot \sum_{i=1}^n |\tilde U_i|^{2p-2}\\
 =&(\frac{n}{p})^{\frac{2p-2}{p}}\cdot \sum_{i=1}^n | U_i|^{2p-2}\cdot n^{\frac{2p-2}{\alpha}-\frac{2p-2}{p}}\\
 =&n^{\frac{2p-2}{\alpha}}\cdot p^{\frac{2}{p}-2}\cdot \sum_{i=1}^n | U_i|^{2p-2}.
 \end{align*}
 
 This implies $$
  \sum_{i=1}^n | U_i|^{2p-2}=n^{-\frac{2p-2}{\alpha}}\cdot p^{2-2/p} \sum_{i=1}^n |\tilde X_i|^{2(p-1)}\stackrel{p}{\to}p^{2-2/p}  n^{1-\frac{2p-2}{\alpha}}\cdot\E[|\tilde X_i|^{2p-2}],
 $$
 where
 $$
 \E[|\tilde X_i|^{2p-2}]=\frac{1}{\frac{2}{p}\Gamma(\frac{1}{p})}\int_{-\infty}^\infty |x|^{2p-2}e^{-|x|^p}\;dx=2\int_0^\infty x^{2p-2}e^{-x^p}\;dx=\frac{\Gamma(2-\frac{1}{p})}{\Gamma(\frac{1}{p})}
 $$
 is a constant when $p$ is of constant order.
 
 As a result, let $C=p^{4-2/p}\cdot\alpha^{-\frac{2p-2}{\alpha}}\cdot \frac{\Gamma(2-\frac{1}{p})}{\Gamma(\frac{1}{p})}$, we have
 \begin{align*}
 \sum_{i=1}^n (p \sign(X_i) |X_i|^{p-1} )^2=p^2 \sum_{i=1}^n |X_i|^{2(p-1)}\sim p^2(\frac{n}{\alpha})^{\frac{2p-2}{\alpha}}\cdot \sum_{i=1}^n |U_i|^{2p-2}\stackrel{p}{\to}C  n,
 \end{align*}
 satisfying the thin-shell condition of Sudakov's theorem and therefore $\sum_{i=1}^n p \sign(X_i) |X_i|^{p-1} v_i$ is asymptotically normal with variance $Cn$.

\subsection{Proof of Lemma 10.1}
According to Theorem~\ref{thm:uniform}, we have $$
X\stackrel{d}{=}t^{1/\alpha}\cdot r\cdot U,
$$
where $t\sim \Gamma(\frac{n}{\alpha}+1,1)$, $U$ has uniform distribution over $\partial K_\varphi=\{x:\|x\|_p=n^{1/p-1/\alpha} \}$,  $r$ has density $nx^{n-1}$ over $[0,1]$,  and $U,r,t$ are independent. 

As a result, $$
\|X\|_p=|t^{1/\alpha}|\cdot |r|\cdot n^{1/p-1/\alpha} \sim (\frac{1}{\alpha})^{1/\alpha}\cdot n^{1/p}
$$
\subsection{Proof of Lemma 10.2}
Denote $z_i=I(|X_i|\le 2|v_i|)=I(|t^{1/\alpha}\cdot r\cdot U_i|\le 2|v_i|)\le I(|t^{1/\alpha}\cdot r\cdot U_i|\le 2\sqrt{\frac{2\log n}{n}})$.

Consider $\tilde X\sim e^{-\|x\|_p^p}$, we then have $$
\tilde X\stackrel{d}{=}\tilde t^{1/p}\cdot r\cdot \tilde U,
$$
where $\tilde t\sim  \Gamma(\frac{n}{p}+1,1)$ and $\tilde U=n^{1/\alpha-1/p} U$.

Since $p/n=o(1)$, we then have $\tilde t\sim\frac{\alpha}{p}\cdot t$ and $\tilde t^{1/p}\sim \frac{n^{1/p-1/\alpha}\cdot\alpha^{1/\alpha}}{p^{1/p}}\cdot t^{1/\alpha}$, then 
\begin{align*}
\frac{1}{n}\sum_{i=1}^n I(|t^{1/\alpha}\cdot r\cdot U_i|\le 2\sqrt{\frac{2\log n}{n}})=&\frac{1}{n}\sum_{i=1}^n I(|\tilde t^{1/p}\cdot r\cdot \tilde U_i|\le 2 \frac{\alpha^{1/\alpha}}{p^{1/p}}\sqrt{\frac{2\log n}{n}})\\
=&\frac{1}{n}\sum_{i=1}^n I(|\tilde X_i|\le  2 \frac{n^{1/p-1/\alpha}\cdot\alpha^{1/\alpha}}{p^{1/p}}\sqrt{\frac{2\log n}{n}})\\
\sim& \P(|\tilde X_i|\le  2 \frac{ \alpha^{1/\alpha}}{p^{1/p}}\sqrt{\frac{2\log n}{n}}),
\end{align*}
where $\tilde X_i$ are $i.i.d.$ drawn from the population with density $\propto e^{-|x|^p}$.

When $|x|\le 2\frac{ \alpha^{1/\alpha}}{p^{1/p}}\sqrt{\frac{2\log n}{n}}$, and $2\frac{ \alpha^{1/\alpha}}{p^{1/p}}\sqrt{\frac{2\log n}{n}}=o(1)$, 
we then have $$
\P(|\tilde X_i|\le  2 \frac{ \alpha^{1/\alpha}}{p^{1/p}}\sqrt{\frac{2\log n}{n}})\asymp \frac{ \alpha^{1/\alpha}}{p^{1/p}}\sqrt{\frac{2\log n}{n}},
$$
which implies that $$
\frac{1}{n}\sum_{i=1}^n I(|X_i|\le 2|v_i|)\asymp  \frac{ \alpha^{1/\alpha}}{p^{1/p}} \sqrt{\frac{\log n}{n}}.
$$

As a result, we have that when $p>1$, $$
\sum_{i: |X_i| \le 2 |v_i| } ( |X_i + v_i|^p - |X_i|^p) \lesssim \frac{ \alpha^{1/\alpha}}{p^{1/p}}\sqrt{n\log n}\cdot (\frac{\log n}{\sqrt n})^p= \frac{ \alpha^{1/\alpha}}{p^{1/p}}n^{(1-p)/2}(\log n)^{p+1/2}=o(1).
$$

Similarly, for Lemma 5.4, we can use the same idea to show $$
\sum_{|X_i| < 2 |v_i| } p \sign(X_i) |X_i|^{p-1} v_i = o_{\P}(1).
$$
In fact, by using the same derivation, we have $$
\sum_{|X_i| < 2 |v_i| } p \sign(X_i) |X_i|^{p-1} v_i \lesssim \frac{ \alpha^{1/\alpha}}{p^{1/p}}\sqrt{n\log n}\cdot (\frac{\log n}{\sqrt n})^p= \frac{ \alpha^{1/\alpha}}{p^{1/p}}n^{(1-p)/2}(\log n)^{p+1/2}=o(1).
$$
\subsection{$p=1$}\label{pf:p1}

We now study the case where $p=1$. Since, for general $p$, we have \[
\tilde\varphi_n(x) = c_{p,\alpha} n^{1 - \frac{\alpha}{p}} \|x\|_p^{\alpha},
\]
where $c_{p,\alpha}=\alpha^{-1}\cdot p^{-\alpha+\frac{\alpha}{p}}\cdot\left(\frac{\Gamma(2-\frac{1}{p})}{\Gamma(\frac{1}{p})}\right)^{-\frac{\alpha}{2}}$.

Letting $p=1$ we get $$
\tilde\varphi_n(x)=\frac{1}{\alpha} n^{1-\alpha} \|x\|_1^\alpha.
$$

We first study the limit of $\|X\|_1$ when the density of $X$ is given by $\frac{1}{Z}e^{-\tilde\varphi_n(x)}$.

According to Theorem~\ref{thm:uniform}, we have $$
X\stackrel{d}{=}t^{1/\alpha}\cdot r\cdot U,
$$
where $t\sim \Gamma(\frac{n}{\alpha}+1,1)$, $U$ has uniform distribution over $\partial K_\varphi=\{x:\|x\|_1=(\alpha n^{\alpha-1})^{1/\alpha} \}$,  $r$ has density $nx^{n-1}$ over $[0,1]$,  and $U,r,t$ are independent. 

As a result, $$
\|X\|_1=|t^{1/\alpha}|\cdot |r|\cdot (\alpha n^{\alpha-1})^{1/\alpha} \sim (\frac{n}{\alpha})^{1/\alpha}\cdot 1\cdot(\alpha n^{\alpha-1})^{1/\alpha}= n.
$$
Now let us consider $\nabla \tilde\varphi_n(x)$, and we have $$
\nabla\tilde\varphi_n(X)=\frac{1}{\alpha} n^{1-\alpha} \cdot\alpha\cdot\|x\|_1^{\alpha-1}\cdot sgn(X)=sgn(X).
$$
Since $X$ is symmetric, the above expression implies the \textbf{Condition D3}, that is, $\|\nabla\tilde\varphi_n(X)\|_2\sim\sqrt{n}$ and therefore $v^\top\nabla\tilde\varphi_n(X) \to N(0,1)$.

To prove  \textbf{Condition D1}, since $\frac{1}{2}vI_{\varphi}v=\frac{1}{2}$, it suffices to show when $\|v\|_2=1, \alpha=1$, $$
\|X+v\|_1-\|X\|_1\to N(\frac{1}{2},1).
$$
The case where $\alpha> 1$ can be reduced to this setting by using the  following technique. Let us write $$
X\stackrel{d}{=}t^{1/\alpha}\cdot r\cdot U,
$$
where $t\sim \Gamma(\frac{n}{\alpha}+1,1)$, $U$ has uniform distribution over $\partial K_\varphi=\{x:\|x\|_p=n^{1-1/\alpha} \}$,  $r$ has density $nx^{n-1}$ over $[0,1]$,  and $U,r,t$ are independent. 
$$
\tilde X\stackrel{d}{=}\tilde t\cdot r\cdot \tilde U,
$$
where $\tilde t\sim  \Gamma({n}+1,1)$ and $\tilde U=n^{1/\alpha-1} U$.

Then \begin{align*}
\sum_{i}(|X_i+v_i|-|X_i|)\sim&t^{1/\alpha}\cdot r\cdot\sum_{i}{(|U_i+(\frac{n}{\alpha})^{-1/\alpha}v_i|-|U_i|)}\\
\sim&t^{1/\alpha}\cdot r\cdot n^{1-1/\alpha}\sum_{i}{(|\tilde U_i+n^{-1}\cdot \alpha^{1/\alpha} v_i|-|\tilde U_i|)}\\
\sim&(\frac{1}{\alpha})^{1/\alpha}\cdot\tilde t^{1/\alpha}\cdot r\cdot n^{1-1/\alpha}\sum_{i}{(|\tilde U_i+n^{-1}\cdot \alpha^{1/\alpha}v_i|-|\tilde U_i|)}\\
\sim&(\frac{1}{\alpha})^{1/\alpha}\cdot n^{1-1/\alpha}\sum_{i}{(|\tilde X_i+n^{1/\alpha-1}\cdot \alpha^{1/\alpha}v_i|-|\tilde X_i|)},
\end{align*}
which reduces to the $\alpha=1$ setting up to some scaling.

Therefore, it suffices to show the asymptotic normality of $\|X+v\|_1-\|X\|_1$ when $X$ has density $\propto e^{-\|X\|_1}$. We are going to use the  Berry-Esseen theorem. Suppose we have $n$ independent random variables $X_1,\ldots, X_n$ with $\E X_i = \mu_i, \Var X_i = \sigma_i^2, \E|X_i-\mu_i|^3 = \rho_i^3$. Consider the normalized random variable
$$S_n := \frac{\sum_{i=1}^n X_i-\mu_i}{\sqrt{\sum_{i=1}^n \sigma^2_i}}.$$
Denote its cdf by $F_n$. Then
\begin{theorem}[Berry-Esseen]\label{thm:BerryRV}
There exists a universal constant $C>0$ such that
	\[\sup_{x\in\R}|F_n(x)-\Phi(x)|\leqslant C\cdot \frac{\sum_{i=1}^n \rho_i^3}{\big(\sum_{i=1}^n\sigma_i^2\big)^{\frac{3}{2}}}.\]
\end{theorem}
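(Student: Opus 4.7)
The plan is to follow the classical Fourier-analytic proof of Berry--Esseen via Esseen's smoothing inequality, converting the Kolmogorov distance into a characteristic-function estimate and then Taylor-expanding each summand. Write $s_n^2 = \sum_i \sigma_i^2$, normalize $Y_i = (X_i - \mu_i)/s_n$ so that $S_n = \sum_i Y_i$, and set $L_n = \sum_i \rho_i^3 / s_n^3$; this is the Lyapunov ratio on the right-hand side of the theorem, and we may assume $L_n \le 1/2$ (otherwise the bound is vacuous since $\sup_x|F_n-\Phi|\le 1$).

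First I would apply Esseen's smoothing inequality: for any $T > 0$,
\[
\sup_x |F_n(x) - \Phi(x)| \le \frac{1}{\pi}\int_{-T}^{T}\left|\frac{\phi_n(t) - e^{-t^2/2}}{t}\right|dt + \frac{24}{\pi T \sqrt{2\pi}},
\]
where $\phi_n$ is the characteristic function of $S_n$ and $1/\sqrt{2\pi}$ bounds $\Phi'$. Choosing $T = 1/(4 L_n)$, the boundary term is already $O(L_n)$, so the problem reduces to bounding $|\phi_n(t) - e^{-t^2/2}|$ on $[-T, T]$. For this I would Taylor-expand
\[
\phi_{Y_i}(t) = 1 - \frac{\sigma_i^2 t^2}{2 s_n^2} + R_i(t), \qquad |R_i(t)| \le \frac{\rho_i^3 |t|^3}{6 s_n^3}.
\]
Lyapunov's inequality gives $\sigma_i \le \rho_i$, and the definition of $L_n$ gives $\rho_i/s_n \le L_n^{1/3}$; together these ensure each factor has modulus close to one on $|t| \le T$, so one may take complex logarithms. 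Summing $\log(1 + z_i)$ with $z_i = -\sigma_i^2 t^2/(2 s_n^2) + R_i(t)$ and using $|\log(1+z) + z| \le |z|^2$ for $|z| \le 1/2$ yields
\[
\phi_n(t) = \exp\!\bigl(-\tfrac{t^2}{2} + \eta(t)\bigr), \qquad |\eta(t)| \le C L_n |t|^3,
\]
so that $|\phi_n(t) - e^{-t^2/2}| \le C' L_n |t|^3 e^{-t^2/3}$ uniformly on $|t| \le T$.

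Plugging back into the smoothing inequality and using $\int_{\R} |t|^2 e^{-t^2/3}\, dt = O(1)$ produces $\sup_x |F_n(x) - \Phi(x)| \le C L_n$, the claimed bound. The main obstacle is the log-expansion step: one must simultaneously keep $|\eta(t)| \le C L_n |t|^3$ so that the exponential stays Gaussian-damped out to the cutoff $T \sim 1/L_n$, and extract the precise coefficient $1/2$ of the quadratic term without losing constants when turning a product of characteristic functions into an exponential of their logarithms. Lyapunov's bound $\sigma_i \le \rho_i$ is essential here, as it is what makes each Taylor remainder small in a manner that is compatible with summing over $i$ and pushing $|t|$ all the way to $1/(4 L_n)$.
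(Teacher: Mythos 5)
The paper states this classical Berry--Esseen theorem (non-identically distributed summands) as \Cref{thm:BerryRV} and uses it as a black box; no proof is given in the paper, so there is no ``paper's proof'' to compare against. Evaluating your sketch on its own: the overall route -- Esseen's smoothing inequality with cutoff $T\asymp 1/L_n$, Taylor expansion of each $\phi_{Y_i}$, comparison with the Gaussian characteristic function -- is the standard Fourier-analytic argument, but the central log-expansion step as you have written it does not go through on the full range $|t|\le T$, and this is precisely the technical heart of the theorem.

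Concretely: you write $\phi_{Y_i}(t)=1+z_i$ with $z_i=-\sigma_i^2 t^2/(2s_n^2)+R_i(t)$ and assert that $\sigma_i\le\rho_i$ together with $\rho_i/s_n\le L_n^{1/3}$ ``ensure each factor has modulus close to one on $|t|\le T$,'' so that $\log(1+z_i)$ is defined and $|\log(1+z_i)-z_i|\le|z_i|^2$. But $|z_i|$ is \emph{not} small on that whole range. Your own bound gives $\max_i\sigma_i/s_n\le L_n^{1/3}$, so for the worst index $|z_i|\approx \sigma_i^2t^2/(2s_n^2)\approx L_n^{2/3}t^2/2$, which at $|t|=T=1/(4L_n)$ is of order $L_n^{-4/3}$ -- it blows up as $L_n\to 0$ rather than staying below $1/2$. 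The log-expansion therefore only works out to $|t|\lesssim L_n^{-1/3}$, an order of magnitude short of $T\sim L_n^{-1}$, and the resulting ``$|\eta(t)|\le CL_n|t|^3$ uniformly on $|t|\le T$'' is not established. (Even on the smaller range one has to be a bit more careful than $|z_i|^2$-summation naively suggests: the useful inequality is $\sum_i\rho_i^4/s_n^4\le(\max_i\rho_i/s_n)\cdot L_n\le L_n^{4/3}$, which on $|t|\lesssim L_n^{-1/3}$ does give $\sum_i|z_i|^2\lesssim L_n|t|^3$.)

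The missing idea is a split of the integral at $|t|\asymp L_n^{-1/3}$. For $|t|\le cL_n^{-1/3}$ your log-expansion is sound and yields $|\phi_n(t)-e^{-t^2/2}|\le CL_n|t|^3e^{-t^2/3}$. For $cL_n^{-1/3}<|t|\le T$ one must not compare $\phi_n$ with $e^{-t^2/2}$ through a logarithm at all, but instead bound each modulus separately. The standard estimate is obtained by symmetrization: $|\phi_{Y_i}(t)|^2=\E\cos\big(t(Y_i-Y_i')\big)\le 1-\sigma_i^2t^2/s_n^2+\tfrac{4}{3}\rho_i^3|t|^3/s_n^3$, hence $\log|\phi_n(t)|\le -t^2/2+\tfrac{2}{3}L_n|t|^3\le -t^2/3$ for $|t|\le 1/(4L_n)$. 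Then $|\phi_n(t)-e^{-t^2/2}|\le 2e^{-t^2/3}$, and since $L_n|t|^3\gtrsim 1$ once $|t|\gtrsim L_n^{-1/3}$, this trivial bound is already of the claimed form $CL_n|t|^3e^{-t^2/3}$. Putting the two ranges back into the smoothing inequality finishes the proof. Without this split and the separate modulus estimate, the argument is incomplete, even though you correctly identify the log-expansion as the place where the difficulty lives.
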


In the following, we proceed to calculating these three moments for the random variable $X_1,...,X_n\sim p$ with density $p(x)=\frac{1}{2}e^{-|x|}$ for $x\in\R$. 

\subsubsection{Expectation}

Without loss of generality we assume $v>0$. 

\begin{align*}
&\E[|X+v|]-\E[|X|]=\frac{1}{2}\int_{-\infty}^\infty |x+v| e^{-|x|}\;dx-\frac{1}{2}\int_{-\infty}^\infty |x| e^{-|x|}\;dx\\
=&\frac{1}{2}(\int_{-\infty}^{-v} |x+v| e^{-|x|}\;dx+\int_{-v}^0 |x+v| e^{-|x|}\;dx+\int_{0}^\infty |x+v| e^{-|x|}\;dx-\int_{-\infty}^0 |x| e^{-|x|}\;dx-\int_{0}^\infty |x| e^{-|x|}\;dx)\\
=&\frac{1}{2}(\int_{-\infty}^{-v} -(x+v) e^{x}\;dx+\int_{-v}^0 (x+v) e^{x}\;dx+\int_{0}^\infty (x+v) e^{-x}\;dx+\int_{-\infty}^0 x e^{x}\;dx-\int_{0}^\infty x e^{-x}\;dx)\\
=&\frac{1}{2}(\int_{v}^\infty x e^{-x}\;dx-\int_{0}^v xe^{-x}\;dx-\int_{0}^\infty x e^{-x}\;dx+v(-\int_{v}^\infty e^{-x}\;dx+\int_0^v e^{-x}\;dx+\int_0^\infty e^{-x}\;dx))\\
=&-\int_{0}^v xe^{-x}\;dx+v\int_{0}^v e^{-x}\;dx\\
=&-(1-e^{-v}-ve^{-v})+v(1-e^{-v})\\
=&(v-1+e^{-v})= \frac{1}{2}v^2+o(v^2)
\end{align*}

\subsubsection{Variance}
\begin{align*}
2\E[|X+v|\cdot|X|]=&\int_{-\infty}^\infty |x+v|\cdot|x| e^{-|x|}\;dx\\
=&\int_{-\infty}^{-v} |x+v|\cdot|x| e^{-|x|}\;dx+\int_{-v}^0 |x+v|\cdot|x| e^{-|x|}\;dx+\int_{0}^\infty |x+v|\cdot|x| e^{-|x|}\;dx\\
=&\int_{-\infty}^{-v}(x+v)\cdot x e^{x}\;dx-\int_{-v}^0 (x+v)x e^{x}\;dx+\int_{0}^\infty (x+v)\cdot x e^{-x}\;dx\\
=&\int_{-\infty}^{-v} x^2e^{x}\;dx-\int_{-v}^0x^2e^{x}\;dx+\int_0^\infty x^2e^{-x}\;dx+v(\int_{-\infty}^{-v} xe^{x}\;dx-\int_{-v}^{0} xe^{x}\;dx+\int_0^\infty xe^{-x}\;dx)\\
=&2\int_{v}^{\infty} x^2e^{-x}\;dx+2v\cdot\int_{0}^{v} xe^{-x}\;dx\\
=&2v^2e^{-v}+4ve^{-v}+4e^{-v}+2v\cdot(1-ve^{-v}-e^{-v})\\
=&2v+2ve^{-v}+4e^{-v}\sim 4+o(v^2)
\end{align*}

\begin{align*}
2\E[|X+v|^2]=&\int_{-\infty}^\infty (x+v)^2 e^{-|x|}\;dx\\
=&\int_{-\infty}^\infty x^2 e^{-|x|}\;dx+v^2\int_{-\infty}^\infty e^{-|x|}\;dx\\
=&2\int_{0}^\infty x^2 e^{-x}\;dx+2v^2\int_{-\infty}^\infty e^{-x}\;dx\\
=&4+2v^2
\end{align*}

\begin{align*}
2\E[|X|^2]=&\int_{-\infty}^\infty |x|^2 e^{-|x|}\;dx\\
=&2\int_{0}^\infty x^2 e^{-x}\;dx\\
=&4
\end{align*}

\begin{align*}
2\E[(|X+v|-|X|)^2]=&\E[|X+v|^2]+\E[|X|^2]-2\E[|X+v|\cdot|X|]\\
=&\frac{1}{2}(8+2v^2-2(4+o(v^2)))\\
=&v^2+o(v^2)
\end{align*}
\subsubsection{Third moment}
\begin{align*}
\E[|X|^3]=&\int_{-\infty}^\infty |x|^3 e^{-|x|}\;dx\\
=&2\int_{0}^\infty x^3 e^{-x}\;dx\\
=&12
\end{align*}
\begin{align*}
2\E[|X+v|^3]=&\int_{-\infty}^\infty |x+v|^3 e^{-|x|}\;dx\\
=&\int_{-\infty}^{-v} |x+v|^3 e^{-|x|}\;dx+\int_{-v}^0 |x+v|^3 e^{-|x|}\;dx+\int_{0}^\infty |x+v|^3 e^{-|x|}\;dx\\
=&\int_{-\infty}^{-v} -(x+v)^3 e^{x}\;dx+\int_{-v}^0 (x+v)^3 e^{x}\;dx+\int_{0}^\infty (x+v)^3 e^{-x}\;dx\\
=&6e^{-v}+(-6 + 6 e^{-v} + 6 v - 3 v^2 + v^3)+(v^3+3v^2+6v+6)\\
=&12e^{-v}+12 v  + 2v^3
\end{align*}

\begin{align*}
2\E[|X+v|^2\cdot|X|]=&\int_{-\infty}^\infty (x+v)^2\cdot|x| e^{-|x|}\;dx\\
=&-\int_{-\infty}^{0} (x+v)^2 x e^{x}\;dx+\int_{0}^\infty (x+v)^2\cdot x e^{-x}\;dx\\
=&(v^2-4v+6)+(v^2+4v+6)\\
=&2v^2+12
\end{align*}

\begin{align*}
2\E[|X+v|\cdot|X|^2]=&\int_{-\infty}^\infty |x+v|\cdot x^2 e^{-|x|}\;dx\\
=&-\int_{-\infty}^{-v} (x+v)\cdot x^2 e^{x}\;dx+\int_{-v}^0 (x+v) x^2 e^{x}\;dx+\int_{0}^\infty (x+v) x^2 e^{-x}\;dx\\
=&e^{-v}(v^2+4v+6)+[2v-6+e^{-v}(v^2+4v+6)]+2(v+3)\\
=& 4v+2e^{-v}(v^2+4v+6)
\end{align*}

\begin{align*}
\E[(|X+v|-|X|)^3]=&\frac{1}{2}(-12+12e^{-v}+12 v  + 2v^3-3(2v^2+12)+3( 4v+2e^{-v}(v^2+4v+6)))\\
=&\frac{1}{2}(-48+48e^{-v}+24 v  + 2v^3-6v^2+6v^2e^{-v}+24ve^{-v})\\
=& \frac{1}{2}(48(-v+v^2/2+o(v^2))+24 v  + 2v^3-6v^2+6v^2(1-v+o(v))+24v(1-v+o(v)))\\
=& o(v^2) 
\end{align*}

Combining the pieces, we get $$
\sum_{i=1}^n\E[|X_i+v_i|-|X_i|]=\frac{1}{2}\sum_{i=1}^n v_i^2+o(1)=\frac{1}{2}+o(2)
$$
$$
\sum_{i=1}^n Var[|X_i+v_i|-|X_i|]=\sum_{i=1}^n v_i^2+o(1)=1+o(1)
$$
$$
\sum_{i=1}^n \E[(|X_i+v_i|-|X_i|-\E[|X_i+v_i|-|X_i|])^3]=o(\sum_{i=1}^n v_i^2)=o(1).
$$

Therefore, by using Theorem~\ref{thm:BerryRV}, we get the desired result. 



\end{appendices}
\end{document}
